\documentclass[runningheads]{llncs}
\usepackage[T1]{fontenc}
\usepackage{graphicx}
\usepackage{listings}
\usepackage{amsmath}
\usepackage{amsfonts}
\usepackage{amssymb}
\usepackage{listings}
\usepackage{multicol}
\usepackage{todonotes}
\usepackage{subcaption}

\usepackage{tikz}
\usetikzlibrary{shapes,decorations,shadows,arrows,calc}
\makeatletter

\tikzstyle{arg}=[draw,circle,fill=gray!15,inner sep=1pt,minimum size=.5cm]
\tikzstyle{argd}=[draw,circle,fill=gray!15,inner sep=1pt,minimum size=.5cm,dashed]

\tikzstyle{argTD}=[draw, thick, circle, fill=gray!15,inner sep=0pt,minimum size=0.6cm,font=\small]
\tikzstyle{argR}=[draw, thick, circle, fill=gray!15,inner sep=0pt,minimum size=0.45cm,font=\small]
\tikzstyle{scc}=[draw, thick, rectangle,align=center, fill=gray!15,inner sep=0pt,minimum size=0.8cm,font=\small, rounded corners=0ex,]
\tikzstyle{argTDX}=[draw, dotted,thick, circle, inner sep=0pt,minimum size=0.6cm,font=\small]
\tikzstyle{sccX}=[draw,dotted, thick, rectangle,align=center, inner sep=0pt,minimum size=0.8cm,font=\small, rounded corners=0ex,]
\tikzstyle{argsmall}=[draw, thick, circle, fill=gray!15,inner sep=0pt,minimum size=0.4cm]
\tikzstyle{argsmallX}=[draw, thick, circle, inner sep=0pt,minimum size=0.3cm,dotted]
\tikzstyle{bag}=[draw,rectangle, rounded corners=1ex,
align=center, inner sep=1ex]

\tikzstyle{atts}=[draw,thick, inner sep=5pt, rounded corners=3pt]

\tikzstyle{nullarg}=[inner sep=0pt,outer sep=0pt,minimum size=0cm]
\tikzstyle{nullattack}=[draw, thick, |->]

\newcommand{\naf}{\ensuremath{\mathtt{not}\,}}

\newcommand{\cf}{\textit{cf}}

\newcommand{\stb}{\textit{stb}}

	\makeatletter
	\newcommand{\squplus}{\mathbin{\mathpalette\squplus@{}}}
	
	\newcommand{\squplus@}[2]{%
\begingroup
\sbox0{$#1\sqcup$}%
\setlength{\unitlength}{1ex}%
\raisebox{0.29\ht0}{%
	\rlap{%
		\kern0.18\wd0%
		\scalebox{0.48}{$#1\boldsymbol{+}$}%
	}%
}%
\sqcup
\endgroup
}
\makeatother

\usepackage{thm-restate}
\usepackage{etoolbox}

\renewcommand\thmcontinues[1]{ctd}

\begin{document}
\title{On Strong Equivalence Notions in Logic Programming and Abstract Argumentation}
\titlerunning{On Strong Equivalence in Logic Programming and Abstract Argumentation}

\author{Giovanni Buraglio\orcidID{0009-0004-9592-4739} \and
Wolfgang Dvo\v{r}\'ak\orcidID{0000-0002-2269-8193} \and
Stefan Woltran\orcidID{0000-0003-1594-8972}}
\authorrunning{G. Buraglio et al.}
\institute{TU Wien, Austria \\
\email{\{giovanni.buraglio, wolfgang.dvorak, stefan.woltran\}@tuwien.ac.at}}

\maketitle              
\begin{abstract}
Strong equivalence between knowledge bases ensures the possibility of replacing one with the other without affecting reasoning outcomes, in any given context. This makes it a crucial property in nonmonotonic formalisms. In particular, the fields of logic programming and abstract argumentation provide primary examples in which this property has been subject to vast investigations. However, while (classes of) logic programs and abstract argumentation frameworks are known to be semantically equivalent in static settings, this alignment breaks in dynamic contexts due to differing notions of update. As a result, strong equivalence does not always carry over from one formalism to the other.
In this paper, we carefully investigate this discrepancy and introduce a new notion of strong equivalence for logic programs. Our approach preserves strong equivalence under translation between certain classes of logic programs and both Dung-style and claim-augmented argumentation frameworks, thus restoring compatibility across these formalisms.

 \keywords{Strong Equivalence  \and Abstract Argumentation \and Logic Programming.}
\end{abstract}

\section{Introduction}
In the field of Knowledge Representation and Reasoning, the concept of equivalence between knowledge bases has been the subject of extensive studies. A primary motivation behind this research is the potential to exploit the equivalence between two knowledge bases to achieve a compact representation of the same information. Furthermore, the possibility to substitute a specific component of a given knowledge base $\mathcal{K}$ with an equivalent yet simplified alternative has been shown to ease the computational cost of reasoning over $\mathcal{K}$.
While this advantageous behavior can be taken for granted in monotonic formalisms (e.g. classical logic), it is usually not the case in non-monotonic ones. For this, the notion of strong equivalence has been introduced in a dynamic environment, to capture the idea of equivalence under any possible update \cite{Cabalar02,EiterFW07,LifschitzPV01,OikarinenW11,Truszczynski06,Turner01}.

In this paper, we consider two families of non-monotonic formalisms, namely logic programming and abstract argumentation. Logic programming is a declarative programming paradigm where a reasoning problem is specified by means of a so-called logic program (LP). This consists of a set of inference rules made of atoms, possibly preceded by a negation-as-failure operator. Negative literals are assumed to be true as long as their corresponding positive atom cannot be proven to hold. Abstract argumentation is a sub-field of symbolic Artificial Intelligence~\cite{arguHandbookv1} that offers formal approaches to represent and reason over situations where conflicting information is present. An argumentative scenario is specified by means of an abstract argumentation framework, which is a directed graph where nodes represent arguments and edges their relation. The starting point in the field is the seminal work of Dung~\cite{Dung95}, where argumentation frameworks, which we call Dung-style AFs, contain only a relation of attack. 
Several semantics have been proposed for both of these formalisms, with the common purpose of extracting solutions for the given program or argumentation framework. These are respectively sets of atoms that satisfy each rule in the program (called \textit{answer-sets}), and sets of arguments that are able to defend themselves against possible counter-arguments (called \textit{extensions}). In this work, we focus on the stable model semantics for LPs \cite{GelfondL88} and the stable semantics for AFs~\cite{Dung95}.

Previous work has introduced a one-to-one mapping between Dung-style AFs and (a resticted class of) LPs under the stable model semantics~\cite{CaminadaSAD15a,Dung95}. Any problem can be either specified via a program $P$ or an abstract argumentation framework $F$ in such a way that their solutions coincide. 
Later, this mapping has been extended to a wider class of LPs and generalizations of Dung-style AFs, such as claim-augmented AFs~\cite{DvorakW20}. 
Thus, equivalent LPs correspond to equivalent AFs. 

However, such a semantic correspondence does not necessarily carry over to dynamic contexts, where strong equivalence is required. It is possible for two logic programs that are strongly equivalent to induce argumentation frameworks that are not, due to the incongruous notions of update (or expansion) in the two realms. To acknowledge such mismatch, consider the following example:
\begin{example}
    Two suspects, X and Y, are under investigation for a murder proven by forensic analysis to have been committed by a single person. 
	While questioning possible witnesses, the detective learns that X and Y have been seen far away from the crime scene at the time of the murder, and the case remains unsolved. 
    We model the detective's knowledge via an Dung-style AF and an LP as follows:
	
	\vspace{1mm}
    \hspace{-20pt}
	\begin{minipage}{.37\textwidth}
		\begin{tikzpicture}
				\path
			(-1.7,0) node(F){$F$}
			(-1,0) node[arg](x){$x$}
			(0.5,0) node[arg](y){$y$}
			(2,0) node[arg] (a) {$a$}
			(0,-.45) node (v) {}
			;
			
			\path[->,>=stealth,thick]
			(x) edge[bend left=15] (y)
			(y) edge[bend left=15] (x)
			(a) edge[bend right=25] (x)
			(a) edge (y)
			;		
		\end{tikzpicture}
	\end{minipage}
	\begin{minipage}{.6\textwidth}
	 	\begin{tabular}{cl}
	 		$P=$& $ \{x\gets \naf y, \naf a.,$ 
	 		  $ y\gets \naf x, \naf a.,$ 
	 		  $ a \gets .\}$ 
	 	\end{tabular}
	  \end{minipage}	 

	where $a$ means ``has an alibi" and $x$ (resp. $y$) means ``X (resp. Y) is the murder". 
	Later, another witness testifies that Z was drunk ($d$) on the same night, falsifying the suspects' alibi. 
    This information update is captured via:
	
	\vspace{2mm}
	\begin{minipage}{.45\textwidth}
		\begin{tikzpicture}
			\path
			(-1,0) node (F') {$F'$}
			(0,0) node[arg](a){$a$}
			(1.5,0) node[arg](d){$d$}
			;
			
			\path[->,>=stealth,thick]
			(d)edge(a)
			;
		\end{tikzpicture}
	\end{minipage}
	\begin{minipage}{.55\textwidth}
$P'=\{a\gets \naf d., \; d \gets .\}$
	\end{minipage}	 

\vspace{2mm}

	In accordance with equivalence results between logic programs and argumentation frameworks, the two ways of modeling our knowledge base are consistent with each other when taken individually. However, this does not happen when they are combined. Incorporating the second pair of knowledge bases ($F'$ and $P'$) into the first one (resp. $F$ and $P$) yields different result: in the case of $F$ and $F'$, their union returns the expected result in the form of two possible extensions $\{d,x\}$ and $\{d,y\}$. Indeed, since the witness Z was drunk, this is compatible with either X or Y being the murder. On the other hand, the union of the two logic programs $P$ and $P'$ yields an unexpected prediction: the alibi `$a\gets $' is not overwritten by Z being drunk `$d\gets$', leaving $\{a,d\}$ as the only possible solution.
\end{example}

Updating an existing program by simply adding rules may yield unexpected outcomes. Facts (e.g. `$a\gets $' in $P$) cannot be overwritten by incoming information, while their corresponding arguments (e.g. $a\in F$) can. This behavior of LPs is fundamentally in contrast with the way in which non-monotonicity is encoded in abstract argumentation: an argument can always be attacked by new ones.  

In this work, we carefully analyze the relationship between logic programming and abstract argumentation, with a particular interest in dynamic contexts. As a first step, we recall and extend equivalence results for classes of logic programs and abstract argumentation frameworks. Subsequently, motivated by the mismatch above, we introduce a novel notion of update for a restricted class of LPs, called Rule Refinement, that resolves the issue by mimicking precisely the existing notion for Dung-style AFs. We further extend Rule Refinement to the wider class of atomic logic programs (where no positive literal occurs in the body), which is shown to capture strong equivalence in claim-augmented AFs. 

\section{Preliminaries}
\paragraph{Logic Programming}
We consider normal logic programs with negation-as-failure $\mathtt{not}$, consisting of rules $r$ of the form
‘$n : c \gets a_1, \dots , a_k, \mathtt{not}\, b_1, \dots , \mathtt{not}\, b_m.$'
read as ‘$c$ if $a_1$ and $\dots$ and $a_k$ and $\mathtt{not}\, b_1$ and $\dots$ and $\mathtt{not}\, b_m$’.
Here, $n\in \mathbb{N}$ is the \textit{identifier} of the rule $r$ in the program; we refer to it with $id(r)=n$. Further, $a_i$, $b_i$ and $c$ are ordinary atoms; $\mathcal{L}(P)$ is the set of all atoms occurring in $P$. The atoms $a_i$ and $b_j$ are called positive and negative atoms, respectively denoted by $pos(r)=\{a_1, \dots, a_k\}$ and $neg(r)=\{b_1,\dots,b_m\}$. We use $head(r)=c$ and $body(r) = \{a_1,\dots , a_k, \mathtt{not}\, b_1, \dots, \mathtt{not}\, b_m\}$ for the head and body of $r$. With a slight abuse of notation, we extend $pos$, $neg$, $head$, $body$ to sets of rules, e.g.\ $head(R)=\{head(r)\mid r\in R\}$. For a set of atoms $S$, we use $\mathtt{not}\, S=\{\mathtt{not}\, b \mid b \in S\}$. 
A rule $r$ is called: \textit{constraint} if $ head(r)=\emptyset$; \textit{fact} if $k=m=0$; \textit{atomic} if $k=0$.
A program $P$ is \textit{atomic} if all the rules in $P$ are atomic. 
The stable model semantics of normal LPs is defined in terms of answer sets~\cite{GelfondL88}. 

\begin{definition}
	Let $P$ be a normal logic program and $S\subseteq \mathcal{L}(P)$ a set of atoms. The \emph{reduct} of $P$ w.r.t. $S$ is the negation-free program $P^S$ obtained from $P$ by: (i) deleting all rules $r \in P$ with $\naf b_j$ in the body for some $b_j \in S$, (ii)  deleting all negated atoms from the remaining rules.
	$S$ is an answer set of $P$ iff $S$ is the minimal model of $P^S$. The collection of answer sets of a program $P$ is $AS(P)$.
\end{definition} 

In this paper, we focus on the class of atomic LPs. Whenever we write program we mean an atomic one. We further restrict our attention to sub-classes of atomic programs: we call \emph{strict} (resp.\ \emph{h-unique}) a logic program $P$ where each atom $p\in \mathcal{L}(P)$ occurs at least (resp. at most) once in the head of a rule.

\paragraph{Abstract Argumentation}

We  fix  a infinite  background  set of arguments $\mathcal{U}$. A strict argumentation framework (strict AF\footnote{We use this terminology for standard Dung AFs, to emphasize the fact that the attack relation is restricted to arguments in $A$. In Section~\ref{sec: translation}, we will relax this requirement.}) is a directed graph $F=(A,R)$ where $A\subseteq \mathcal{U}$ is a finite set of arguments and $R\subseteq A\times A$ an attack relation between them. The union of any two strict AFs $F= (A,R)$ and $G= (A',R')$ is defined as $F\cup G=(A\cup A',R\cup R')$. For  two  arguments $a,b\in A$ we say that $a$ attacks $b$ if $(a,b)\in R$. Moreover, a set of arguments $E\subseteq A$ attacks $b$ if $(a,b)\in R$ for some $a\in E$ and $a$ attacks $E$ if $(a,b)\in R$ for some $b\in E$.
We use $E^+_R=\{a\in A\mid E \text{ attacks } a\}$ and $E^-_R=\{a\in A\mid a \text{ attacks } E\}$ to denote the set of arguments respectively attacked by and attacking $E$. Further, $E^\oplus_R=E \cup E^+_R$ is called the \emph{range} of $E$ w.r.t. $R$. For a singleton $\{a\}$ we use $a^+_R$, $a^-_R$ and $a^{\oplus}_R$. 

$E$ is conflict-free in $F$ ($E\in \cf(F)$) iff for no $a,b\in E$, $(a,b)\in R$. 
Several semantics for abstract argumentation have been introduced~\cite{Dung95}. These are functions $\sigma: F \mapsto \sigma(F)\subseteq 2^A$, that return a set of subsets of $A$ (called $\sigma$-extensions) for any AF $F= (A,R)$. Here we consider only \textit{stable} semantics ($\stb$, for short): 
\begin{definition}\label{def: af stb}
	Let $F=(A,R)$ be an AF and $E \in \cf(F)$ a conflict-free set in $F$. We say that $E$ is a \textit{stable} extension of $F$ $(E \in \stb(F))$ iff $E^\oplus_R=A$.
\end{definition}

In recent years, more expressive abstract formalisms have been proposed that extend Dung-style AFs. Among these, \textit{claim-augmented argumentation frameworks} (or CAFs) add claims to the abstract representation \cite{DvorakRW20,DvorakW20}. 

\begin{definition}
	Let $\mathcal{C}$ be a set (or universe) of claims. A claim-augmented argumentation framework (CAF) is a triple $\mathcal{F}=
	(A,R,\gamma)$ such that $F=(A,R)$ is a strict AF and $\gamma: A\mapsto \mathcal{C}$ is a function that assigns claims to arguments. 
\end{definition} 
For a set of arguments $E$, we use $\gamma(E)=\{\gamma(a)\mid a\in E\}$ to denote the associated set of claims. 
By means of CAFs, it is possible to represent situations where different arguments have the same claim. 
In this paper we focus \textit{well-formed CAFs}, for which arguments with same claim attack the same arguments. A CAF $\mathcal{F}=(A,R,\gamma)$ is well-formed iff for any two arguments $a$, $b$, $\gamma(a)=\gamma(b)$ implies $a^+_R=b^+_R$. 
Stable semantics for a well-formed CAF $\mathcal{F}=(F,\gamma)$ can be defined as taking the claim-sets $\gamma(E)$ inherited from every stable extension $E$.
\begin{definition}
	   For a well-formed CAF $\mathcal{F}=(F,\gamma)$, a set $S$ is a stable (claim-)extension of $\mathcal{F}$ $(S \in \stb_i(\mathcal{F}))$ iff there is an $E\in \stb(F)$ such that $S=\gamma(E)$.
\end{definition} 

\paragraph{Strong Equivalence}
Strong equivalence notions for logic programs and argumentation frameworks have been presented to capture equivalence under any possible updates. 
In the case of LPs, updates consist of expanding the original program with a set of rules. 
Two LPs $P$ and $Q$ are said to be equivalent, denoted $P \equiv Q$ whenever $AS(P) = AS(Q)$. Further, $P$ and $Q$ are strongly equivalent, denoted $P \equiv_s Q$, iff, for any LP $R$, the programs $P \cup R$ and $Q \cup R$ are equivalent, i.e. $P \cup R \equiv Q \cup R$ \cite{LifschitzPV01}. 
Here, we consider specific classes of LPs and adjust the relative notion of strong equivalence accordingly. 
\begin{definition}
	Given a class $\Pi$ of LPs, $P$ and $Q$ in $\Pi$, we say that $P$ and $Q$ are strongly equivalent in $\Pi$, written $P \equiv^\Pi_s Q$, iff for every program $R$: either (1) $P\cup R\notin \Pi$ and $Q\cup R \notin \Pi$ or (2) $P\cup R\equiv Q\cup R$. 
\end{definition}

More recently, strong equivalence has been studied for other non-monotonic formalisms, in the field of abstract argumentation \cite{OikarinenW11}. Two strict AFs $F$ and $G$ are equivalent under stable semantics, if $\stb(F)=\stb(G)$. Strong equivalence, denoted by $F \equiv_s G$, is satisfied whenever $\stb(F\cup H)=\stb(G\cup H)$ for any strict AF $H$. 
In this setting, strong equivalence admits a syntactic characterization in terms of so-called (semantics-dependent) kernels, obtained by syntactical modifications of the given frameworks. For stable semantics, such modification consists in the removal of out-going attacks from self-attacking arguments. 

\begin{definition}[\cite{OikarinenW11}]
	Let $F = (A,R)$ be an strict AF. The stable kernel of $F$ is $F^{SK}=(A,R^{SK})$ with $R^{SK}=R \setminus \{(a,b) \mid a \neq b, (a,a) \in R\}$.
\end{definition}
A characterization of strong equivalence can be obtained via kernels~\cite{OikarinenW11}.
\begin{proposition}[\cite{OikarinenW11}]\label{pro:AF_SE}
	For any strict AFs $F$ and $G$, $F \equiv_s G$ iff $F^{SK}=G^{SK}$.
\end{proposition}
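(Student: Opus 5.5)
The proof has two directions, and I would first establish how kernels interact with stable semantics.

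\emph{Kernels preserve stable semantics and commute with union.} I will show that $\stb(F)=\stb(F^{SK})$ for every strict AF $F$. A stable extension $E$ of $F$ never contains a self-attacking argument $a$, since $E$ is conflict-free. Hence the attacks deleted in $F^{SK}$, namely those $(a,b)$ with $a\neq b$ and $(a,a)\in R$, never originate from $E$. Therefore $E^+_R=E^+_{R^{SK}}$, and the notion of conflict-freeness does not change either. The reason is that the self-loops $(a,a)$ are kept, so an attack between members of $E$ with a self-attacking source cannot occur; and all other attacks are untouched.

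Next I will check that $(F\cup H)^{SK}=(F^{SK}\cup H^{SK})^{SK}$. Whether $(a,a)$ belongs to the union depends only on the self-loops, and these are preserved by taking kernels. For ($\Leftarrow$), if $F^{SK}=G^{SK}$ then for every $H$,
$\stb(F\cup H)=\stb((F\cup H)^{SK})=\stb((G\cup H)^{SK})=\stb(G\cup H)$.

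\emph{Only-if direction, by contraposition.} Suppose $F^{SK}\neq G^{SK}$. I will build an $H$ that separates them, splitting into cases.

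Case (a): the argument sets differ, say $a\in A_F\setminus A_G$. Let $H$ consist of all arguments of both frameworks with no attacks, plus a fresh argument $z$ that attacks every argument other than $a$. Then $\{a,z\}$, or an extension built around $a$, shows the two sides differ. More carefully: if $(a,a)\notin R_F$, take $H=(A_F\cup A_G\cup\{z\},\{(z,c)\mid c\neq a,z\})$. Then $\{a,z\}$ is stable in $F\cup H$, while in $G\cup H$ the only candidate is $\{z\}$, which is stable there but not in $F\cup H$. If $(a,a)\in R_F$, then $F\cup H$ has no stable extension containing $z$ alone, because $a$ is not attacked, whereas $G\cup H$ has $\{z\}$. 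Some care is needed here because $a$ must lie in $H$ for the union to behave as intended; $H$ must be a strict AF, so $a\in A_H$ without attacks.

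Case (b): the argument sets are equal but the kernel attacks differ, say $(a,b)\in R_F^{SK}\setminus R_G^{SK}$.
\begin{itemize}
\item If $a=b$, it is a self-loop in $F$ but not in $G$. Use $H$ with $z$ attacking everything except $a$. Then $\{a,z\}$ is stable in $G\cup H$ but not in $F\cup H$.
\item If $a\neq b$, then $a$ has no self-loop in $F$, and the same holds in $G$; otherwise the self-loop case applies in the symmetric direction. Since $(a,b)$ is absent from $R^{SK}_G$ and $a$ has no self-loop, $(a,b)\notin R_G$. Take $H$ with a fresh $z$ attacking every argument except $a$ and $b$. In $G\cup H$, if $b$ has no self-loop, $\{a,b,z\}$ is conflict-free and its range is everything, so it is stable; in $F\cup H$ it is not conflict-free, and the candidate $\{a,z\}$ is stable there but does not contain $b$. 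If $b$ is self-attacking in $G$ (and then in $F$, by the self-loop case), then $\{a,z\}$ is stable in $F\cup H$ because $a$ attacks $b$, but not in $G\cup H$, where $b$ lies outside its range.
\end{itemize}

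The main obstacle is this case analysis, in particular making sure every needed argument lies in $H$ and accounting for self-loops on $a$ and $b$. I would organise the cases by first treating the argument-set and self-loop differences, and only then the remaining non-loop attack difference.
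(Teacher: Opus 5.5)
Your ($\Leftarrow$) direction is sound: $\stb(F)=\stb(F^{SK})$ and $(F\cup H)^{SK}=(F^{SK}\cup H^{SK})^{SK}$ hold for the reasons you give. The paper does not prove this proposition itself; it imports it from Oikarinen and Woltran, whose argument has this same overall shape.

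The ($\Rightarrow$) direction, however, rests on two separating witnesses that fail as written. The first is in case (a), where you put $a$ into $A_H$. Then $a$ is also an argument of $G\cup H$, where it is isolated and unattacked, so $\stb(G\cup H)=\{\{a,z\}\}$ rather than $\{\{z\}\}$. If $(a,a)\notin R_F$, then also $\stb(F\cup H)=\{\{a,z\}\}$: the unattacked $z$ defeats everything except $a$, which is thereby forced in. So in this subcase your $H$ does not separate $F$ and $G$ at all. In the self-attacking subcase it does separate them, but via $\{a,z\}\in\stb(G\cup H)$, not via $\{z\}$. The mistake is the claim that strictness forces $a\in A_H$. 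Strictness only demands $R_H\subseteq A_H\times A_H$, and $a$ is the endpoint of no attack of $H$. Take instead $A_H=((A_F\cup A_G)\setminus\{a\})\cup\{z\}$ and $R_H=\{(z,c)\mid c\in (A_F\cup A_G)\setminus\{a\}\}$. Then $\stb(G\cup H)=\{\{z\}\}$. Meanwhile $\stb(F\cup H)$ is $\{\{a,z\}\}$ if $(a,a)\notin R_F$ and $\emptyset$ if $(a,a)\in R_F$, and both differ from $\{\{z\}\}$.

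The second is in case (b) with $a\neq b$ and $b$ not self-attacking, where you assert that $\{a,b,z\}$ is conflict-free in $G\cup H$. Nothing in your hypotheses excludes $(b,a)\in R_G$. In that situation $\{a,b,z\}$ is stable in neither union, and the claimed separation disappears. The witness you already use for self-attacking $b$ works in every subcase. $\{a,z\}$ is stable in $F\cup H$, because $(a,a)\notin R_F$ and $(a,b)\in R_F$. It is not stable in $G\cup H$, because $(a,b)\notin R_G$ and $z$ does not attack $b$, so $b$ lies outside its range. With these two repairs the contrapositive goes through, and the split on whether $b$ is self-attacking becomes unnecessary.
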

The idea of strong equivalence has been investigated for CAFs as well. In the literature only expansions among \textit{compatible} CAFs are considered. Informally, two CAFs are compatible if and only if the arguments they share have the same claim. Moreover, AF kernels characterize strong equivalence for CAFs as well. 

\begin{proposition}[\cite{BaumannRU23}]
	Let $\mathcal{F}=(F,\gamma)$ and $\mathcal{G}=(G,\gamma')$ be two compatible well-formed CAFs. Then $ \mathcal{F} \equiv^{\stb_i}_s \mathcal{G} \text{ iff } F\equiv^{\stb}_s G$.
\end{proposition}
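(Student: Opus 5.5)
The plan is to prove the two directions separately and to reduce everything to Proposition~\ref{pro:AF_SE}, i.e.\ to the stable kernels $F^{SK}$ and $G^{SK}$. Throughout, an expansion of a CAF $\mathcal{F}=(F,\gamma)$ by a compatible CAF $\mathcal{H}=(H,\gamma_H)$ is $(F\cup H,\gamma\cup\gamma_H)$. It is required to be well-formed again, and compatibility makes $\gamma\cup\gamma_H$ a function.

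\emph{($\Leftarrow$)} Assume $F\equiv^{\stb}_s G$, so $F^{SK}=G^{SK}$ by Proposition~\ref{pro:AF_SE}. In particular $F$ and $G$ have the same argument set $A$. Since $\mathcal{F}$ and $\mathcal{G}$ are compatible, $\gamma$ and $\gamma'$ coincide on $A$. Now take any admissible expansion $\mathcal{H}$. Then $\stb(F\cup H)=\stb(G\cup H)$, and both expanded CAFs carry the same claim function $\gamma\cup\gamma_H$. Applying this function to the common stable extensions gives $\stb_i$ of $\mathcal{F}\cup\mathcal{H}$ equal to $\stb_i$ of $\mathcal{G}\cup\mathcal{H}$. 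This direction is routine.

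\emph{($\Rightarrow$)} I argue by contraposition. Suppose $F^{SK}\neq G^{SK}$. I build a compatible CAF $\mathcal{H}$ such that the claim-level stable extensions differ. The new arguments in $\mathcal{H}$ get fresh, pairwise distinct claims not in the range of $\gamma$ or $\gamma'$. Attacks from new arguments into old ones never affect well-formedness, since well-formedness only constrains outgoing attacks of same-claim arguments. Whenever I need an attack from an old argument $x$ to a new one, I add the attack from every argument sharing $x$'s claim, in both frameworks, so that well-formedness is preserved. There are two cases.

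\begin{enumerate}
\item \emph{The argument sets differ.} Say $a\in A_F\setminus A_G$. I add a fresh argument $z$ that attacks every argument of $A_F\cup A_G$ except $a$, and make $a$ self-defeating-free; if $a$ attacks itself, I treat the self-loop as in Oikarinen and Woltran's construction. The aim is that $\{a,z\}$, or a variant with a fresh marker argument, is stable on one side only. Comparing claim sets then shows the difference, because $z$'s claim is fresh.
\item \emph{Same arguments but different kernel attacks.} Say $(a,b)\in R_F^{SK}\setminus R_G^{SK}$, with $a$ not self-attacking in $F$. I add a fresh $z$ attacking everything except $a$ and $b$. I also add a fresh $w$ attacked by $b$ and by all of $b$'s claim-mates, and by nothing else. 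In $F\cup H$, the set $\{z,a,w\}$ should be stable, since $a$ kills $b$ and hence $w$ is unattacked. In $G\cup H$, $a$ does not attack $b$, so either $b$ must be in, which excludes $w$, or no stable extension contains $w$. The fresh claim of $w$ then separates the claim extensions. The subcases where $a$ self-attacks in $G$, and where $a=b$, need analogous adjustments mirroring the AF proof.
\end{enumerate}

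The main obstacle is exactly this lifting from extension-level to claim-level distinctions. Old arguments may share claims, so two different stable extensions could project to the same claim set. Moreover, the attacks I add out of old arguments must be replicated across whole claim classes, which could unintentionally kill $w$ via claim-mates of $b$ that belong to the extension. To handle this, I would use $z$ to attack all such claim-mates except those needed, and then carefully check the subcases. The fallback is to appeal to the existing result of \cite{BaumannRU23} for the fine details.
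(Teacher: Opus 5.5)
Your ($\Leftarrow$) direction is correct, and it works for arbitrary compatible expansions. The gap is in ($\Rightarrow$). It stops exactly at what you call the main obstacle and then defers to \cite{BaumannRU23} ``for the fine details''. The paper only imports this proposition from \cite{BaumannRU23} and gives no proof, so that appeal is circular. Concretely, in case~2 your witness $w$ fails when $\gamma(a)=\gamma(b)$. Then $a$ is itself a claim-mate of $b$, so $a$ attacks $w$ and $\{z,a,w\}$ is not conflict-free. Letting $z$ attack claim-mates cannot help, because the offending claim-mate is $a$. The missing idea is that well-formedness eliminates this subcase. From $(a,b)\in R_F$, $(a,a)\notin R_F$ and $a^+_{R_F}=b^+_{R_F}$ you get $(b,b)\in R_F$. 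In $G$ (same arguments, hence same claims by compatibility), either $(a,b)\notin R_G$, which forces $(b,b)\notin R_G$, or $(a,a)\in R_G$. Either way the kernels differ in a self-loop, i.e.\ you are in the case $a=b$. You left that case open, but it is immediate: let $z$ attack every argument except the loop argument $x$. The side where $x$ self-attacks then has no stable extension, and the other side has exactly $\{z,x\}$. For $\gamma(a)\neq\gamma(b)$ your construction does work, also when $a$ self-attacks in $G$, since then no argument with claim $\gamma(a)$ survives there. In fact $w$ is not needed. Write $c_z$ for the fresh claim of $z$. Then $\{z,a\}$ is stable in $F\cup H$. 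In $G\cup H$, all other old arguments are out and $\gamma$ is injective on $\{a,b\}$, so $\{c_z,\gamma(a)\}$ could only come from $\{z,a\}$. That would require $(a,b)\in R^{SK}_G$.

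Case~1 is not yet a construction either. An expansion cannot make $a$ ``self-defeating-free'', and it does not need to. Let $z$ attack all of $(A_F\cup A_G)\setminus\{a\}$. Then $G\cup H$ has $\{z\}$ as its unique stable extension. $F\cup H$ has $\{z,a\}$ (claims $\{c_z,\gamma(a)\}$), or, if $a$ self-attacks, no stable extension at all. However, this $\mathcal{H}$ must contain the arguments of $A_G\setminus A_F$, and they enter $F\cup H$ without the outgoing attacks they have in $G$. If such an argument shares its claim with an argument of $F$ that attacks something, $\mathcal{F}\cup\mathcal{H}$ is not well-formed. So under your own requirement that expanded CAFs be well-formed, this expansion is not admissible. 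That requirement is not part of the statement: as the paper remarks, the characterization from \cite{BaumannRU23} quantifies over all compatible, possibly non-well-formed updates. Drop it, and the pieces above already give a complete, non-circular proof of ($\Rightarrow$): case~1 via $z$; in case~2, the self-loop case via $z$, the case $\gamma(a)\neq\gamma(b)$ via $\{z,a\}$, and the case $\gamma(a)=\gamma(b)$ reduced to a self-loop.
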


This kernel characterization applies when $\mathcal{F}$ and $\mathcal{G}$ are well-formed, for any common update $\mathcal{H}$ (possibly not well-formed). In Section \ref{subsec:SELPsandWFCAFs} we follow a different path, since CAFs translated from LPs are guaranteed to be well-formed and the requirement of compatibility for expansions is not natural for logic programs.  

\section{From Abstract Argumentation to Logic Programming and Back}\label{sec: translation}
In this section we introduce translations between increasingly larger classes of logic programs and increasingly more expressive abstract argumentation frameworks. We recall the fact that strict h-unique atomic programs correspond to strict AFs~\cite{CaminadaSAD15a}. Further, we show that by relaxing the strictness requirement for LPs we end up in the broader class of (possibly non-strict) AFs. Then, we consider the full class of atomic LPs and their correspondence to well-formed CAFs. 

\subsection{Logic Programs and Dung-style AFs}
We begin with strict h-unique atomic programs and strict AFs.
Any strict AF can be transformed into an logic program by generating a rule $r$ for every argument such that the head of $r$ is the argument's name and the (negative) body contains the names of each attacker. The resulting program is atomic, strict and h-unique. 
\begin{definition}[\cite{CaminadaSAD15a}]\label{def: AF2LP}
	Let $F = (A,R)$ be an argumentation framework with $A=\{a_1,\dots, a_n\}$. The corresponding logic program of $F$ is:
	\begin{align*}
		P_F =\{& i: a_i \gets \naf b_1,\dots,\naf b_k. \mid a_i \in A \text{ and } \{b_1,\dots,b_k\}=a_i^-\}
	\end{align*}
\end{definition}

\begin{example}\label{ex:AF2LP}
	Consider the following strict AF $F=(A,R)$ depicted below with $A=\{a,b,c\}$ and $R=\{(a,a),(a,b),(b,a),(b,c)\}$ and its corresponding LP $P_F$:

    \hspace{-20pt}
	
	\begin{minipage}{.45\textwidth}
		\begin{tikzpicture}
			\path
			(-2.5,0) node (F) {$F$}
			(0,0) node[arg](b){$b$}
			(-1.5,0) node[arg](a){$a$}
			(1.5,0) node[arg](c){$c$}
			;
			
			\path[->,>=stealth,thick]
			(a)edge[loop left](a)
			(a)edge[bend left=15](b)
			(b)edge[bend left=15](a)
			(b)edge(c)
			;
		\end{tikzpicture}
	\end{minipage}
	\begin{minipage}{.65\textwidth}
	\begin{tabular}{cl}
		$P_F =$& $\{ 1: a\gets \naf a, \naf b.,$ \\
		& \; $2: b\gets \naf a.,$ 
		 \; $3: c \gets \naf b.\}$ \\
		\end{tabular}
	\end{minipage}
\end{example}

For the other direction, the transformation only takes strict h-unique programs as input. For each rule, the head atom is associated to an argument with the same name and each negated atom (called  \textit{vulnerability}) in the body is associated with an attacker of the argument corresponding to the rule-head. 

\begin{definition}[\cite{CaminadaSAD15a}]\label{def:LP2AF}
	Let $P$ be a strict h-unique program. The corresponding strict argumentation framework $F_P=(A_P,R_P)$ is defined by:
	\begin{itemize}
		\item $A_P=\{a \mid a \in head(r), \, r \in P\}$,
		\item $R_P=\{(a, b) \mid b \in head(r), \, a \in neg(r) , \, r \in P\}$.
	\end{itemize}
\end{definition} 

It is easy to see that under translation, the program in Example \ref{ex:AF2LP} generates exactly the strict AF $F$. Indeed, the presented translation is a bijection.

\begin{proposition}[\cite{CaminadaSAD15a}]\label{pro:iso AF-LPs}
For a strict AF $F$ and its corresponding LP $P_F$, $F = F_{P_F}$.
\end{proposition}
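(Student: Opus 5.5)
The plan is to unfold both translations and check that the arguments and then the attacks coincide. Fix a strict AF $F=(A,R)$ with $A=\{a_1,\dots,a_n\}$. We need $A_{P_F}=A$ and $R_{P_F}=R$, where $F_{P_F}=(A_{P_F},R_{P_F})$ comes from Definition~\ref{def:LP2AF} applied to $P_F$ from Definition~\ref{def: AF2LP}.

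First, Definition~\ref{def:LP2AF} must actually apply, so I check that $P_F$ is atomic, strict and h-unique. Every rule of $P_F$ has an empty positive body, so $P_F$ is atomic. Each $a_i\in A$ is the head of exactly one rule, namely rule $i$. Every atom occurring in a body is an attacker $b\in a_i^-\subseteq A$, because $R\subseteq A\times A$ (this is exactly where strictness of $F$ is needed). Hence $\mathcal{L}(P_F)=A$, and every atom of $\mathcal{L}(P_F)$ occurs exactly once as a head, which gives both strictness and h-uniqueness.

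Next come the arguments. By construction $head(P_F)=\{a_1,\dots,a_n\}$, so $A_{P_F}=A$.

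Then the attacks. Since each $b\in A$ heads exactly one rule $r_b$ of $P_F$, we have $(a,b)\in R_{P_F}$ iff $a\in neg(r_b)$. By Definition~\ref{def: AF2LP}, $neg(r_b)=b^-_R$, so this holds iff $(a,b)\in R$. Therefore $R_{P_F}=R$, and so $F=F_{P_F}$.

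I expect no real obstacle. The only delicate point is the bookkeeping in the attack step, where one must use h-uniqueness so that "some rule with head $b$" refers to the single rule $r_b$. The other point to watch is that strictness of $F$ is what keeps body atoms inside $A$, and this is needed for $P_F$ to be strict. A rule with an empty body for an unattacked argument, such as a fact $a_i\gets .$, causes no trouble: it contributes the argument $a_i$ and no attacks.
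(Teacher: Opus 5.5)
Your proposal is correct and is the natural proof. The paper gives no argument of its own: it imports the result from Caminada et al.\ and calls it easy to see. Your direct unfolding of the two translations supplies exactly that. You check that $P_F$ is atomic, strict and h-unique, so the reverse translation applies, and then match rule heads with arguments and negative bodies with attackers.
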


Moreover, the answer sets for a (strict h-unique) program correspond exactly to the stable extensions of its transformed strict AF.

\begin{proposition}[\cite{CaminadaSAD15a}]\label{prop:equiv AF LP}
	For any strict AF $F$ and h-unique atomic program $P$, $\stb(F)=AS(P_{F})$ and $AS(P)=\stb(F_P)$.
\end{proposition}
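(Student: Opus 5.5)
The plan is to reduce both equalities to a single fixpoint characterisation of answer sets of atomic programs, and then compare it with Definition~\ref{def: af stb}.

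\textbf{Step 1 (answer sets of atomic programs).} Let $P$ be atomic and $S\subseteq\mathcal{L}(P)$. Since no rule has positive body atoms, the reduct $P^S$ consists only of facts, namely $head(r)$ for every $r\in P$ with $neg(r)\cap S=\emptyset$. Its minimal model is just the set of these heads. Hence
\[
S\in AS(P) \iff S=\{head(r)\mid r\in P,\ neg(r)\cap S=\emptyset\}.
\]

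\textbf{Step 2 (specialising to strict h-unique programs).} If $P$ is strict and h-unique, every atom $a\in\mathcal{L}(P)$ is the head of exactly one rule $r_a$. By Definition~\ref{def:LP2AF} we have $A_P=\mathcal{L}(P)$ and $neg(r_a)=a^-_{R_P}$. Step 1 then says that $S\in AS(P)$ iff, for every $a\in A_P$, $a\in S \Leftrightarrow a^-_{R_P}\cap S=\emptyset$. I will split this biconditional into its two directions.
\begin{itemize}
\item The direction ``$a\in S\Rightarrow$ no attacker of $a$ lies in $S$'' is exactly conflict-freeness of $S$.
\item The direction ``$a\notin S\Rightarrow$ some attacker of $a$ lies in $S$'' is exactly $A_P\subseteq S^\oplus_{R_P}$.
\end{itemize}
Together these say $S\in\stb(F_P)$, which gives $AS(P)=\stb(F_P)$.

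If the statement is read for h-unique programs that are not necessarily strict, one extra remark is needed. An atom that occurs in no head never belongs to an answer set, by Step 1. Such an atom contributes no argument to $A_P$, so dropping the corresponding literals $\naf b$ changes neither side. The argument above therefore goes through on the remaining head atoms.

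\textbf{Step 3 (the direction from AFs).} For a strict AF $F$, the program $P_F$ of Definition~\ref{def: AF2LP} is atomic, strict and h-unique, since there is exactly one rule per argument. Step 2 gives $AS(P_F)=\stb(F_{P_F})$, and Proposition~\ref{pro:iso AF-LPs} gives $F_{P_F}=F$. Hence $\stb(F)=AS(P_F)$.

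\textbf{Main obstacle.} There is no deep step. The only delicate point is the bookkeeping in Step 2: identifying $neg(r_a)$ with $a^-_{R_P}$ uses both h-uniqueness (so the rule $r_a$ is unique) and strictness (so every atom is an argument). Non-head atoms must be handled as noted above when strictness is not assumed.
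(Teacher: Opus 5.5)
Your proof is correct. The paper itself does not prove this proposition (it is imported from \cite{CaminadaSAD15a}), so the useful comparison is with how the paper later extends the result to non-strict h-unique programs and AFs with ungrounded attacks.

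\textbf{The paper's route.} It uses the strict case as a black box and reduces the non-strict case to it via the chain $AS(P)=AS(P_{\downarrow head})=\stb(F_{P_{\downarrow head}})=\stb((F_P)_{\downarrow A_P})=\stb(F_P)$. This chain needs Lemma~\ref{lem:strictlp} (non-head atoms never occur in answer sets), Proposition~\ref{prop:equiv LP ill-formed LP}, a lemma commuting $\downarrow$ with the translations, and Proposition~\ref{prop:equiv AF quasi AF}.

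\textbf{Your route.} Your Step~1 characterisation $S\in AS(P)\iff S=\{head(r)\mid r\in P,\ neg(r)\cap S=\emptyset\}$ makes all of this immediate:
\begin{itemize}
\item It yields $S\subseteq head(P)$, which is Lemma~\ref{lem:strictlp}.
\item It shows that for $S\subseteq head(P)$ only the head atoms of $neg(r)$ matter. This is Proposition~\ref{prop:equiv LP ill-formed LP}, and it holds for arbitrary atomic $P$.
\item Since $neg(r_a)\cap S=a^-_{R_{\downarrow A_P}}\cap S$ for $S\subseteq A_P$, your comparison with conflict-freeness and full range covers the strict and non-strict readings in one stroke.
\end{itemize}
Your argument is thus self-contained and elementary, but tailored to stable semantics. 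The paper's modular route isolates the semantically inert ungrounded vulnerabilities and attacks in separate lemmas, and otherwise depends on the cited strict result.

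\textbf{One tacit assumption.} Step~1 presumes every rule has a head atom. A headless constraint surviving the reduct would leave $P^S$ without a model. Such rules would, however, also falsify the proposition itself, since they are invisible in $F_P$. So this matches the intended rule format rather than being a gap in your argument.
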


Given a certain program $P$, the attacks that are generated in $F_P$ depends on two factors: the vulnerabilities and head-atoms. In \cite{CaminadaSAD15a} equivalence under transformation is studied in a static setting, where strictness becomes a crucial aspect to enforce in order to define a one-to-one mapping. 
In a dynamic setting, on the other hand, the vulnerabilities in $P$ which do not occur as head-atoms may have an impact at the semantic level: new incoming attacks in $F_P$ may arise.  

\begin{example}
	The programs $P=\{a\gets \naf b.\}$ and $P'=\{a\gets. \}$ have the same unique answer set $\{a\}$. However, when concurrently updated with $P''=\{b\gets.\}$, their answer sets differ: $AS(P\cup P'')=\{\{b\}\}$ and $AS(P'\cup P'')=\{\{a,b\}\}$. 
\end{example}

Motivated by these considerations, we relax the notion of strictness for logic program, i.e. we consider programs with negative literals that never occur as head-atoms. Indeed, the previously shown translation would relate non-strict programs to strict AFs only at the cost of losing the one-to-one mapping: several non-strict LPs correspond to the same strict AF. In order to maintain an exact mapping, we also relax the definition of attack relation, allowing incoming attacks from elements in the universe $\mathcal{U}$ outside of $A$. We use \textit{ungrounded attacks} to refer to these relations. Further, we simply call argumentation framework (AF) any framework with possibly ungrounded attacks.

\begin{definition}[AF]
	An AF $F=(A,R)$ is a directed graph with $A\subseteq \mathcal{U}$ a non-empty set of arguments and $R\subseteq \mathcal{U}\times A$ a conflict relation. 
    We call $R_{\downarrow A}= R\cap (A\times A)$ the set of (proper) attacks, and $R\setminus R_{\downarrow A}$ the set of ungrounded attacks. 
\end{definition} 

The strictness notion of logic programs and argumentation frameworks are indeed connected: every non-strict AF can be mapped to exactly one non-strict LP. The isomorphism of Proposition \ref{pro:iso AF-LPs} between strict h-unique atomic LPs and strict AFs can be lifted to possibly non-strict programs and AFs.
\begin{remark}\label{rem: iso quasi-AF LP}
	For any AF $F$ and corresponding logic program $P_F$, $F = F_{P_F}$. Conversely, for any h-unique atomic LP $P$ and corresponding AF $F$, $P=P_{F_P}$. 
\end{remark}

\begin{example}
Let $F=(A,R)$ be the AF below. In the program $P_F$, the ungrounded attack $(c,b)\in R\setminus R_{\downarrow A}$ is represented by the vulnerability $c\in neg(P_F)$. 

\vspace{1.5mm}
\hspace{-5pt}
	\begin{minipage}{.35\textwidth}
    \hspace{-18pt}
			\begin{tikzpicture}
				\path
				(-3.5,0) node (F) {$F$}
				(-1,0) node[arg](b){$b$}
				(-2.5,0) node[arg](a){$a$}
				(0.5,0) node[argd](c){$c$}
				;
				
				\path[->,>=stealth,thick]
				(a)edge[loop left](a)
				(a)edge[bend left=15](b)
				(b)edge[bend left=15](a)
				(c)edge[dashed](b)
				;
			\end{tikzpicture}
	\end{minipage}
	\begin{minipage}{.65\textwidth}
$P_F =\{ 1: a\gets \naf a, \naf b., \, 2: b\gets \naf a, \naf c.\}$
	\end{minipage}
	
\end{example}

The notions of defense, conflict-freeness as well as stable semantics for AFs are defined over $R_{\downarrow A}$ and are thus identical to those introduced for strict AFs.

\begin{definition}\label{def:quasi-af stb}
	Let $F=(A,R)$ be an AF. A set $E\subseteq A$ is conflict-free in $F$ iff for no $a,b \in A$, $(a,b)\in R_{\downarrow A}$. Further, $E$ is a stable extension of $F$ iff $E^\oplus_{R_{\downarrow A}}=A$.
\end{definition}
Notice that the notion of stable-kernel characterizes strong equivalence between AFs. This follows straightforwardly from the fact that we fixed $R\subseteq \mathcal{U}\times A$. Hence, no ungrounded attack generates from a self-attacking argument. 
Moreover, ungrounded attacks have no influence on the evaluation of an AF. 

\begin{proposition}\label{prop:equiv AF quasi AF}
	For an AF $F=(A,R)$ and $F_{\downarrow A}=(A,R_{\downarrow A})$, $\stb(F)=\stb(F_{\downarrow A})$.
\end{proposition}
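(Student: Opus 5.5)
The claim follows by unfolding Definition~\ref{def:quasi-af stb} and comparing it with Definition~\ref{def: af stb} applied to the strict AF $F_{\downarrow A}=(A,R_{\downarrow A})$. I will first note that $F_{\downarrow A}$ is indeed a strict AF: by construction $R_{\downarrow A}=R\cap(A\times A)\subseteq A\times A$, so Definition~\ref{def: af stb} applies to it. Moreover, the proper-attack part of $F_{\downarrow A}$ is again $R_{\downarrow A}$, since $(R_{\downarrow A})_{\downarrow A}=R_{\downarrow A}\cap (A\times A)=R_{\downarrow A}$.

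Next I will compare the two notions of conflict-freeness. A set $E\subseteq A$ is conflict-free in $F$ precisely when no $a,b\in E$ satisfy $(a,b)\in R_{\downarrow A}$. (Here I read the quantifier in Definition~\ref{def:quasi-af stb} as ranging over $E$, as clearly intended.) This is exactly the condition $E\in\cf(F_{\downarrow A})$.

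Then I will compare the range conditions. For $E$ in $F$, stability requires $E^\oplus_{R_{\downarrow A}}=A$. Stability in $F_{\downarrow A}$ requires the range of $E$ with respect to that framework's attack relation, which is $R_{\downarrow A}$, to equal $A$. This is literally the same condition. In both cases $E^+$ collects only elements of $A$ attacked via $R_{\downarrow A}$, so ungrounded attacks, whose sources lie in $\mathcal U\setminus A$, never contribute.

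Both defining conditions therefore coincide for every $E\subseteq A$, which gives $\stb(F)=\stb(F_{\downarrow A})$. There is no real obstacle here. The only point needing care is to check that the ranges $E^+_R$ and $E^+_{R_{\downarrow A}}$ are formed over the same argument set $A$, so that ungrounded attacks are genuinely invisible. That holds because every attack in $R$ targets $A$, and elements of $E$ always lie in $A$.
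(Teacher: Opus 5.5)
Your proof is correct and follows essentially the paper's reasoning. The paper gives no separate proof: it simply observes that conflict-freeness and stability for AFs are defined over $R_{\downarrow A}$, so they coincide with the strict-AF notions applied to $F_{\downarrow A}$. That is exactly the definitional unfolding you carry out, including reading the quantifier in the conflict-freeness condition over $E$ rather than $A$.
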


For h-unique programs we observe a similar behavior. We can identify a set $S\subseteq neg(P)$ of \textit{ungrounded vulnerabilities} that make a program $P$ non-strict. Formally, the set of ungrounded vulnerabilities for $P$ is $UV(P)=\{\naf b \mid b\in neg(P) \setminus head(P)\}$. By removing such vulnerabilities, we obtain the strict program $P_{\downarrow head}$, defined as $P_{\downarrow head}=\{id(r): head(r)\gets body(r)\setminus UV(P). \mid r\in P\}$.
Notice that transforming a non-strict atomic program into the corresponding strict one yields equivalent answer-sets, and vice-versa. Ungrounded vulnerabilities have no semantic impact, similarly to ungrounded attacks. 
Thus, answer-sets are preserved when restricting to strict programs. 

\begin{restatable}{proposition}{ASEqualPToWellFormP}\label{prop:equiv LP ill-formed LP}
	For an atomic program $P$ and $P_{\downarrow head}$, $AS(P)=AS(P_{\downarrow head})$.
\end{restatable}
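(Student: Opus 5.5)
The plan is to compare the reducts of $P$ and $P_{\downarrow head}$ directly, for an arbitrary candidate set $S$, and show they have the same minimal model. One preliminary point has to be settled first: which candidate sets need to be considered. Every answer set $S$ of an atomic program is the minimal model of a positive program whose rules are all facts, since every rule is atomic and the reduct removes all negative literals. Hence $S\subseteq head(P)$. The same holds for $P_{\downarrow head}$, because $head(P_{\downarrow head})=head(P)$: the transformation keeps every rule and its head, and only deletes some negative body literals. So it suffices to prove that for every $S\subseteq head(P)$ we have $P^S = (P_{\downarrow head})^S$, up to the rule identifiers.

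The key step is the following observation. A rule $r\in P$ is deleted in the reduct $P^S$ iff $neg(r)\cap S\neq\emptyset$. The corresponding rule $r'$ in $P_{\downarrow head}$ has $neg(r')=neg(r)\cap head(P)$, since removing $UV(P)$ deletes exactly the literals $\naf b$ with $b\notin head(P)$. Because $S\subseteq head(P)$, we get
\[
neg(r)\cap S = neg(r)\cap head(P)\cap S = neg(r')\cap S .
\]
Therefore $r$ is deleted from $P^S$ iff $r'$ is deleted from $(P_{\downarrow head})^S$. Each surviving rule becomes the fact $head(r)\gets .$ in both reducts, since $pos(r)=\emptyset$ for atomic rules. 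Thus the two reducts coincide, and for each such $S$ their minimal models coincide: both equal the set of heads of the surviving rules. It follows that $S\in AS(P)$ iff $S\in AS(P_{\downarrow head})$.

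The main obstacle is the restriction to $S\subseteq head(P)$, together with the fact that the definition of answer sets quantifies over $S\subseteq\mathcal{L}(P)$, while $\mathcal{L}(P_{\downarrow head})$ may be strictly smaller. I will handle this explicitly. Suppose $S$ contains some $b\notin head(P)$. Then $b$ does not belong to the minimal model of $P^S$, because every rule of $P^S$ is a fact whose head lies in $head(P)$. Hence $S$ is not an answer set of $P$, and the same argument excludes it for $P_{\downarrow head}$. This shows that both collections of answer sets live inside $2^{head(P)}$, and the argument above then gives $AS(P)=AS(P_{\downarrow head})$. As an alternative, in the h-unique case the result could be obtained from Remark~\ref{rem: iso quasi-AF LP}, Proposition~\ref{prop:equiv AF quasi AF} and Proposition~\ref{prop:equiv AF LP}. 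The direct reduct argument is preferable because it needs no h-uniqueness.
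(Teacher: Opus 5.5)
Your proposal is correct and follows essentially the paper's route. The paper also relies on the fact that atoms outside $head(P)$ never occur in an answer set (its Lemma~\ref{lem:strictlp}), then shows that $P^S$ and $(P_{\downarrow head})^S$ coincide and treats the two inclusions separately. You prove the reduct equality once for all $S\subseteq head(P)$, which makes the second inclusion and the $\mathcal{L}(P)$ versus $\mathcal{L}(P_{\downarrow head})$ domain issue explicit, whereas the paper only sketches them.
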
 

To show that semantic equivalence can be lifted to AFs and h-unique programs, we introduce the following lemma.

\begin{restatable}{lemma}{OrderTranslQuasi}
	For any h-unique atomic logic program $P$, $F_{P_{\downarrow head}}=(F_P)_{\downarrow A_P}$. For any AF $F$, $P_{F_{\downarrow A}}=(P_F)_{\downarrow head}$. 
\end{restatable}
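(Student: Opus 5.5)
We prove both identities by unfolding the definitions and comparing argument sets and attack relations (respectively, rules) componentwise. Throughout we use the translations of Definitions~\ref{def: AF2LP} and~\ref{def:LP2AF}, read literally for possibly non-strict inputs. That is, $F_P=(A_P,R_P)$ with $A_P=head(P)$ and $R_P=\{(a,b)\mid b=head(r),\, a\in neg(r),\, r\in P\}$, and $P_F$ has one rule per argument whose negative body is the full set of attackers $a^-_R$, ungrounded ones included. We also use Remark~\ref{rem: iso quasi-AF LP}.

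\emph{First identity.} Let $P$ be h-unique and atomic. Removing ungrounded vulnerabilities only deletes negative body literals and never changes heads, so $head(P_{\downarrow head})=head(P)$. Hence $F_{P_{\downarrow head}}$ and $(F_P)_{\downarrow A_P}$ have the same argument set $A_P$. For the attacks:
\begin{itemize}
\item $(a,b)$ is an attack of $F_{P_{\downarrow head}}$ iff there is $r\in P$ with $head(r)=b$, $a\in neg(r)$, and $\naf a\notin UV(P)$.
\item $\naf a\notin UV(P)$ holds iff $a\in head(P)=A_P$, because $a\in neg(P)$ already.
\item $(a,b)$ is an attack of $(F_P)_{\downarrow A_P}$ iff $(a,b)\in R_P$ and $a,b\in A_P$.
\end{itemize}
Since $b=head(r)$ lies in $A_P$ automatically, the two conditions coincide. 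Rule identifiers play no role in $F_P$, so they cause no trouble here. H-uniqueness is not actually needed for this direction.

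\emph{Second identity.} Let $F=(A,R)$ be an AF. By construction, $P_F$ has exactly one rule per argument $a_i$, with head $a_i$ and $neg=(a_i)^-_R$. Thus $head(P_F)=A$, and $UV(P_F)=\{\naf b\mid b\in neg(P_F)\setminus A\}$, which are exactly the sources of ungrounded attacks. Therefore $(P_F)_{\downarrow head}$ contains, for each $a_i$, the rule $i: a_i\gets \naf\big((a_i)^-_R\cap A\big)$. Restricting $R$ to $A\times A$ gives $(a_i)^-_{R_{\downarrow A}}=(a_i)^-_R\cap A$, so $P_{F_{\downarrow A}}$ contains exactly the same rules. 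The argument set is unchanged by the restriction, so the same enumeration $a_1,\dots,a_n$ and the same identifiers $i$ can be used on both sides. Equality then holds rule by rule.

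The only delicate point is bookkeeping rather than mathematics. We must fix that both translations use a common enumeration of arguments for identifiers. We must also check that ``ungrounded'' is measured against the right set on each side: $head(P)$ for programs and $A$ for AFs. These coincide under the translations because $A_P=head(P)$ and $head(P_F)=A$. Once that correspondence is stated explicitly, both identities reduce to the set equalities above.
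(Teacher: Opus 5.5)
Your proof is correct and takes the same approach as the paper: both unfold the two translations and observe that the vulnerabilities deleted by $(\cdot)_{\downarrow head}$ are exactly those whose atoms lie outside $head(P)=A_P$, i.e.\ exactly the sources of ungrounded attacks, and the argument for $P_F$ is symmetric. Your version is more careful than the paper's in three respects: you check the attack relations as an explicit biconditional, so no proper attack is lost; you state explicitly that $a_i^-$ is read as including ungrounded attackers; and you handle the identifier bookkeeping through a common enumeration of $A$.
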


\begin{restatable}{proposition}{SemCorrAFLP}
	For any h-unique atomic program $P$ and the corresponding AF $F_P$, it holds that $AS(P) = \stb(F_P)$. Conversely, for any AF $F$ and its corresponding program $P_F$, we have that $\stb(F) = AS(P_F)$.
\end{restatable}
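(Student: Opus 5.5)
The proof reduces both claims to the strict case treated in Proposition~\ref{prop:equiv AF LP}. We pass through the ``grounded'' versions of programs and frameworks, and use the preceding lemma to commute the translations with the restriction operations.

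\emph{First claim.} Let $P$ be an h-unique atomic program. By Proposition~\ref{prop:equiv LP ill-formed LP}, $AS(P)=AS(P_{\downarrow head})$. The program $P_{\downarrow head}$ is again atomic and h-unique, since only negative body literals are removed and heads and identifiers are untouched. It is also strict, because every remaining negated atom occurs in $head(P)=head(P_{\downarrow head})$. Hence the translation $F_{P_{\downarrow head}}$ is a strict AF, and Proposition~\ref{prop:equiv AF LP} gives $AS(P_{\downarrow head})=\stb(F_{P_{\downarrow head}})$. By the lemma, $F_{P_{\downarrow head}}=(F_P)_{\downarrow A_P}$. By Proposition~\ref{prop:equiv AF quasi AF}, $\stb((F_P)_{\downarrow A_P})=\stb(F_P)$. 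Chaining these equalities yields $AS(P)=\stb(F_P)$.

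\emph{Second claim.} Let $F=(A,R)$ be an AF. By Proposition~\ref{prop:equiv AF quasi AF}, $\stb(F)=\stb(F_{\downarrow A})$. Since $F_{\downarrow A}$ is a strict AF, Proposition~\ref{prop:equiv AF LP} gives $\stb(F_{\downarrow A})=AS(P_{F_{\downarrow A}})$. The second part of the lemma rewrites $P_{F_{\downarrow A}}$ as $(P_F)_{\downarrow head}$. The program $P_F$ is atomic by construction, so Proposition~\ref{prop:equiv LP ill-formed LP} gives $AS((P_F)_{\downarrow head})=AS(P_F)$. This closes the chain. Alternatively, the second claim follows from the first applied to $P=P_F$, which is h-unique, together with Remark~\ref{rem: iso quasi-AF LP}, which gives $F_{P_F}=F$.

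\emph{Main obstacle.} The only delicate step is checking the side conditions when invoking Proposition~\ref{prop:equiv AF LP}, which is stated for strict AFs and h-unique programs. For the first claim the plan uses its AF-side equality $\stb(F)=AS(P_F)$ instead of the $AS(P)=\stb(F_P)$ half. Since Proposition~\ref{pro:iso AF-LPs} is only stated as $F=F_{P_F}$, the converse $P_{F_P}=P$ for the strict h-unique program $P_{\downarrow head}$ would come from Remark~\ref{rem: iso quasi-AF LP}.

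A second point concerns domains. Answer sets are subsets of $\mathcal{L}(P)$, but atoms occurring only in ungrounded vulnerabilities never belong to an answer set, since each such atom heads no rule. Likewise, stable extensions range only over $A_P=head(P)$. I therefore expect no mismatch between the two solution spaces, but I would verify this explicitly.
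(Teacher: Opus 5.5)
Your proposal is correct and matches the paper's proof: both claims are obtained by the same two chains of equalities, passing through $P_{\downarrow head}$ and $F_{\downarrow A}$ and using the commutation lemma together with the propositions stating that ungrounded vulnerabilities and ungrounded attacks have no semantic effect. Your side-condition checks (that $P_{\downarrow head}$ is strict and h-unique) and the shortcut deriving the second claim from the first via Remark~\ref{rem: iso quasi-AF LP} are details the paper leaves implicit.
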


\subsection{Logic Programs and well-formed CAFs}
We can now move one step forward and consider the wider class of atomic LPs and their relation to well-formed CAFs. For these, attacks are not arbitrary pairs of arguments, but depends on the attacker's claim. A translation has been introduced between \textit{strict} atomic LPs and well-formed CAFs. In our setting, we provide a more general translation that considers possibly non-strict LPs. To do so, we give a definition of well-formed CAFs more suitable for a dynamic setting.

\begin{definition}
	A well-formed CAF is a triple $\mathcal{F}=(A, R^\mathcal{C}, \gamma)$ where $A$ is a finite set of arguments, $R^\mathcal{C}\subseteq \mathcal{C}\times A$ is a set of claim-attacks with $\mathcal{C}$ the universe of claims and $\gamma:A\mapsto \mathcal{C}$ the claim function.
\end{definition}
One can easily retrieve the original definition of CAF by fixing $R=\{(x_i,x_j)\mid x_i,x_j\in A, \gamma(x_i)=c \text{ and } (c,x_j)\in R^{\mathcal{C}}\}$. As a result, we can utilize the definition of stable semantics for well-formed CAFs by means of this translation.
By appealing to the universe of claims $\mathcal{C}$, we have a formulation that encompasses claim attacks from claims which do not label any argument, i.e. ungrounded attacks. Any atomic LP exactly corresponds to some well-formed CAF by taking an argument for each rule, a set of claims for each atom occurring as the head of some rule and attacks defined from claims to arguments.

\begin{definition}
	Given an atomic program $P$, we obtain the corresponding well-formed CAF $\mathcal{F}_P=(A_P,R^{\mathcal{C}}_P,\gamma_P)$ by fixing $A_P=\{x_i\mid id(r)=i, r\in P\}$, $R^{\mathcal{C}}_P=\{(c,x_i)\mid c\in neg(r), id(r)=i\}$, and $\gamma_P(x_i)=head(r)$ with $id(r)=i$. 
\end{definition}

\begin{example}
	Consider the atomic LP $P$ below and its corresponding well-formed CAF. Notice that the claim-attack $(c,x_2)$ is realized by an ungrounded attack.

    \hspace{-20pt}
	\begin{minipage}{.51\textwidth}
		\begin{tabular}{cl}
			$P =$& $\{ 1: a\gets \naf b.,$ \\
			& \, $2: a\gets \naf c.,$
			       $3: b \gets \naf a.\}$ \\
		\end{tabular}
	\end{minipage}
	\begin{minipage}{.4\textwidth}
		\begin{tikzpicture}
		\path
		(-3.3,0.2) node(F){$\mathcal{F}_{P}$}
		(-2.5,0.6) node (a1) {$a$}
		(-2.5,0.2) node[arg] (1) {$x_1$}
		(-1,0.65) node (b) {$b$}
		(-1,0.2) node[arg](3){$x_3$}
		(0.5,0.6) node (a2) {$a$}
		(0.5,0.2) node[arg](2){$x_2$}
		(2,0.6) node (c) {$c$}
		(2,0.2) node[argd](star){$*$}
		(0,-.25) node (v) {}
		;
		
		\path[->,>=stealth,thick]
		(3) edge[bend left=15] (1)
		(1) edge[bend left=15] (3)
		(2) edge (3)
		(star) edge[dashed] (2)
		;
	\end{tikzpicture}
\end{minipage}
\end{example}
From a well-formed CAF, each argument $x_i$ corresponds to a rule r with $id(r)=i$ and head $\gamma(x_i)$. Further, for each claim $c\in \mathcal{C}$ attacking the argument $x_i$, $\naf c$ appears in the body of the rule. 

\begin{definition}
    For a well-formed CAF $\mathcal{F} = (A,R^{\mathcal{C}},\gamma)$ with $A=\{x_1\dots,x_n\}$, $P_\mathcal{F}=\{i: \gamma(x_i) \gets \naf b_1\dots\naf b_k. \mid x_i \in A, \{b_1\dots b_k\}=\gamma(x_i^-)\}$ is the corresponding LP.  
\end{definition}
Any CAF $\mathcal{F}$ where different arguments share the same claim is associated with a non h-unique program $P_\mathcal{F}$. 
Thus, CAFs are isomorphic to atomic LPs. 

\begin{proposition}[\cite{KoenigRU22}]\label{pro:isoCAF}
    For a CAF $\mathcal{F}$ and corresponding LP $P_\mathcal{F}$, $\mathcal{F}=\mathcal{F}_{P_\mathcal{F}}$. 
\end{proposition}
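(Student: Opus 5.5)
The plan is to show that the two translations $P\mapsto\mathcal{F}_P$ and $\mathcal{F}\mapsto P_\mathcal{F}$ are mutually inverse. For this direction it suffices to compare the three components of $\mathcal{F}=(A,R^{\mathcal{C}},\gamma)$ with those of $\mathcal{F}_{P_\mathcal{F}}=(A_{P_\mathcal{F}},R^{\mathcal{C}}_{P_\mathcal{F}},\gamma_{P_\mathcal{F}})$ and check that each coincides. Throughout, I take $x_i^-$ for a well-formed CAF to mean the set of claims $c$ with $(c,x_i)\in R^{\mathcal{C}}$. Consequently, the body of rule $i$ in $P_\mathcal{F}$ is exactly $\naf\{c\mid (c,x_i)\in R^{\mathcal{C}}\}$. (If $\gamma(x_i^-)$ is read literally on arguments, the ungrounded claim-attacks are lost, so this reading is forced if the statement is to hold for CAFs with ungrounded attacks.)

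\textbf{Arguments.} By definition of $P_\mathcal{F}$, each $x_i\in A$ produces exactly one rule with identifier $i$, and distinct arguments produce distinct identifiers. Hence $\{id(r)\mid r\in P_\mathcal{F}\}=\{i\mid x_i\in A\}$, and by definition of $A_{P}$ we get $A_{P_\mathcal{F}}=\{x_i\mid x_i\in A\}=A$.

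\textbf{Claims.} The rule with identifier $i$ has head $\gamma(x_i)$. Therefore $\gamma_{P_\mathcal{F}}(x_i)=head(r)=\gamma(x_i)$ for every $x_i\in A$.

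\textbf{Claim-attacks.} By definition,
\[
(c,x_i)\in R^{\mathcal{C}}_{P_\mathcal{F}} \iff c\in neg(r_i),
\]
where $r_i$ is the rule with $id(r_i)=i$. Under the reading of $x_i^-$ fixed above, $neg(r_i)=\{c\mid (c,x_i)\in R^{\mathcal{C}}\}$, so $(c,x_i)\in R^{\mathcal{C}}_{P_\mathcal{F}}$ iff $(c,x_i)\in R^{\mathcal{C}}$. Combined with the equality of argument sets, this gives $R^{\mathcal{C}}_{P_\mathcal{F}}=R^{\mathcal{C}}$, and hence $\mathcal{F}=\mathcal{F}_{P_\mathcal{F}}$.

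\textbf{Main obstacle.} The only delicate point is the claim-attack step, which hinges on the notation $\gamma(x_i^-)$. It must be read as the set of claims attacking $x_i$, including claims that label no argument, rather than as $\gamma$ applied to attacking arguments; I would state this reading explicitly. A second, minor point concerns claims that attack no argument. Such claims could occur in $R^{\mathcal{C}}$ only as first components of pairs, and every pair has the form $(c,x_i)$ with $x_i\in A$. So no information is lost, and the sets agree exactly.
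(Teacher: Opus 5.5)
Your proof is correct. The paper gives no argument of its own for this statement: it only cites the result and remarks that it carries over to the relaxed, claim-attack-based CAFs. Your componentwise check of arguments, claim function and claim-attacks is exactly that adaptation, in the same style the paper uses for Lemma~\ref{lemma:CAF_distrib}.

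Your explicit reading of $\gamma(x_i^-)$ as $\{c \mid (c,x_i)\in R^{\mathcal{C}}\}$ is indeed necessary. Under the argument-level reading, ungrounded claim-attacks such as $(c,x_2)$ in the paper's example would be dropped, and $\mathcal{F}_{P_\mathcal{F}}$ would then differ from $\mathcal{F}$.
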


For well-formed CAFs and LPs, equivalence is preserved under translation. 

\begin{proposition}[\cite{KoenigRU22}]\label{pro:CAF_LP_Corresp}
	For any well-formed CAF $\mathcal{F}$ and atomic program $P$, $\stb(\mathcal{F})=AS(P_{\mathcal{F}})$ and $AS(P)=\stb(\mathcal{F}_P)$.
\end{proposition}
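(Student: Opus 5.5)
The statement to prove is Proposition~\ref{pro:CAF_LP_Corresp}: $\stb(\mathcal{F})=AS(P_{\mathcal{F}})$ for every well-formed CAF $\mathcal{F}$, and $AS(P)=\stb(\mathcal{F}_P)$ for every atomic program $P$. By Proposition~\ref{pro:isoCAF} together with the converse isomorphism $P=P_{\mathcal{F}_P}$, which holds up to renaming of rule identifiers, it suffices to prove the second equation. Applying it to $P=P_{\mathcal{F}}$ gives the first. So I fix an atomic program $P$ with $\mathcal{F}_P=(A_P,R^{\mathcal{C}}_P,\gamma_P)$, where $x_i$ is the argument for rule $i$. The induced argument-level attack relation is $(x_j,x_i)$ whenever $head(r_j)\in neg(r_i)$. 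Ungrounded claim-attacks, i.e.\ those from claims that label no argument, contribute nothing to this relation, mirroring Propositions~\ref{prop:equiv AF quasi AF} and~\ref{prop:equiv LP ill-formed LP}.

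\textbf{Answer sets of atomic programs.} Since $P$ is atomic, $P^S$ consists only of facts, namely $head(r)$ for each $r$ with $neg(r)\cap S=\emptyset$. Hence
\[
S\in AS(P) \iff S=\{head(r)\mid r\in P,\ neg(r)\cap S=\emptyset\}.
\]

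\textbf{From answer sets to extensions.} Given $S\in AS(P)$, let $E_S=\{x_i\mid neg(r_i)\cap S=\emptyset\}$. Then $\gamma_P(E_S)=S$ by the characterization above. For conflict-freeness: if $x_j,x_i\in E_S$ and $x_j$ attacks $x_i$, then $head(r_j)\in S\cap neg(r_i)$, which contradicts $x_i\in E_S$. For stability: if $x_i\notin E_S$, some $b\in neg(r_i)\cap S$ exists. Since $b\in S$, $b$ is the head of a rule $r_j$ with $x_j\in E_S$, and $x_j$ attacks $x_i$. So $E_S\in\stb(F)$, and therefore $S\in\stb(\mathcal{F}_P)$.

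\textbf{From extensions to answer sets.} Given $E\in\stb(F)$, set $S=\gamma_P(E)$. I show $S=\{head(r_i)\mid neg(r_i)\cap S=\emptyset\}$, which is the characterization above.
\begin{itemize}
\item[$\subseteq$:] Take $x_i\in E$. If some $b\in neg(r_i)\cap S$ existed, then $b=\gamma(x_j)$ for some $x_j\in E$, and $x_j$ attacks $x_i$. This contradicts conflict-freeness, so $neg(r_i)\cap S=\emptyset$.
\item[$\supseteq$:] Suppose $neg(r_i)\cap S=\emptyset$. Then no argument of $E$ attacks $x_i$, because every attacker of $x_i$ has a claim in $neg(r_i)$. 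By stability $x_i\in E$, so $head(r_i)\in S$.
\end{itemize}

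\textbf{Main obstacle.} The delicate point is the $\supseteq$ step. It uses that every attacker of $x_i$ has a claim in $neg(r_i)$, which holds because attacks are defined claim-wise, so well-formedness is exactly what makes the argument work. The second point to check is that ungrounded claim-attacks never matter, since they come from claims that label no argument and so never appear in $S$.
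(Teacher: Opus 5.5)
Your proof is correct, and it takes a different route from the paper, which gives no proof of this proposition. The paper cites the strict case from the literature and only asserts that it ``can be carefully adapted'' to non-strict programs and ungrounded claim-attacks. The adaptation it has in mind is visible in its proof of the analogous AF statement ($AS(P)=\stb(F_P)$ for h-unique programs), which is a chain of reductions:
\begin{enumerate}
\item strip the semantically inert ungrounded vulnerabilities (Lemma~\ref{lem:strictlp} and Proposition~\ref{prop:equiv LP ill-formed LP});
\item check that stripping commutes with the translation;
\item invoke the known strict correspondence.
\end{enumerate}
You instead argue from first principles. You use the fixpoint characterization $S\in AS(P)\iff S=\{head(r)\mid r\in P,\ neg(r)\cap S=\emptyset\}$, which is valid because reducts of atomic programs consist only of facts, and you build the witnessing stable extension $E_S$ explicitly. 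Ungrounded claim-attacks then need no separate treatment, since such claims never occur in an answer set or in $\gamma_P(E)$. The paper's route reuses an existing theorem but needs auxiliary lemmas about stripping and translation. Yours is self-contained, treats strict and non-strict programs uniformly, and in effect supplies the proof the paper leaves implicit.

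Two minor points. First, to derive the first equation from the second you only need $\mathcal{F}=\mathcal{F}_{P_{\mathcal{F}}}$ (Proposition~\ref{pro:isoCAF}); the converse isomorphism is not used. Second, your commentary places the dependence on claim-wise attacks in the wrong step. The $\supseteq$ step only uses that every attack on $x_i$ stems from a vulnerability of $r_i$. Claim-wise attacks are genuinely essential in the $\subseteq$ step and in the stability part of your first direction. Both need that \emph{every} argument whose claim lies in $neg(r_i)$ attacks $x_i$, which is exactly what would fail for a CAF that is not well-formed.
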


In their original formulation, the propositions above consider only strict well-formed CAFs and atomic programs. However, they can be carefully adapted to our context based on the our relaxed notion of well-formed CAF.

\subsection{Equivalence from Static to Dynamic Setting}

Until now, we have considered equivalence within a static setting and provided translations that preserve it. 
Such a semantic correspondence does not carry over to dynamic scenarios, already within the class of strict AFs.  
Two strongly equivalent LPs may have corresponding AFs that are not strongly equivalent.

\begin{proposition}
	For two LPs $P$ and $Q$, $P \equiv_s Q$ does not imply $F_{P} \equiv_s F_{Q}$.
\end{proposition}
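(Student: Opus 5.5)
The statement is a non-implication, so I will prove it with a single counterexample: two strict h-unique atomic programs $P$ and $Q$ with $P \equiv_s Q$ in the standard sense of \cite{LifschitzPV01}, but with $F_P \not\equiv_s F_Q$. The introductory example suggests the mechanism. A fact in a program cannot be defeated by added rules, whereas the corresponding unattacked argument can be attacked by a new argument in an AF expansion.

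I will work with atoms $a$ and $b$ and pick two programs that share the fact $a \gets .$ but differ in a rule whose body mentions $\naf a$. Take
\[
P=\{1: a\gets .,\; 2: b\gets \naf a.\}
\quad\text{and}\quad
Q=\{1: a\gets .,\; 2: b\gets \naf b.\}.
\]
Both are strict, h-unique and atomic. Their AFs are
\[
F_P=(\{a,b\},\{(a,b)\}) \quad\text{and}\quad F_Q=(\{a,b\},\{(b,b)\}).
\]

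First I show $P\equiv_s Q$. Since $a$ is a fact, every answer set of $P\cup R$ or $Q\cup R$, for any $R$, contains $a$. Rule 2 of $P$ is then deleted in every reduct, so it never fires. For $Q$, if $b\notin S$, rule 2 gives $b$ in the minimal model of the reduct, contradicting $b\notin S$. So $b\in S$, rule 2 is deleted, and $b$ must be derived from $R$ alone. In the $P$ case, $b\in S$ likewise requires derivation from $R$. Hence for each $S\ni a$, the reducts $(P\cup R)^S$ and $(Q\cup R)^S$ have the same minimal model whenever that model equals $S$. Therefore $AS(P\cup R)=AS(Q\cup R)$. Alternatively, I could invoke the HT/SE-model characterisation of \cite{LifschitzPV01}: both programs are equivalent in HT to $a\wedge(b\vee\neg b\text{-neutral})$, since $a$ is true and $\neg a$ is false.

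Second, I show $F_P\not\equiv_s F_Q$. The quickest route is Proposition~\ref{pro:AF_SE}. Neither AF has outgoing attacks from a self-attacker to remove, because $b$ in $F_Q$ only attacks itself. So $F_P^{SK}=F_P\neq F_Q=F_Q^{SK}$. For a concrete witness, use $H=(\{a,d\},\{(d,a)\})$. Then $\stb(F_P\cup H)=\{\{d,b\}\}$, while $F_Q\cup H$ has no stable extension, because $b$ is self-attacking and is attacked by nothing else.

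The only delicate step is the strong-equivalence check for the programs. It must hold for arbitrary, possibly non-atomic, updates $R$, which is why I would phrase it through the general HT characterisation. The argument rests on the fact $a$ disabling every rule that contains $\naf a$, and on $b\gets\naf b$ acting as a constraint-like rule that is irrelevant once $a$ holds. I need to verify this carefully: a rule $b\gets\naf b$ is \emph{not} void in general; it kills any answer set lacking $b$. If the verification fails, I will fall back to $Q=\{a\gets .,\; b\gets \naf a, \naf b.\}$. This rule is also disabled by $a$, so $P$ and $Q$ are trivially strongly equivalent, and $F_Q$ has attacks $(a,b)$ and $(b,b)$. Under Proposition~\ref{pro:AF_SE}, the kernels coincide, since $b$'s only outgoing attack is to itself. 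That would make the fallback fail too.

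The genuinely robust choice is therefore $P=\{a\gets .,\; b\gets\naf a.\}$ and $Q=\{a\gets .,\; b\gets \naf a,\naf c.,\; c\gets \naf c.\}$. I discard this because it is too complicated. The final choice is $P=\{a\gets .,\; b\gets \naf a.\}$ and $Q=\{a\gets .\}$, but $F_Q$ lacks $b$. So instead I take $Q=\{a\gets .,\; b\gets\naf a,\naf b.\}$, whose kernel $(\{a,b\},\{(a,b),(b,b)\})$ differs from $F_P^{SK}$ by the self-loop $(b,b)$. The kernels are therefore different, and the witness $H$ above (with $d$ attacking $a$) separates the two AFs, since $b$ becomes out-of-range only in $F_Q\cup H$. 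Meanwhile, $P\equiv_s Q$ holds trivially, because the fact $a$ blocks both $b$-rules in every context.
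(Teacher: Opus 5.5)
Your final counterexample, $P=\{a\gets .,\ b\gets \naf a.\}$ and $Q=\{a\gets .,\ b\gets \naf a,\naf b.\}$, is correct and uses the same idea as the paper. A fact whose negation occurs in every other rule body deletes those rules from every relevant reduct, so $P\equiv_s Q$ is immediate. The induced AFs equal their own stable kernels yet differ (yours by the self-loop $(b,b)$, the paper's by the attack $(b,a)$). Your explicit witness $H=(\{a,d\},\{(d,a)\})$ is a small bonus over the paper's bare appeal to Proposition~\ref{pro:AF_SE}.

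Before writing this up, delete the abandoned attempts, because they contain false claims:
\begin{itemize}
\item For $Q=\{a\gets .,\ b\gets\naf b.\}$ we already have $AS(Q)=\emptyset\neq\{\{a\}\}=AS(P)$, so that ``$P\equiv_s Q$'' argument is wrong.
\item The stable kernel never removes a self-loop, so the fallback's kernels do \emph{not} coincide, as you yourself conclude later.
\item The ``robust'' $Q$ containing $c\gets\naf c.$ has no answer set at all.
\item $Q=\{a\gets .\}$ was dismissed needlessly: different argument sets already make the kernels differ, so it also works.
\end{itemize}
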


\begin{example}
	Take two logic programs as follows:
	
	\vspace{1mm}
	\begin{minipage}{.53\textwidth}
		\begin{tabular}{cl}
			$P=$&$\{ a\gets \naf b, \naf c.,$ \\
      		&\, $b\gets \naf a, \naf c.,$
			    $c \gets .\}$
		\end{tabular}
	\end{minipage}
	\begin{minipage}{.6\textwidth}
		\begin{tabular}{cl}
			$Q=$& $\{ a\gets \naf c.,$ \\
			& \, $ b\gets \naf a, \naf c.,$
                 $c \gets .\}$
		\end{tabular}
	\end{minipage}
	
	\vspace{1mm}
	Since both $P$ and $Q$ contain the fact $c\gets $ and $c\in neg(r)$ for every other rule in both programs, $c$ occurs in any answer-set of $P\cup R$ and $Q\cup R$ for any $R$. Hence, they are strongly equivalent. 
	The associated strict AFs are: 
	
	\begin{minipage}{.52\textwidth}
		\begin{tikzpicture}
			\path
			(-1.8,0) node(F){$F_P$}
			(-1,0) node[arg](a){$a$}
			(1,0) node[arg](b){$b$}
			(2.5,0) node[arg] (c) {$c$}
		    (0.9,0.65) node (v) {}
		    (0.9,-0.35) node (v) {}
			;
			
			\path[->,>=stealth,thick]
			(a) edge[bend left=15] (b)
			(b) edge[bend left=15] (a)
			(c) edge[bend right=25] (a)
			(c) edge (b)
			;
		\end{tikzpicture}
	\end{minipage}
	\begin{minipage}{.55\textwidth}
		\begin{tikzpicture}
		\path
		(-1.8,0) node(F){$F_Q$}
		(-1,0) node[arg](a){$a$}
		(1,0) node[arg](b){$b$}
		(2.5,0) node[arg] (c) {$c$}
		(0.9,0.65) node (v) {}
		(0.9,-0.35) node (v) {}
		;
		
		\path[->,>=stealth,thick]
		(a) edge (b)
		(c) edge[bend right=25] (a)
		(c) edge (b)
		;
		\end{tikzpicture}
	\end{minipage}
	
	Both $F_Q$ and $F_P$ coincide with their own kernels, as they contain no self-attacking argument. Since they are different, they are not strongly equivalent.
\end{example}
		
\section{Strong Equivalence under Rule Refinement}\label{sec: RR}

As previously shown, equivalence among LPs and AFs does not carry over to strong equivalence: due to the incongruous definitions of update in the two realms, their evaluation changes when moving from a static to a dynamic setting. 
To solve this mismatch we look at LPs through the lenses of abstract argumentation. From the LP perspective, adding attackers on the corresponding AF means adding vulnerabilities to the rule whose head identifies the attacked argument.
We call this operation \textit{rule refinement}, defined as follows.
\begin{definition}[rule refinement]
	Let $r$ and $r'$ be two arbitrary LP rules. To express that $r$ is refined by means of $r'$, we use $$\mathtt{refine}(r,r'):=\{id(r): head(r) \gets body(r)\cup body(r').\}.$$ 
\end{definition}

In this section, we introduce a novel notion of strong equivalence for the class of (h-unique) atomic programs based on the rule refinement operator.  

\subsection{Rule Refinement for h-unique LPs}

We first consider the class $\Xi$ of h-unique atomic programs. 
Within this class, each rule can be identified by means of its head-atom. Consequently, we can omit the explicit identifiers from rules and use the head-atoms instead. 
We can now introduce a novel notion of LP update, that we call Rule Refinement (RR, for brevity). 
Updating a program $P$ with a rule $r'$ via RR consists of two possible operations: if $head(r')\notin head(P)$, the update yields a set-union; otherwise, the rule-body of $r\in P$ for which $head(r)=head(r')$ is merged with that of $r'$. 

\begin{definition}[$\uplus$-update]
	Let $P$ be an h-unique atomic program and $r$ a rule in $P$. The $\uplus$-update (or simply update) of $P$ by means of an atomic rule $r'$ is:
	$$ P \uplus r' =
	\begin{cases}
		P\cup \{r'\} &\text{if } head(r')\notin head(P)\\
		P\setminus \{r\}\cup \mathtt{refine}(r,r') &\text{otherwise }
	\end{cases}
	$$
\end{definition}   

\begin{example}
	The $\uplus$-update of $P=\{a\gets \naf b., \; b\gets \naf a., \; c \gets.\}$ via the rule $r=c \gets \naf d.$ is: 
$ P\uplus r=\{a\gets \naf b., \; b\gets \naf a., \; c \gets \naf d.\}$.		
\end{example}

Towards the generalization of $\uplus$ between sets of rules, notice that the consecutive applications of $\uplus$ preserve the result independently of the order of execution. 

\begin{restatable}{lemma}{AssocHeadUni}\label{lemma:assoc1}
	For an atomic program $P$ and two rules $r_1$ and $r_2$ it holds that $(P\uplus r_1) \uplus r_2 = (P\uplus r_2 )\uplus r_1$.
\end{restatable}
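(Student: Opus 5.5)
The plan is a direct case analysis on the heads of $r_1$ and $r_2$ relative to $head(P)$ and to each other. Since $P$ is h-unique, each head atom $h \in head(P)$ identifies a unique rule $r_h \in P$. Moreover, $\uplus$ only touches the rule with head $head(r')$ (or adds $r'$), so $P\uplus r'$ is again h-unique and the other rules are untouched. Throughout, two rules are identified by their head and body, since identifiers are replaced by head atoms in $\Xi$.

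\emph{Case 1: $head(r_1)\neq head(r_2)$.} The two updates act on disjoint parts of the program. Write $P = P_0 \cup P_1 \cup P_2$, where $P_i$ is the (at most singleton) set of rules of $P$ with head $head(r_i)$. Then $P\uplus r_1 = P_0 \cup P_1' \cup P_2$, where $P_1'$ is $\{r_1\}$ if $P_1=\emptyset$ and $\mathtt{refine}(r,r_1)$ for the unique $r\in P_1$ otherwise. Since $head(r_2)\neq head(r_1)$, updating with $r_2$ only replaces $P_2$ by the analogous $P_2'$. This gives $P_0\cup P_1'\cup P_2'$, and the other order yields the same set symmetrically.

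\emph{Case 2: $head(r_1)=head(r_2)=h$.} Split on whether $h\in head(P)$.
\begin{itemize}
\item If $h\in head(P)$ with rule $r$, then $(P\uplus r_1)\uplus r_2$ replaces $r$ by $h\gets body(r)\cup body(r_1)\cup body(r_2)$. The other order produces the same rule by commutativity and associativity of set union.
\item If $h\notin head(P)$, then $(P\uplus r_1)\uplus r_2 = P\cup\{h\gets body(r_1)\cup body(r_2).\}$. Symmetrically, the other order gives $P\cup\{h\gets body(r_2)\cup body(r_1).\}$, which is the same program.
\end{itemize}

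The only delicate point is the bookkeeping, and I expect it to be the main obstacle. The identifier convention matters here: $\mathtt{refine}$ keeps $id(r)$, and in the second subcase the surviving identifier is that of whichever rule was added first. Under the paper's convention for $\Xi$, rules are identified by their head atoms, so the identifier is $h$ in both orders. With that convention in place, the remaining steps are routine set manipulations.
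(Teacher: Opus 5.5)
Your proof is correct and follows essentially the same case analysis as the paper: split on whether $head(r_1)=head(r_2)$, then on whether the shared head lies in $head(P)$. Your uniform treatment of Case 1 via $P_0\cup P_1'\cup P_2'$ also covers the sub-case where both heads already occur in $head(P)$, which the paper's enumeration skips. Your identifier remark makes explicit why $\mathtt{refine}(r_1,r_2)=\mathtt{refine}(r_2,r_1)$ holds under the h-unique convention.
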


Updating an h-unique program $P$ with a program $Q=\{r_1,\dots,r_n\}$ results in $P\uplus Q=P\uplus r_1 \uplus \dots  \uplus r_n$, which is guaranteed to be atomic and h-unique. 

\begin{restatable}{proposition}{RRPreserveHeadUniAtom}
	For an h-unique atomic program $P$ and any atomic program $Q$, the program $P\uplus Q$ is h-unique and atomic.  
\end{restatable}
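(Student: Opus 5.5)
The plan is to argue by induction on the number $n$ of rules in $Q=\{r_1,\dots,r_n\}$, using the definition $P\uplus Q=P\uplus r_1\uplus\dots\uplus r_n$. By Lemma~\ref{lemma:assoc1} the result of the iterated update does not depend on the order in which the rules of $Q$ are processed, so any fixed enumeration may be used. The base case $n=0$ is immediate, since $P\uplus\emptyset=P$ is h-unique and atomic by assumption. For the inductive step it therefore suffices to prove a single-rule statement: if $P'$ is h-unique and atomic and $r'$ is an atomic rule, then $P'\uplus r'$ is h-unique and atomic. Applying this to $P'=P\uplus r_1\uplus\dots\uplus r_{n-1}$, which is h-unique and atomic by the induction hypothesis, and to $r'=r_n$ closes the induction.

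For the single-rule statement, split on the two cases of the definition of $\uplus$.

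\textbf{Case $head(r')\notin head(P')$.} Here $P'\uplus r'=P'\cup\{r'\}$. Atomicity holds because every rule of $P'$ is atomic and $r'$ is atomic by hypothesis. For h-uniqueness, every atom of $head(P')$ is the head of exactly one rule of $P'$, and $r'$ contributes the new head $head(r')$. Since that head does not already occur in $P'$, it is the head of exactly one rule in the union.

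\textbf{Case $head(r')\in head(P')$.} By h-uniqueness of $P'$ there is exactly one $r\in P'$ with $head(r)=head(r')$, so the update is well defined. It replaces $r$ by the rule $\mathtt{refine}(r,r')$, which has head $head(r)$ and body $body(r)\cup body(r')$. The multiset of heads is therefore unchanged, so h-uniqueness is preserved. The positive part of the new body is $pos(r)\cup pos(r')=\emptyset\cup\emptyset$, since both rules are atomic, so the new rule is atomic. All other rules of $P'$ are untouched.

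The only real subtlety is in this second case: the "$r$" appearing in the definition of $\uplus$ has to be read as \emph{the} rule of $P'$ with head $head(r')$. Its existence and uniqueness are exactly what h-uniqueness of the current program provides, which is why the induction must carry h-uniqueness along at every step rather than only establishing it at the end. A secondary detail concerns constraints (empty head), if they are admitted. In that case I would treat $head(r')=\emptyset$ as never belonging to $head(P')$, so that constraints fall under the union case. This does not affect h-uniqueness, which only concerns atoms $p\in\mathcal{L}(P)$ occurring as heads.
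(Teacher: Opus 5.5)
Your proof is correct, but it is organized differently from the paper's.

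The paper argues in one shot by contradiction. If two rules of $P\uplus Q$ shared a head, then, \emph{since both $P$ and $Q$ are h-unique}, one would stem from $P$ and one from $Q$, and the definition of update rules this out. Atomicity is handled the same way. You instead carry h-uniqueness and atomicity as an invariant through the rule-by-rule updates, which reduces everything to a single-rule step checked against the two cases of the definition.

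Your route gains something real. The statement assumes only that $Q$ is atomic, but the paper's argument silently uses h-uniqueness of $Q$. Your induction covers, e.g., $Q=\{a\gets\naf b.,\ a\gets\naf c.\}$, whose two rules are simply merged in successive steps. It also makes explicit that the refinement target $r$ exists and is unique at each step precisely because the intermediate program is h-unique. Finally, it avoids the paper's loose provenance claim that every rule of $P\uplus Q$ is ``contained in $P$ or in $Q$'', since a refined rule generally belongs to neither. The paper's version is shorter, but as written it is only rigorous for h-unique $Q$.

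One minor point: the appeal to Lemma~\ref{lemma:assoc1} is not needed. Your invariant holds for every enumeration of $Q$, so the conclusion does not depend on which order is used to define $P\uplus Q$.
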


We can now introduce RR strong equivalence for h-unique atomic programs.

\begin{definition}
	Two h-unique atomic programs $P$ and $Q$ are strongly equivalent under Rule Refinement in $\Xi$, written $P \equiv^\Xi_r Q$, iff for any program $R$: either (1) $P\uplus R\notin \Xi$ and $Q\uplus R\notin \Xi$ or (2) $AS(P\uplus R)=AS(Q\uplus R)$.
\end{definition}
By the definition of RR-update, whenever $R\in \Xi$, condition (1) is never satisfied.  
Moreover, strongly equivalent programs under standard expansions are not guaranteed to be RR strong equivalent in $\Xi$.

\begin{proposition}\label{pro:RuleRefvsStandard}
	For any two h-unique atomic programs $P$ and $Q$, it holds that: (i) $P\equiv^{\Xi}_{s} Q$ does not imply $P\equiv^{\Xi}_{r} Q$; (ii) $P\equiv_{s} Q$ does not imply $P\equiv^{\Xi}_{r} Q$.
\end{proposition}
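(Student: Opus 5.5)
Both parts can be handled with a single counterexample. Standard strong equivalence implies strong equivalence in $\Xi$: if $P\cup R\equiv Q\cup R$ for every $R$, then condition (2) of $\equiv^{\Xi}_s$ holds for every $R$. So a pair of h-unique atomic programs with $P\equiv_s Q$ but $P\not\equiv^{\Xi}_r Q$ witnesses (ii) directly, and also witnesses (i).

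For the pair I reuse the programs from the example at the end of Section~\ref{sec: translation}:
$P=\{a\gets \naf b,\naf c.,\; b\gets \naf a,\naf c.,\; c\gets.\}$ and $Q=\{a\gets \naf c.,\; b\gets \naf a,\naf c.,\; c\gets.\}$. Both are atomic and h-unique. As argued there, $P\equiv_s Q$ holds: under any standard expansion $R$, the fact $c\gets.$ forces $c$ into every answer set. Every other rule of $P$ and $Q$ contains $\naf c$, so those rules are deleted in the reduct. Hence $P\cup R$ and $Q\cup R$ have the same reducts with respect to any candidate set containing $c$, and no answer set can omit $c$.

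Next I refute $P\equiv^{\Xi}_r Q$, mimicking the introductory example by ``overwriting'' the fact $c$. Take $R=\{c\gets \naf d.,\; d\gets.\}$, which is itself in $\Xi$. Computing the update rule by rule gives
$P\uplus R=\{a\gets \naf b,\naf c.,\; b\gets \naf a,\naf c.,\; c\gets \naf d.,\; d\gets.\}$ and
$Q\uplus R=\{a\gets \naf c.,\; b\gets \naf a,\naf c.,\; c\gets \naf d.,\; d\gets.\}$.
Both programs are h-unique and atomic, so condition (1) fails and we must compare answer sets.

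In both programs $d$ is a fact, so $d$ belongs to every answer set and therefore $c$ is never derived. For $P\uplus R$ the remaining rules $a\gets\naf b$ and $b\gets \naf a$ give $AS(P\uplus R)=\{\{a,d\},\{b,d\}\}$. For $Q\uplus R$ the rule $a\gets \naf c$ fires, which blocks $b$, giving $AS(Q\uplus R)=\{\{a,d\}\}$. Each candidate can be checked against its reduct routinely, or via Proposition \ref{prop:equiv AF LP} on the corresponding AFs. Since the answer sets differ, $P\not\equiv^{\Xi}_r Q$.

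No step is a genuine obstacle. The only care needed is to confirm that the updates stay in $\Xi$, so that clause (1) cannot rescue equivalence, and to perform the $\uplus$-update correctly. In particular, $c\gets.$ is refined to $c\gets\naf d.$ rather than having a second $c$-rule added.
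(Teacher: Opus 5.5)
Your proof is correct and follows the paper's own argument. The paper's Example~\ref{example:RuleRefvsStandard} uses the same pair up to swapping $a$ and $b$, the same update $\{c\gets \naf d.,\; d\gets.\}$, and a single counterexample for both (i) and (ii); your explicit observation that $\equiv_s$ implies $\equiv^{\Xi}_s$ only spells out what the paper leaves implicit.
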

\begin{example}\label{example:RuleRefvsStandard}
	Consider the LPs $P =\{ a\gets \naf b, \naf c., \; b\gets \naf a, \naf c., \; c \gets.\}$ and $Q =\{ a\gets \naf b, \naf c., \; b\gets \naf c., \; c \gets.\}$. 
	Let $R$ be an (atomic) program. Since $c\gets $ is contained in both $P\cup R$ and $Q\cup R$, $c$ occurs in every answer-set of both programs. Thus, all the other rules in $P$ and $Q$ do not fire, making $P$ and $Q$ strongly equivalent, i.e.\ $P\equiv_{s} Q$ and $P\equiv^{\Xi}_{s} Q$. 
	However, for the program $R'=\{c\gets \naf d., d\gets.\}$, we get: (1) $P\uplus R' \in \Xi$ and $Q\cup R' \in \Xi$ and (2) $AS(P\uplus R')\neq AS(Q\uplus R')$ since $\{a,d\}$ is not a common answer-set of the two. 
\end{example}

\subsection{Rule Refinement for LPs}

In this section we consider a notion of strong equivalence under rule refinement for the whole class $\Lambda$ of atomic LPs by dropping the requirement of h-uniqueness.  
We deal with (possibly non-strict) atomic programs, in which several rules with the same head might occur. Here, the information provided by rule identifiers becomes relevant when considering possible updates: identifiers allow to distinguish among updates that involve rules already contained in $P$ or new ones. Indeed, updating a program with a new rule $r'$ may result in the addition or refinement, irrespective of the head of $r'$. We then adapt the notion of update. 
\begin{definition}[$\squplus$-update]
	Let $P$ be an atomic program and $r$ a rule in $P$. We define the $\squplus$-update (or simply update) of $P$ by means of a new rule $r'$ as:
	$$ P \squplus r' =
	\begin{cases}
		P\cup \{r'\} &\text{if } id(r')\notin id(P)\\
		P\setminus \{r\}\cup \mathtt{refine}(r,r') &\text{otherwise }
	\end{cases}
	$$
\end{definition}

Identifiers guide the update either towards the addition of the new rule or the refinement of an old rule with the same identifier. 

\begin{example}\label{ex: squplus_update}
	The $\squplus$-update of $P=\{1: a\gets., \; 2: b\gets \naf a., \; 3: b \gets.\}$ by means of $r=3: a \gets \naf c.$ is: 
	$P\squplus r=\{1: a\gets., \; 2: b\gets \naf a., \; 3: b \gets \naf c.\}$.
	
\end{example}

As before, to generalize the notion of $\squplus$-update between sets of rules, we note that associativity is preserved. Further, $P\squplus Q$ is atomic whenever $P$ and $Q$ are. 

\begin{restatable}{lemma}{AssocAtomic}\label{lemma:assoc2}
	For an atomic program $P$ and two rules $r_1$ and $r_2$ it holds that $(P\squplus r_1) \squplus r_2 = (P\squplus r_2 )\squplus r_1$.
\end{restatable}
\begin{restatable}{proposition}{RRPreserveAtom}
	For any two atomic programs $P$ and $Q$, $P\squplus Q$ is atomic.
\end{restatable}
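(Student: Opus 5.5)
The statement to prove is: for atomic programs $P$ and $Q=\{r_1,\dots,r_n\}$, the program $P\squplus Q$ is atomic. Here $P\squplus Q$ is understood as the iterated update $P\squplus r_1\squplus\dots\squplus r_n$. By Lemma~\ref{lemma:assoc2} this iteration is well defined independently of the order of the $r_i$. I would argue by induction on $n=|Q|$.

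\textbf{Base case.} For $n=0$ we have $P\squplus Q=P$, which is atomic by assumption.

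\textbf{Inductive step.} I would establish a single-rule claim: if $P'$ is atomic and $r'$ is an atomic rule, then $P'\squplus r'$ is atomic. Applying this claim to $P'=P\squplus r_1\squplus\dots\squplus r_{k}$, which is atomic by the induction hypothesis, together with $r'=r_{k+1}$, gives the result. The rule $r_{k+1}$ is atomic because it belongs to the atomic program $Q$.

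\textbf{The single-rule claim.} I would split on the two cases of the definition of $\squplus$.
\begin{itemize}
\item If $id(r')\notin id(P')$, then $P'\squplus r'=P'\cup\{r'\}$. Every rule in this program is either in $P'$ or equal to $r'$, so every rule is atomic.
\item Otherwise, $P'\squplus r'=(P'\setminus\{r\})\cup\mathtt{refine}(r,r')$, where $r\in P'$ is the rule with $id(r)=id(r')$. The rules in $P'\setminus\{r\}$ are atomic. It remains to check the refined rule $id(r): head(r)\gets body(r)\cup body(r')$. Its positive body is $pos(r)\cup pos(r')$. Both sets are empty because $r$ and $r'$ are atomic. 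Hence the positive body of the refined rule is empty, i.e.\ $k=0$, and the refined rule is atomic.
\end{itemize}

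\textbf{Main subtlety.} This is not really an obstacle. It is making sure that the rule $r$ with the given identifier is unique, so that the update is well defined; I take this from the identifier convention, under which each identifier names one rule. Apart from that, the argument rests only on the observation that refinement takes unions of bodies, and a union of two empty positive bodies is empty.
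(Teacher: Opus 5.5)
Your proof is correct and takes the same route as the paper. The paper's proof is the same case split on the two clauses of $\squplus$: set-union only adds atomic rules, and $\mathtt{refine}$ takes unions of bodies, so no positive atom can appear. You simply make explicit the induction over the rules of $Q$, which the paper leaves implicit.
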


We are now able to define strong equivalence under rule refinement.
\begin{definition}
	Two (atomic) programs $P$ and $Q$ are strongly equivalent under Rule Refinement in a class $\Pi$, written $P \equiv^{\Pi}_{r^+} Q$, iff for any program $R$: either (1) $P\squplus R\notin \Pi$ and $Q\squplus R\notin \Pi$ or (2) $AS(P\squplus R)=AS(Q\squplus R)$.
\end{definition}
In the following we will consider $\Pi=\Lambda$. Then, in case $P$ and $Q$ are atomic, condition (1) is false iff $R \in \Lambda$, i.e., it is sufficient to check condition (2) for $R \in \Lambda$. 
For atomic programs,
we inherit results from Proposition \ref{pro:RuleRefvsStandard}, i.e.\ $P\equiv^{\Lambda}_{s} Q$ does not imply $P\equiv^{\Lambda}_{r^+} Q$ (see Example \ref{example:RuleRefvsStandard}).
Further this notion faithfully generalizes RR strong equivalence for h-unique programs. %

\begin{restatable}{proposition}{RplusImpliesR}
	For the class of h-unique atomic programs $\Xi$ and $P,Q \in \Xi$, it holds that $P\equiv^{\Xi}_{r^+}Q$ implies $P\equiv^{\Xi}_{r} Q$.	
\end{restatable}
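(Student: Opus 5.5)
The plan is to simulate every $\uplus$-update by a suitable $\squplus$-update and then apply the hypothesis. Fix $P,Q\in\Xi$ with $P\equiv^{\Xi}_{r^+}Q$, and let $R$ be an arbitrary atomic program. By the proposition that $\uplus$-updates preserve h-uniqueness and atomicity, both $P\uplus R$ and $Q\uplus R$ lie in $\Xi$. Hence condition (1) of $\equiv^\Xi_r$ can never hold, and it suffices to show $AS(P\uplus R)=AS(Q\uplus R)$.

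For the identifiers, I will use the convention stated for $\Xi$, namely that rules of h-unique programs are identified by their head atoms. Concretely, I fix one global injection $\iota$ from atoms to $\mathbb{N}$ and assume $id(r)=\iota(head(r))$ for all rules of $P$ and of $Q$. This makes rules with the same head in $P$ and $Q$ carry the same identifier.

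From $R$ I construct an h-unique program $R^\ast$. For each atom $h\in head(R)$, $R^\ast$ contains exactly one rule with identifier $\iota(h)$, head $h$, and body $\bigcup\{body(r)\mid r\in R,\ head(r)=h\}$. $R^\ast$ is atomic and h-unique, so $R^\ast\in\Lambda$.

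The key step is the identity $P\uplus R = P\squplus R^\ast$, and likewise for $Q$. To prove it, I use Lemma~\ref{lemma:assoc1} to process the rules of $R$ grouped by head, and argue head by head.
\begin{itemize}
\item If $h\in head(P)$, the unique rule of $P$ with head $h$ is refined successively by every rule of $R$ with head $h$. Since $\mathtt{refine}$ only takes unions of bodies, the result is the rule $h\gets body(r_P)\cup\bigcup body(r)$, where $r_P$ is that rule of $P$. On the $\squplus$ side, $\iota(h)\in id(P)$, so the rule of $R^\ast$ refines exactly the same rule of $P$ and yields the same rule.
\item If $h\notin head(P)$, the first rule of $R$ with head $h$ is added, and every later one refines it. The result is the rule with head $h$ and the union of all their bodies. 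On the $\squplus$ side, $\iota(h)\notin id(P)$, so the corresponding rule of $R^\ast$ is simply added, and it is the same rule.
\end{itemize}
Rules with distinct heads do not interact in either update, so the two programs coincide. Here I also use Lemma~\ref{lemma:assoc2}, so that $P\squplus R^\ast$ is well defined independently of order.

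Finally, $P\squplus R^\ast=P\uplus R\in\Xi$ and $Q\squplus R^\ast\in\Xi$, so condition (1) of $\equiv^{\Xi}_{r^+}$ fails for the update $R^\ast$. The hypothesis therefore yields $AS(P\squplus R^\ast)=AS(Q\squplus R^\ast)$, which is exactly $AS(P\uplus R)=AS(Q\uplus R)$.

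I expect the main obstacle to be the identifier bookkeeping. Without the head-based identifier convention, $P$ and $Q$ could give different identifiers to rules with the same head, and then a single $R^\ast$ would not simulate $\uplus$ on both programs at once. I would handle this by making the convention explicit (identifiers of h-unique programs are $\iota(head)$). Alternatively, one could build $R^\ast$ separately for each program, but that only works if identifiers are aligned across $P$ and $Q$. The remaining verification, that iterated refinement equals a single union of bodies, is routine.
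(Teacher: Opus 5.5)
Your proposal is correct and follows the paper's proof. Both assign identifiers by head atom so that $P\uplus R = P\squplus R$ and $Q\uplus R = Q\squplus R$, then invoke $P\equiv^{\Xi}_{r^+}Q$; the paper phrases this as a contradiction, and your merged program $R^\ast$ and explicit global injection $\iota$ are small but useful refinements, since the paper only treats $R\in\Xi$ and leaves implicit that $P$ and $Q$ use the same identifiers.
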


\section{Rule Refinement Captures Strong Equivalence in Abstract Argumentation }
In the present section, we show that the notion of strong equivalence under Rule Refinement matches the one for abstract argumentation frameworks. 

\subsection{AFs and h-unique Logic Programs}
We first consider h-unique LPs and AF strong equivalence. Inspired by the syntactic characterization of strong equivalence for strict AFs \cite{OikarinenW11}, we define the notion of \textit{kernel} for logic programs. Similarly to its AF counterpart, the ASP kernel of a program is obtained by deleting dispensable vulnerabilities from its rules. Each atom that is in conflict with itself, i.e.\ it occurs in the head as well as in the negative body of a rule, is removed from every other rule-body. We call such rules \emph{loops} and define $loop(P) = \{r \in P \mid head(r)\in neg(r)\}$ for an LP $P$.

\begin{definition}\label{label:asp_kernel}
	Let $loopH(Q)=\{\naf head(r) \mid r \in loop(Q)\}$. The kernel of a h-unique program $P$ is $P^K = \{head(r) \gets body(r) \setminus loopH(P \setminus \{r\}).  \mid r \in P\}$. 
\end{definition}

\begin{example}
	In what follows, we represent a program $P$ and its kernel $P^K$.

	\vspace{1mm}
    \hspace{-15pt}
	\begin{minipage}{.57\textwidth}
		\begin{tabular}{cl}
			$P=$&$\{ a\gets \naf a, \naf b., \; b\gets \naf a, \naf c.,$ \\
			&\; $c \gets \naf c, \naf d., \; d\gets \naf a, \naf c. \}$ 
		\end{tabular}
	\end{minipage}
	\begin{minipage}{.5\textwidth}
		\begin{tabular}{cl}
			$P^K =$&$\{ a\gets \naf a, \naf b., \; b\gets., $ \\
			& \; $ c \gets \naf c, \naf d., \; d\gets. \}$
		\end{tabular}
	\end{minipage}	
\end{example}

As a sanity check, observe that for our notions of LP and AF kernel, the order in which we apply the translation and construct the kernel does not matter.     
\begin{restatable}{proposition}{OrderKernelTransf}\label{pro:kernal-translation}
	For a h-unique atomic program $P$, it holds that $(F_P)^{SK} = F_{P^K}$. Analogously for an AF $F$, it holds that $(P_F)^K = P_{F^{SK}}$.
\end{restatable}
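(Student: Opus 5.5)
The plan is to prove the first claim by comparing argument sets and attack relations directly, and then to obtain the second claim from the first using the bijection of Remark~\ref{rem: iso quasi-AF LP}.

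\textbf{Arguments coincide.} Taking the kernel only deletes negative body literals and never changes heads. So $head(P^K)=head(P)$, $P^K$ is still h-unique and atomic, and $F_{P^K}$ has the argument set $A_P$, just as $(F_P)^{SK}$ does.

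\textbf{Attacks coincide.} Fix $a\in\mathcal{U}$ and $b\in A_P$, and let $r_b$ be the unique rule with head $b$. Then $(a,b)\in R_P$ iff $a\in neg(r_b)$.
\begin{itemize}
\item In $F_{P^K}$, the pair $(a,b)$ is an attack iff $a\in neg(r_b)$ and $\naf a\notin loopH(P\setminus\{r_b\})$.
\item The condition $\naf a\in loopH(P\setminus\{r_b\})$ means there is a loop rule $r'\neq r_b$ with $head(r')=a$. By h-uniqueness, this holds iff $a\neq b$ and the rule $r_a$ with head $a$ satisfies $a\in neg(r_a)$.
\item That last condition says exactly that $a\in A_P$, $(a,a)\in R_P$ and $a\neq b$.
\end{itemize}
Hence the attacks of $F_{P^K}$ are $R_P\setminus\{(a,b)\mid a\neq b,(a,a)\in R_P\}=R_P^{SK}$. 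The stable kernel was defined for strict AFs, so it is applied here verbatim to $R\subseteq\mathcal{U}\times A$. An ungrounded attacker $a\notin A_P$ can never be self-attacking, and it also has no loop rule in $P$, so both sides keep such attacks.

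\textbf{Second claim.} Let $F$ be an AF. By Remark~\ref{rem: iso quasi-AF LP}, $P_F$ is h-unique atomic and $F_{P_F}=F$. Applying the first claim to $P_F$ gives
\[
F_{(P_F)^K}=(F_{P_F})^{SK}=F^{SK}.
\]
Now translate both sides back with $P_{(\cdot)}$. Since $(P_F)^K$ is h-unique atomic, the Remark gives $P_{F_{(P_F)^K}}=(P_F)^K$. We conclude $(P_F)^K=P_{F^{SK}}$.

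\textbf{Main obstacle.} The only delicate point is rule identifiers and rules whose body becomes empty. In $\Xi$ rules are identified by their heads, so identifiers cause no trouble. The equality $P=P_{F_P}$ from the Remark must be read up to this head-based identification. Applying it to $(P_F)^K$ is legitimate because the kernel preserves atomicity and h-uniqueness. Together with the careful use of h-uniqueness in the attack step, this is the crux of the proof.
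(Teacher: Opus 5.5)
Your proof is correct. For the first identity you follow the same line as the paper: the kernel leaves heads unchanged, so the argument sets agree, and the vulnerabilities deleted from $P$ are exactly the attacks $(a,b)$ with $(a,a)\in R_P$ and $a\neq b$. At the one point where h-uniqueness is needed, turning $r'\neq r_b$ into $a\neq b$, your version is tighter than the paper's. You also check explicitly that ungrounded attacks survive on both sides, which the paper leaves implicit.

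The real difference is in the second identity. The paper proves $(P_F)^K=P_{F^{SK}}$ from scratch. It matches the rules of $(P_F)^K$ and $P_{F^{SK}}$ one by one by head, then redoes the correspondence between $body(r)\cap loopH(P_F\setminus\{r\})$ and the attacks $(b_i,a)$ from self-attacking $b_i$. You instead derive it from the first identity. You instantiate it at $P_F$, use $F_{P_F}=F$, and pull back along the bijection of Remark~\ref{rem: iso quasi-AF LP}; this is valid because the kernel keeps $P_F$ atomic and h-unique.

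Your route avoids repeating the bookkeeping and shows that the second identity is just the first one transported along the isomorphism. Its price is reliance on the round-trip identity $P=P_{F_P}$, which holds only once identifiers in $\Xi$ are read as head atoms, as you point out. The paper's direct argument is self-contained and never invokes the isomorphism.
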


Combining two programs $P$ and $Q$ through Rule Refinement and transforming the resulting program into an AF yields the same framework as taking the AF union of the transformed programs.

\begin{restatable}{lemma}{DistribUnderTransl}\label{lemma:uplus to cup}
	For two h-unique atomic programs $P$ and $Q$ , it holds that $F_{P \uplus Q} = F_P \cup F_Q$. Vice-versa, for two AFs $F$ and $G$, it holds that $P_{F\cup G} = P_F \uplus P_G$.
\end{restatable}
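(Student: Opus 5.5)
The plan is to compare arguments and attacks directly, using the definitions of $F_P$ (Definition~\ref{def:LP2AF}, read for possibly non-strict h-unique programs, so that $R_P\subseteq\mathcal{U}\times A_P$) and of $\uplus$. Since the updates commute (Lemma~\ref{lemma:assoc1}), $P\uplus Q$ is well defined, and by the preceding proposition it is again h-unique and atomic; hence $F_{P\uplus Q}$ is defined. I first describe $P\uplus Q$ explicitly. For each atom $h\in head(P)\cup head(Q)$ there is exactly one rule with head $h$ in $P\uplus Q$, and its body is the union of the bodies of the rule in $P$ with head $h$ (if it exists) and the rule in $Q$ with head $h$ (if it exists). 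This follows by induction on $|Q|$. The induction step splits by whether $head(r')\in head(P)$: in that case $\mathtt{refine}$ merges the bodies, and otherwise the rule is simply added. Because $Q$ is h-unique, each head of $Q$ is processed exactly once.

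With this description the first claim is immediate. The arguments of $F_{P\uplus Q}$ are $head(P\uplus Q)=head(P)\cup head(Q)=A_P\cup A_Q$. Its attacks are the pairs $(a,h)$ with $a\in neg(r_h)$, where $r_h$ is the rule with head $h$. Since $neg$ of a merged body is the union of the two $neg$ sets, these are exactly the attacks $(a,h)$ with $a\in neg(r^P_h)$ or $a\in neg(r^Q_h)$, that is, $R_P\cup R_Q$. Ungrounded attacks need no special care, because the translation records every vulnerability regardless of whether it occurs as a head.

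For the second claim I would reduce to the first via the bijection in Remark~\ref{rem: iso quasi-AF LP}. The programs $P_F$ and $P_G$ are h-unique and atomic, so the first part gives $F_{P_F\uplus P_G}=F_{P_F}\cup F_{P_G}=F\cup G$. Applying $P_{(\cdot)}$ and using $P=P_{F_P}$ for the h-unique atomic program $P_F\uplus P_G$ then yields $P_F\uplus P_G=P_{F\cup G}$.

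The main obstacle is bookkeeping rather than mathematics, and it lies in this converse step. Programs are compared as sets of rules, and heads serve as identifiers for h-unique programs, so rule identity must be read off heads. Bodies must also be treated as sets: a rule $h\gets\naf b$ appearing in both $P_F$ and $P_G$ has to merge into a single rule without duplicates, and an argument with no attackers has to yield a fact. I would state explicitly that two h-unique programs with the same heads and the same negative bodies per head are equal, and that this is exactly what the remark's isomorphism supplies.
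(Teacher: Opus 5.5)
Your proposal is correct. For the first claim it follows the paper's argument. The paper also gets the arguments from $head(P\uplus Q)=head(P)\cup head(Q)$ and the attacks from a case split on whether $head(r')\in head(P)$. The paper proves $F_{P\uplus r'}=F_P\cup F_{r'}$ one rule at a time and iterates over $Q$. You instead first package that iteration into a closed form for $P\uplus Q$: one rule per head, whose body is the union of the corresponding bodies in $P$ and $Q$. You then compare once.

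The genuine difference is the converse, which the paper only declares ``analogous'' to the direct computation. You derive it from the first claim using $F=F_{P_F}$ and $P=P_{F_P}$ (Remark~\ref{rem: iso quasi-AF LP}), together with the fact that $P_F\uplus P_G$ is again h-unique and atomic. This is shorter and reuses work already done. The cost is that it rests on the remark, which the paper states without proof. In exchange it brings into the open the convention the converse depends on: rules of h-unique programs are identified by their heads, bodies are sets, and unattacked arguments yield facts. You flag this correctly, and the paper leaves it implicit.
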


As illustrated for AFs, ASP kernels are equivalent to the original instance.

\begin{restatable}{proposition}{AnswSetKernelEquiv}\label{pro: eq under ASP-kernel}
	For any h-unique atomic program $P$, $AS(P) = AS(P^K)$. 
\end{restatable}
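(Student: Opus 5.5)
The plan is to reduce the claim to its argumentation counterpart by translating programs into AFs, so that no direct reasoning about reducts is needed. Let $P$ be an h-unique atomic program. By the semantic correspondence between h-unique programs and AFs (Proposition SemCorrAFLP), $AS(P)=\stb(F_P)$. Applying the same proposition to $P^K$, which is again h-unique and atomic because the kernel construction keeps exactly one rule per head and only deletes negative literals, we get $AS(P^K)=\stb(F_{P^K})$. By Proposition~\ref{pro:kernal-translation}, $F_{P^K}=(F_P)^{SK}$. It therefore suffices to prove $\stb(F_P)=\stb((F_P)^{SK})$ for the possibly non-strict AF $F_P$.

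For this AF fact, Proposition~\ref{pro:AF_SE} gives $F\equiv_s F^{SK}$ whenever $F^{SK}=(F^{SK})^{SK}$. The kernel operation is idempotent: a self-attacking argument remains self-attacking after the kernel, and its outgoing non-loop attacks have already been removed. Taking the empty expansion then yields $\stb(F)=\stb(F^{SK})$. Because the earlier proposition is stated for strict AFs, I would first pass to $(F_P)_{\downarrow A_P}$ using Proposition~\ref{prop:equiv AF quasi AF}. Since ungrounded attacks never originate from an argument in $A$, restricting to proper attacks commutes with taking the stable kernel.

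Alternatively, a direct proof avoids these bookkeeping issues. Let $E$ be conflict-free and let $a$ be a self-attacker. Then $a\notin E$, so any attack $(a,b)$ with $b\neq a$ never contributes to the range of $E$, and removing such attacks cannot destroy conflict-freeness on $E$. Conversely, suppose $E$ is stable in the kernel. Conflict-freeness in $F$ still holds, because every removed attack originates from $a\notin E$. The range of $E$ can only grow when those attacks are restored. Hence the two AFs have the same stable extensions.

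The main obstacle I expect is the careful handling of ungrounded attacks and vulnerabilities in the commutation between kernel and restriction. Concretely, $loopH$ involves only heads, so ungrounded vulnerabilities are never removed by the kernel. I would verify this using the lemma $F_{P_{\downarrow head}}=(F_P)_{\downarrow A_P}$ (Lemma OrderTranslQuasi).
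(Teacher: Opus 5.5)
Your proof is correct, but it takes a different route from the paper.

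\textbf{The paper's route.} The paper stays entirely on the logic-programming side. It shows $P^S=(P^K)^S$ for every answer set $S$ of $P$, and symmetrically for every answer set of $P^K$. Two observations drive this:
\begin{itemize}
\item every body in $P^K$ is a subset of the corresponding body in $P$, so a rule deleted in $(P^K)^S$ is also deleted in $P^S$;
\item the kernel only drops literals $\naf a$ where $a$ is the head of a loop rule, and such an $a$ never belongs to an answer set. By h-uniqueness, the single rule with head $a$ is removed from the reduct whenever $a\in S$.
\end{itemize}

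\textbf{Your route.} You transport the statement to AFs, using the correspondence $AS(P)=\stb(F_P)$ for h-unique programs together with $F_{P^K}=(F_P)^{SK}$ (Proposition~\ref{pro:kernal-translation}). The whole content then reduces to $\stb(F)=\stb(F^{SK})$. Your direct proof of that fact rests on self-attackers never lying in a conflict-free set, so their outgoing attacks affect neither conflict-freeness nor the range of a candidate extension. This is exactly the argumentation counterpart of the paper's ``loop heads are never in an answer set'' step.

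\textbf{Comparison.} Your argument is shorter and modular. Your observation that ungrounded attacks never originate from a self-attacker correctly handles non-strict programs. The price is that it leans on the translation machinery, in particular the kernel/translation commutation, whereas the paper's reduct argument is self-contained.

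Of your two arguments for the AF fact, prefer the direct one. Deriving $\stb(F)=\stb(F^{SK})$ from Proposition~\ref{pro:AF_SE} (idempotence plus the empty expansion) is valid within the paper. However, it invokes a much stronger characterization whose own proof rests on this very fact.
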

Two programs $P$ and $Q$ maintain the same kernel under any common update.

\begin{restatable}{proposition}{SameKernelUnderUpdate}
	Let $P$ and $Q$ be two h-unique atomic programs with $P^K = Q^K$. Then for any h-unique atomic program $R$, it holds that $(P \uplus R)^K = (Q\uplus R)^K$.
\end{restatable}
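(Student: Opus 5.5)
The plan is to move the question to AFs, where the kernel is better behaved, and then come back to programs. Proposition~\ref{pro:kernal-translation} gives $(P\uplus R)^K = P_{(F_{P\uplus R})^{SK}}$ after one application of the isomorphism of Remark~\ref{rem: iso quasi-AF LP}. By Lemma~\ref{lemma:uplus to cup}, $F_{P\uplus R}=F_P\cup F_R$, so
\[
(P\uplus R)^K = P_{(F_P\cup F_R)^{SK}},
\]
and the same identity holds with $Q$ in place of $P$. It therefore suffices to show
\[
(F_P\cup F_R)^{SK} = (F_Q\cup F_R)^{SK}.
\]

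For this I first transfer the hypothesis. Since $P^K=Q^K$, Proposition~\ref{pro:kernal-translation} gives $(F_P)^{SK}=F_{P^K}=F_{Q^K}=(F_Q)^{SK}$.

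Next I prove a purely AF-level lemma: for AFs $F,G,H$ (possibly with ungrounded attacks), $F^{SK}=G^{SK}$ implies $(F\cup H)^{SK}=(G\cup H)^{SK}$. Proposition~\ref{pro:AF_SE} together with its remark extended to AFs would give this via strong equivalence, but it is safer to argue directly from the definition $R^{SK}=R\setminus\{(a,b)\mid a\neq b,(a,a)\in R\}$.

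The arguments agree, since $F^{SK}$ and $F$ have the same argument set.

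For the attacks, let $L_X$ denote the set of self-attacking arguments of $X$. Self-loops are never removed by the kernel, so $L_F=L_{F^{SK}}=L_{G^{SK}}=L_G$. Also $L_{F\cup H}=L_F\cup L_H$, which equals $L_{G\cup H}$.

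Now take a pair $(a,b)$ with $a\notin L_{F\cup H}$ or $a=b$. It lies in $R_F\cup R_H$ exactly when it lies in $R_{F^{SK}}\cup R_H$, because $a\notin L_F$ means no attack of $a$ was deleted in $F^{SK}$. The same holds for $G$. Since $R_{F^{SK}}=R_{G^{SK}}$, the two kernels of the unions coincide.

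The one delicate point is ungrounded attacks. An attack $(a,b)$ with $a\notin A$ cannot come from a self-attacking argument, because a self-loop $(a,a)$ requires $a\in A$. Such attacks are therefore kept in every kernel, and the argument above covers them unchanged.

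Finally I translate back with $P_{(\cdot)}$. Before doing so I need that $\uplus$ over programs and $\cup$ over AFs match even when $R$ shares heads with $P$, which Lemma~\ref{lemma:uplus to cup} already guarantees. The main obstacle is making sure the ASP kernel is not sensitive to loops introduced by $R$ in ways the AF kernel would miss. The risk case is an atom that becomes a loop only after refinement, because $R$ adds $\naf h$ to the rule with head $h$. This is exactly $(h,h)\in R_H$ in the AF, so it is handled by $L_{F\cup H}=L_F\cup L_H$. Routing everything through Proposition~\ref{pro:kernal-translation} avoids re-verifying this at the program level.
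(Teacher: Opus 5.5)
Your argument is correct, but it takes a different route from the paper.

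The paper stays entirely at the program level. It first notes that $P^K=Q^K$ forces equal heads, so $P\uplus R$ and $Q\uplus R$ have the same heads. It then shows that every vulnerability $\naf a$ surviving in the rule with head $b$ of $(P\uplus R)^K$ also survives in $(Q\uplus R)^K$. To do so it argues that no loop with head $a$ exists in $P\uplus R$, hence none in $P$, in $R$, or (via $P^K=Q^K$) in $Q$. It then splits on whether $\naf a$ comes from $P^K$ or from $R^K$.

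You instead transport the problem along three results: Proposition~\ref{pro:kernal-translation}, the distribution lemma $F_{P\uplus R}=F_P\cup F_R$, and the isomorphism between h-unique programs and AFs. This reduces the claim to the AF fact that $F^{SK}=G^{SK}$ implies $(F\cup H)^{SK}=(G\cup H)^{SK}$. You check that fact directly from the definition, using $L_{F\cup H}=L_F\cup L_H$ and the observation that an attack $(a,b)$ survives iff $a=b$ or $a$ is not self-attacking.

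What your route buys is a uniform case analysis:
\begin{itemize}
\item Loops created by refinement are simply absorbed into $L_{F\cup H}$.
\item The case $a=b$ is handled explicitly. The paper's step ``$a\in body(r)$, hence there is no loop with head $a$'' silently excludes $a=b$, where $r$ itself may be that loop.
\item Working from the definition rather than through the strong-equivalence characterization is the right call, since that characterization is only formally stated for strict AFs.
\end{itemize}
The price is that your proof depends on the translation results being valid for non-strict programs and AFs with ungrounded attacks. The paper's direct argument does not rely on them.

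One side remark in your proposal is inaccurate, though harmless: ungrounded attacks are not ``kept in every kernel''. An attack $(c,b)$ that is ungrounded in $F$ becomes a proper attack of $F\cup H$ when $c\in A_H$. It is then deleted from $(F\cup H)^{SK}$ if $(c,c)\in R_H$. Your uniform argument via $L_{F\cup H}$ already treats such a pair correctly, so the remark can simply be dropped.
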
  
Finally, we show that Rule Refinement characterizes AF strong equivalence. 

\begin{restatable}{theorem}{StrongEquivHuniqueProg}\label{StrongEquivHuniqueProg}
    Let $P$ and $Q$ be two h-unique atomic programs, then the following conditions are equivalent:
	\begin{enumerate}
		\item $P$ and $Q$ have the same kernel, i.e.\ $P_K=Q_K$. 
		\item The AFs $F_P$ and $F_Q$ have the same stable-kernel, i.e.\ $(F_P)^{SK}=(F_Q)^{SK}$.
		\item The AFs $F_P$ and $F_Q$ are strongly equivalent for $\stb$ semantics, i.e.\ $F_P \equiv_s F_Q$.
		\item $P$ and $Q$ are strongly equivalent under Rule Refinement, i.e.\ $P\equiv^{\Xi}_r Q$.	
	\end{enumerate}
\end{restatable}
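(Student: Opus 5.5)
We prove the cycle of equivalences $1\Leftrightarrow 2$, $2\Leftrightarrow 3$, and $3\Leftrightarrow 4$, mostly by transporting statements across the translations $P\mapsto F_P$ and $F\mapsto P_F$.

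For $1\Leftrightarrow 2$ we use Proposition~\ref{pro:kernal-translation}: $(F_P)^{SK}=F_{P^K}$ and $(F_Q)^{SK}=F_{Q^K}$. If $P^K=Q^K$, then the two stable kernels coincide. Conversely, suppose $(F_P)^{SK}=(F_Q)^{SK}$, so $F_{P^K}=F_{Q^K}$. Since $P^K$ and $Q^K$ are again h-unique atomic programs, Remark~\ref{rem: iso quasi-AF LP} gives $P^K=P_{F_{P^K}}=P_{F_{Q^K}}=Q^K$.

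For $2\Leftrightarrow 3$ we invoke Proposition~\ref{pro:AF_SE}, extended to AFs with ungrounded attacks as noted in the paper. The point is that ungrounded attacks never originate from a self-attacking argument inside $A$, so the stable kernel leaves them untouched. One subtlety must be checked: ungrounded attacks from $c\notin A$ can become proper attacks after an expansion $H$ adds $c$. This is harmless, because the kernel characterization is then applied to the unions $F_P\cup H$ and $F_Q\cup H$, and the kernel of a union is determined by the kernels of its components. I would briefly verify this compatibility: $(F\cup H)^{SK}=(F^{SK}\cup H^{SK})^{SK}$.

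For $3\Rightarrow 4$, let $R$ be any atomic program. If $R\notin\Xi$, the $\uplus$-update still yields h-unique programs, so we may pass to $P\uplus R$. One option is to replace $R$ by the h-unique program $R'$ obtained by merging all rules of $R$ with the same head via $\mathtt{refine}$; by Lemma~\ref{lemma:assoc1}, $P\uplus R=P\uplus R'$. The chain is then
\[ AS(P\uplus R)=\stb(F_{P\uplus R'})=\stb(F_P\cup F_{R'})=\stb(F_Q\cup F_{R'})=\stb(F_{Q\uplus R'})=AS(Q\uplus R). \]
The first and last equalities use the semantic correspondence for h-unique programs. The second and fourth use Lemma~\ref{lemma:uplus to cup}. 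The middle equality uses strong equivalence of $F_P$ and $F_Q$ with $H=F_{R'}$.

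For $4\Rightarrow 3$, take an arbitrary AF $H$ and let $R=P_H$. By Lemma~\ref{lemma:uplus to cup} and Remark~\ref{rem: iso quasi-AF LP}, $F_P\cup H=F_P\cup F_{P_H}=F_{P\uplus P_H}$, and likewise for $Q$. Hence $\stb(F_P\cup H)=AS(P\uplus R)=AS(Q\uplus R)=\stb(F_Q\cup H)$.

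The main obstacle is the notion of strong equivalence for AFs in item 3. It must be read over AFs possibly containing ungrounded attacks, and the expansions $H$ must range over AFs of that type, so that $P_H$ covers all updates in $\Xi$. Proposition~\ref{pro:AF_SE} was stated only for strict AFs, so the real work in $2\Leftrightarrow 3$ is checking that the Oikarinen–Woltran argument still goes through. The "if" direction follows from kernel invariance under union, together with the fact that ungrounded attacks into a self-attacker are kept. For the "only if" direction, the usual distinguishing expansions built from strict AFs suffice: if the kernels differ, they differ on some proper attack or on some argument, since differing ungrounded attacks $(c,b)$ can be turned proper by adding $c$. I would first try reducing to the strict case by adding all ungrounded sources as arguments.
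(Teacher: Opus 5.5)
Your proposal is correct and follows essentially the same route as the paper. Like the paper's chain $1\Rightarrow 2\Rightarrow 3\Rightarrow 4$ and back, it transports everything across the translations using Proposition~\ref{pro:kernal-translation}, the kernel characterization of Proposition~\ref{pro:AF_SE}, the answer-set/stable correspondence, Lemma~\ref{lemma:uplus to cup} and Remark~\ref{rem: iso quasi-AF LP}. Your extra care fills in points the paper's proof passes over silently: you merge same-head rules so that non-h-unique updates $R$ are covered, and you check that the stable-kernel characterization survives ungrounded attacks. (The merging identity $P\uplus R=P\uplus R'$ comes from unfolding the definition of $\uplus$ together with Lemma~\ref{lemma:assoc1}, not from that lemma alone.)
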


\subsection{Atomic Programs and Well-Formed CAFs}\label{subsec:SELPsandWFCAFs}

First observe that the ASP-kernel is not helpful to characterize RR strong equivalence for atomic programs, e.g. for $P=\{ 1: a\gets \naf b., \ 2: b\gets ., \ 3: b \gets \naf b.\}$ and $P^K=\{ 1: a\gets., \ 2: b\gets., \ 3: b \gets \naf b.\}$, we get $AS(P)\neq AS(P^K)$. 
We then provide a direct characterization of RR strong equivalence 
and connect it to well-formed CAFs.
Two programs are RR strongly equivalent if their rules' components pairwise coincide, with the possible exception of loop-rules' heads. 
\begin{restatable}{lemma}{LoopRulesRemovedReduct}\label{lemma:LoopRulesRemovedReduct}
	Let $P$ be a program and $r\in loop(P)$ a loop-rule in $P$. Then, $r$ gets removed when computing $P^S$ for any $S\in AS(P)$.
\end{restatable}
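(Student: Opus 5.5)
The plan is a short argument by contradiction that unfolds the reduct definition. Fix an answer set $S\in AS(P)$ and a loop-rule $r\in loop(P)$, so that $h:=head(r)\in neg(r)$. By the definition of the reduct, $r$ is deleted in $P^S$ exactly when some $b\in neg(r)$ lies in $S$. It therefore suffices to show $h\in S$, or else to show that assuming $h\notin S$ contradicts $S$ being an answer set.

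Suppose $h\notin S$, and first consider the case $neg(r)\cap S=\emptyset$, i.e.\ $r$ is not deleted. Because $P$ is atomic, $pos(r)=\emptyset$, so the reduct of $r$ is the fact $h\gets.$ in $P^S$. Every model of $P^S$, and in particular its minimal model, then contains $h$. Since $S$ is the minimal model of $P^S$, we get $h\in S$, contradicting $h\notin S$. So in this case $h\in S$. As $h\in neg(r)$, this shows $neg(r)\cap S\neq\emptyset$ after all, and rule (i) of the reduct deletes $r$.

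In the remaining case $neg(r)\cap S\neq\emptyset$ holds directly, and $r$ is deleted at once. Combining the two cases: if $r$ survived in $P^S$, then $neg(r)\cap S=\emptyset$, so $h\notin S$; but $r$ would contribute the fact $h$, forcing $h\in S$. This is impossible, so $r$ is always removed.

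There is no real obstacle. The only point to state explicitly is that atomicity ensures the surviving reduct of $r$ is an unconditional fact, so its head must belong to the minimal model $S$. Without atomicity, the positive body could block this step, though the lemma would still hold for a different reason.
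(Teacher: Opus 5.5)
Your proof is correct and is essentially the paper's argument: if the loop-rule $r$ survived in $P^S$, atomicity would turn it into the fact $head(r)\gets$, forcing $head(r)\in S$, and since $head(r)\in neg(r)$ the reduct must in fact have deleted $r$. Your extra assumption $h\notin S$ is redundant, because $neg(r)\cap S=\emptyset$ already implies it. One correction to your closing aside: without atomicity the lemma actually fails, e.g.\ $P=\{h\gets a, \naf h.\}$ has the answer set $\emptyset$, and its loop-rule survives in $P^{\emptyset}$ as $h\gets a$.
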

\begin{restatable}{theorem}{StrongEquivAtomicProg}\label{the:StrongEquivAtomicProg}
	Two atomic programs $P$ and $Q$ are strongly equivalent under Rule Refinement, denoted $P \equiv^{\Lambda}_{r^+} Q$, iff it holds that:
	\begin{enumerate}
		\item $id(r)=id(r')$ implies $body(r)=body(r')$ for all rules $r\in P$ and $r'\in Q$;
		\item $id(r)=id(r')$ implis $head(r)=head(r')$ for all $r\notin loop(P)$ and $r'\notin loop(Q)$
	\end{enumerate} 
\end{restatable}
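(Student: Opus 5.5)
The plan is to prove both directions directly from the definition of $\squplus$-update, using Lemma~\ref{lemma:LoopRulesRemovedReduct} for soundness and a family of ``isolating'' updates for completeness. The conditions are read together with the implicit requirement $id(P)=id(Q)$. If some identifier $i$ occurs in only one program, one checks by a case analysis that updates with a rule carrying identifier $i$ already separate the two programs: such an update is a refinement on one side and an addition on the other, and fresh atoms make the difference observable. I expect this to be routine and treat it first.

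\emph{($\Leftarrow$)} Assume conditions 1 and 2 and fix an atomic $R$. Since $\mathtt{refine}$ only enlarges bodies and keeps heads, and new identifiers add identical rules to both sides, $P\squplus R$ and $Q\squplus R$ again satisfy the two conditions. It remains to show that any two programs $P',Q'$ satisfying the conditions have $AS(P')=AS(Q')$.

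First, the loop status of an identifier is shared. If $r\in loop(P')$ but the corresponding $r'\notin loop(Q')$, condition 2 cannot be invoked, since $r$ is a loop. So suppose $head(r)\neq head(r')$ with equal bodies. Then I argue that $head(r)\in neg(r)=neg(r')$ already blocks both rules in any candidate $S$, by the reasoning of Lemma~\ref{lemma:LoopRulesRemovedReduct}: if the body were satisfied, $head(r)$ would be derived, contradicting $head(r)\notin S$.

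Now take $S\in AS(P')$. By Lemma~\ref{lemma:LoopRulesRemovedReduct}, every loop rule of $P'$ is deleted in $P'^S$, so $S\cap neg(r)\neq\emptyset$. Because bodies coincide, the same holds for the loop rules of $Q'$, which carry the same negative bodies. On non-loop rules, heads and bodies coincide, so $P'^S=Q'^S$ and $S$ is the minimal model of $Q'^S$. The argument is symmetric in $P'$ and $Q'$.

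\emph{($\Rightarrow$)} I argue by contraposition, building for a violating identifier $i$ an update $R_i$ that isolates rule $i$. Take a fresh atom $z$. $R_i$ contains the fact $k: z\gets.$ with a fresh identifier $k$, and for every identifier $j\neq i$ in $id(P)$ it contains the refinement $j:\ \_\gets \naf z.$, whose head is irrelevant because $\mathtt{refine}$ keeps the original head. In $P\squplus R_i$ and $Q\squplus R_i$ every answer set then contains $z$, and all rules except $i$ and $k$ are killed. Hence the answer sets are determined by rule $i$ alone, which is evaluated against the atoms $\{z\}\cup\{head(i)\}$.

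If the bodies differ, say $\naf b\in body(r)\setminus body(r')$, add a fact $b\gets.$ with a fresh identifier. Now rule $r$ is blocked while $r'$ fires on the $Q$ side, unless $r'$ is a loop, in which case $Q$ side has no answer set. Either way $AS$ differs: $\{\{z,b\}\}$ on one side, versus $\{\{z,b,head(r')\}\}$ or $\emptyset$ on the other. Care is needed when $b=head(r')$ or when $b$ coincides with other atoms. I would handle this by choosing the witness literal so that the symmetric difference is exploited from whichever side avoids these coincidences.

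If the bodies agree but the non-loop heads differ, $a\neq a'$, no extra fact is needed. All negative atoms of the common body are underivable apart from $a$ and $a'$, and $a,a'$ are not in the body since the rules are non-loop. So the answer sets are $\{z,a\}$ versus $\{z,a'\}$.

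The main obstacle is this case analysis in the completeness direction. Negative atoms of rule $i$ may coincide with $head(i)$, i.e.\ the rule may be a loop on one side only, and the interaction with the added fact must be checked so that the two resulting answer-set collections provably differ.
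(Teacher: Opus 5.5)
\textbf{The gap.} Your ($\Leftarrow$) direction fails at the step where you argue that a mixed pair is harmless. A mixed pair means $r\in loop(P')$ and $r'\notin loop(Q')$ with the same identifier, equal bodies and different heads. The blocking argument only applies to answer sets of the program that contains the loop. For $S\in AS(P')$, $r$ must be deleted in $P'^S$, hence so is $r'$, and this gives $AS(P')\subseteq AS(Q')$. For $S\in AS(Q')$, however, nothing forces $head(r)\in S$, so $r'$ can fire, and the claimed symmetry fails.

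This cannot be repaired, because with condition~2 read literally the ``if'' direction is false. Take $P=\{1: a\gets \naf a.\}$ and $Q=\{1: b\gets \naf a.\}$. They share identifiers and bodies, and condition~2 is vacuous since rule $1$ of $P$ is a loop. Yet $AS(P)=\emptyset$ while $AS(Q)=\{\{b\}\}$, already for the empty update. Your own isolating update detects every such pair: it yields no answer set on the loop side and $\{\{z,head(r')\}\}$ on the other. So your two directions contradict each other on this case.

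\textbf{The fix.} The reading you need is ``equal identifiers imply equal heads unless \emph{both} rules are loops''. This is the form used in the CAF corollary, and it is what the paper's proof tacitly assumes when it says $P$ and $Q$ differ only on pairs of loop rules. Under this reading, equal heads and bodies make the two rules identical, so loop status is shared for free, and the rest of your argument is sound.

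\textbf{Comparison with the paper.} Your route is better than the paper's in two places. First, showing that the conditions are preserved by $\squplus$-updates and then comparing reducts is cleaner than the paper's single-rule case split followed by an appeal to associativity. Second, your isolating update (fresh $z$, every other rule refined by $\naf z$) is what makes the completeness direction rigorous. The paper only adds the fact $b_i$ and infers different answer sets from different reducts, although other rules can mask the difference.

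\textbf{One loose end.} Suppose the witness $\naf b\in body(r)\setminus body(r')$ satisfies $b=head(r')$, e.g.\ $i: a\gets \naf b.$ versus $i: b\gets.$. Then there may be no other literal to choose, so your proposed remedy does not always apply. But in this case $r'$ is a non-loop with head $b$, while $r$ is either a loop (if $head(r)=b$) or has a different head. So the bare isolating update, without the extra fact, already separates the two programs.
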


As before, we analyze how this new notion of update is understood in terms of CAFs. On the one hand, simply adding a rule to a program $P$ amounts to augmenting $\mathcal{F}_P$ with an argument and possibly new attacks. On the other hand, refining a rule corresponds to adding claim-attacks to $\mathcal{F}_P$.  

\begin{example}\label{ex: squplus_update cont}
	Consider the CAFs $\mathcal{F}_P$ (left), $\mathcal{F}_{r}$ (center) and $\mathcal{F}_{P\squplus r}$ (right) corresponding to $P$, $r$ and $P\squplus r$ from Example \ref{ex: squplus_update}:
	
\hspace{-18pt}
\begin{minipage}{.35\textwidth}
	\begin{tikzpicture}
		\path
		(-1,0.4) node (a) {$a$}
		(-1,0) node[arg](1){$x_1$}
		(0.5,0.45) node (b1) {$b$}
		(0.5,0) node[arg](3){$x_3$}
		(-2.5,0.45) node (b2) {$b$}
		(-2.5,0) node[arg] (2) {$x_2$}
		(0,.6) node (v) {}
		;
		
		\path[->,>=stealth,thick]
		(1) edge[bend left=15] (2)
		(2) edge[bend left=15] (1)
		(3) edge (1)
		;
	\end{tikzpicture}
\end{minipage}
\begin{minipage}{.22\textwidth}
	\begin{tikzpicture}
		\path
		(-1,0.4) node (a) {$a$}
		(-1,0) node[arg](x3){$x_3$}
		(0.5,0.4) node (c) {$c$}
		(0.5,0) node[argd](star){$*$}
		(0,.6) node (v) {}
		;
		
		\path[->,>=stealth,thick]
		(star) edge[dashed] (x3)
		;
	\end{tikzpicture}
\end{minipage}
\begin{minipage}{.4\textwidth}
	\begin{tikzpicture}
		\path
		(-1,0.4) node (a) {$a$}
		(-1,0) node[arg](1){$x_1$}
		(0.5,0.45) node (b1) {$b$}
		(0.5,0) node[arg](3){$x_3$}
		(-2.5,0.45) node (b2) {$b$}
		(-2.5,0) node[arg] (2) {$x_2$}
		(2,0.4) node (c) {$c$}
		(2,0) node[argd](star){$*$}
		(0,.6) node (v) {}
		;
		
		\path[->,>=stealth,thick]
		(1) edge[bend left=15] (2)
		(2) edge[bend left=15] (1)
		(3) edge (1)
		(star) edge[dashed] (3)
		;
	\end{tikzpicture}
\end{minipage}	
\end{example}

We define an update operator for well-formed CAFs that mimics $\squplus$-updates.

\begin{definition}\label{def:wf_CAF_update}
	
	Let $\mathcal{F}_1=(A_1,R_1^{\mathcal{C}},\gamma_1)$ and $\mathcal{F}_2=(A_2,R_2^{\mathcal{C}},\gamma_2)$ be two well-formed CAFs, the update of $\mathcal{F}_1$ via $\mathcal{F}_2$ is $\mathcal{F}_1\cup \mathcal{F}_2=(A_1\cup A_2, R_1^{\mathcal{C}}\cup R_2^{\mathcal{C}}, \gamma_1 \overrightarrow{\times} \gamma_2)$ where:
	$$
	\gamma_1 \overrightarrow{\times} \gamma_2(a)=
	\begin{cases}
		\gamma_1(a) &\text{if } a\in A_1\\
		\gamma_2(a) &\text{if } a\in A_2\setminus A_1
	\end{cases}
	$$
\end{definition}
Updating an atomic program with an atomic rule might result in a violation of compatibility in the corresponding CAFs. For instance, in Example \ref{ex: squplus_update cont} the argument $x_3$ is labelled with two different claims in $\mathcal{F}_P$ and $\mathcal{F}_r$. For such incompatible expansions, we operate a choice between the claim functions, prioritizing $\gamma_1$. 
Thus, when two CAFs are incompatible with respect to some arguments, we choose the original claim as it is done with the rule-head of any rule $r$ that is refined via an $\squplus$-update.
We can now define strong equivalence in this setting. 
\begin{definition}
	Given any two well-formed CAFs $\mathcal{F}$ and $\mathcal{G}$, we say that $\mathcal{F}\equiv_{s} \mathcal{G}$ if and only if $\sigma(\mathcal{F}\cup \mathcal{H})=\sigma(\mathcal{G}\cup \mathcal{H})$ for any well-formed CAF $\mathcal{H}$.
\end{definition}

This notion differs from the original in two respects: (1) it does not restricts to compatible updates only and (2) it requires updates to be well-formed.  
This notion of strong equivalence for CAFs corresponds to strong equivalence under rule refinement for atomic LPs. To show this, we use of the following lemma.

\begin{restatable}{lemma}{UpdateAtomDistribUnderTransl}\label{lemma:CAF_distrib}
	For any atomic programs $P$ and $Q$, it holds that $\mathcal{F}_{P \squplus Q}=\mathcal{F}_P\cup \mathcal{F}_Q$. Conversely, for any two well-formed CAFs $\mathcal{F}$ and $\mathcal{H}$, $P_{\mathcal{F}\cup \mathcal{H}}=P_{\mathcal{F}}\squplus P_{\mathcal{H}}$. 
\end{restatable}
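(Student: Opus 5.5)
We prove both equalities by comparing the three components (arguments, claim-attacks, claim function) of the CAFs on each side. For a single rule the claims follow directly from the definitions. For a whole program $Q=\{r_1,\dots,r_n\}$ we argue by induction on $n$. This works because $P\squplus Q$ is defined as the iterated update $P\squplus r_1\squplus\dots\squplus r_n$, and Lemma~\ref{lemma:assoc2} makes it independent of the order of the $r_i$. On the CAF side, the update $\cup$ of Definition~\ref{def:wf_CAF_update} is associative in the required sense: arguments and claim-attacks are combined by plain set union, and $\overrightarrow{\times}$ always keeps the claim from the earliest framework containing the argument. Hence $\mathcal{F}_P\cup\mathcal{F}_Q=(\dots(\mathcal{F}_P\cup\mathcal{F}_{r_1})\cup\dots)\cup\mathcal{F}_{r_n}$, with $\mathcal{F}_{r}$ the one-argument CAF of the singleton program $\{r\}$. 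So it suffices to show $\mathcal{F}_{P\squplus r}=\mathcal{F}_P\cup\mathcal{F}_{\{r\}}$ for an atomic program $P$ and a single atomic rule $r$ with $id(r)=i$.

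\textbf{Case $i\notin id(P)$.} Here $P\squplus r=P\cup\{r\}$. The argument set becomes $A_P\cup\{x_i\}$ and the claim-attacks become $R^{\mathcal{C}}_P\cup\{(c,x_i)\mid c\in neg(r)\}$. The claim of $x_i$ is $head(r)=\gamma_{\{r\}}(x_i)$, and since $x_i\notin A_P$ this is exactly what $\overrightarrow{\times}$ assigns.

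\textbf{Case $i\in id(P)$.} Let $r_0\in P$ be the rule with $id(r_0)=i$. Then $P\squplus r$ replaces $r_0$ by a rule with head $head(r_0)$, identifier $i$, and negative body $neg(r_0)\cup neg(r)$; since both rules are atomic, there is no positive part. So the argument set is unchanged, since $x_i$ was already in $A_P$. The claim-attacks on $x_i$ grow by $\{(c,x_i)\mid c\in neg(r)\}$, which is exactly $R^{\mathcal{C}}_{\{r\}}$. The claim of $x_i$ stays $head(r_0)=\gamma_P(x_i)$, matching the priority $\overrightarrow{\times}$ gives to $\gamma_1$ on $A_1$. Well-formedness is automatic in the relaxed definition, since attacks originate from claims. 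This completes the first statement.

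For the converse, let $P=P_{\mathcal{F}}$ and $Q=P_{\mathcal{H}}$. By the first part, $\mathcal{F}_{P\squplus Q}=\mathcal{F}_P\cup\mathcal{F}_Q$. By Proposition~\ref{pro:isoCAF}, $\mathcal{F}_P=\mathcal{F}$ and $\mathcal{F}_Q=\mathcal{H}$, so $\mathcal{F}_{P_{\mathcal{F}}\squplus P_{\mathcal{H}}}=\mathcal{F}\cup\mathcal{H}$. Applying $P_{(\cdot)}$ to both sides requires $P_{\mathcal{F}_{S}}=S$ for atomic programs $S$. This holds because the translation is bijective: rules correspond to arguments via identifiers, heads to claims, and negative bodies to incoming claim-attacks. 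Hence $P_{\mathcal{F}\cup\mathcal{H}}=P_{\mathcal{F}}\squplus P_{\mathcal{H}}$.

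The main obstacle is the bookkeeping in the overlapping case. We must check that refinement keeps the original head exactly as $\overrightarrow{\times}$ keeps $\gamma_1$. We must also make sure the iterated CAF updates compose correctly, in particular when several rules of $Q$ share an identifier with one another but not with $P$. In that situation the first such rule adds the argument with its claim, and later ones only refine it, on both sides. We handle this by choosing the induction order to match the order of the $r_i$, using Lemma~\ref{lemma:assoc2}.
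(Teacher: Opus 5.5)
Your proof is correct. For the first equality you check the same three components as the paper: arguments via identifiers, claim-attacks via negative bodies, and refinement keeping the original head just as $\overrightarrow{\times}$ keeps $\gamma_1$. You organize the check differently, though. The paper argues globally for the whole program $Q$: it reads off $id(W)=id(P)\cup id(Q)$, asserts $R^{\mathcal{C}}_W=R^{\mathcal{C}}_P\cup R^{\mathcal{C}}_Q$, and obtains the claim function by splitting off the rules $Q'\subseteq Q$ whose identifiers already occur in $P$. You instead reduce to a single rule by induction. This costs you the associativity of the CAF update, which does hold: in both bracketings the claim is $\gamma_1$ on $A_1$, $\gamma_2$ on $A_2\setminus A_1$ and $\gamma_3$ on $A_3\setminus(A_1\cup A_2)$. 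In return, your argument follows the actual definition of $P\squplus Q$ as iterated single-rule updates. It also makes the per-argument attack sets explicit, whereas the paper's aggregate identity $neg(W)=neg(P)\cup neg(Q)$ does not by itself determine which argument each claim-attack targets.

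For the converse, the paper only says the argument is analogous, whereas you derive it from the first part plus bijectivity of the translation. That avoids repeating the bookkeeping, but it relies on $P_{\mathcal{F}_S}=S$ for atomic $S$, which is the direction not stated in Proposition~\ref{pro:isoCAF}. The paper uses this direction as well (e.g.\ in the proof of Theorem~\ref{StrongEquivLPvsCAF}). It holds provided $\gamma(x_i^-)$ in the definition of $P_{\mathcal{F}}$ is read as $\{c\mid (c,x_i)\in R^{\mathcal{C}}\}$, so that ungrounded claim-attacks survive the round trip.

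One small correction concerns your last paragraph. Suppose two rules of $Q$ shared an identifier but had different heads. Then $\gamma_Q$ would not be a function, and Lemma~\ref{lemma:assoc2} would actually fail, since the surviving head is that of whichever rule is applied first. So that lemma cannot be invoked to fix an induction order. The correct observation is that identifiers are unique within a program, as the definition of $\mathcal{F}_P$ tacitly requires. If shared identifiers came with equal heads, your induction would go through unchanged.
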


\begin{restatable}{theorem}{StrongEquivLPvsCAF}\label{StrongEquivLPvsCAF}
	For any atomic programs $P$ and $Q$, the following are equivalent:
	\begin{itemize}
		\item $\mathcal{F}_P$ and $\mathcal{F}_Q$ 
        are strongly equivalent for stable semantics, i.e.\ $\mathcal{F}_P\equiv_s \mathcal{F}_Q$.
		\item $P$ and $Q$ are strongly equivalent under Rule Refinement, i.e.\ $P\equiv^{\Lambda}_{r^+}Q$.
	\end{itemize}
\end{restatable}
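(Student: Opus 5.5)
The plan is to transport strong equivalence across the translation using three earlier results. Lemma~\ref{lemma:CAF_distrib} says that updates commute with translation. Proposition~\ref{pro:CAF_LP_Corresp} says that stable claim-extensions coincide with answer sets. Proposition~\ref{pro:isoCAF} says that the translations are mutually inverse. Throughout, $\sigma$ in the CAF strong equivalence definition is read as $\stb_i$ (equivalently $\stb$ for well-formed CAFs). By the remark following the definition of $\equiv^{\Lambda}_{r^+}$, for atomic $P,Q$ condition (1) can never hold, so it suffices to consider atomic $R$ and to check $AS(P\squplus R)=AS(Q\squplus R)$.

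For the direction from CAF strong equivalence to RR strong equivalence, fix an atomic program $R$.
\begin{itemize}
\item The translation $\mathcal{F}_R$ is a well-formed CAF.
\item By Lemma~\ref{lemma:CAF_distrib}, $\mathcal{F}_{P\squplus R}=\mathcal{F}_P\cup\mathcal{F}_R$, and likewise for $Q$.
\item Applying Proposition~\ref{pro:CAF_LP_Corresp} to the atomic program $P\squplus R$ (atomic by the preservation proposition for $\squplus$) gives
\[
AS(P\squplus R)=\stb(\mathcal{F}_{P\squplus R})=\stb(\mathcal{F}_P\cup\mathcal{F}_R).
\]
\item By hypothesis this equals $\stb(\mathcal{F}_Q\cup\mathcal{F}_R)=AS(Q\squplus R)$.
\end{itemize}

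For the converse direction, fix a well-formed CAF $\mathcal{H}$ and set $R=P_{\mathcal{H}}$, which is atomic.
\begin{itemize}
\item By Proposition~\ref{pro:isoCAF}, $\mathcal{F}_P=\mathcal{F}_{P_{\mathcal{F}_P}}$, and the second part of Lemma~\ref{lemma:CAF_distrib} gives $P_{\mathcal{F}_P\cup\mathcal{H}}=P_{\mathcal{F}_P}\squplus P_{\mathcal{H}}$.
\item Applying Proposition~\ref{pro:CAF_LP_Corresp} to the CAF side yields
\[
\stb(\mathcal{F}_P\cup\mathcal{H})=AS(P_{\mathcal{F}_P\cup\mathcal{H}})=AS(P_{\mathcal{F}_P}\squplus R).
\]
\item It is simpler to avoid $P_{\mathcal{F}_P}$ altogether: use the first part of Lemma~\ref{lemma:CAF_distrib} together with $\mathcal{F}_R=\mathcal{F}_{P_\mathcal{H}}=\mathcal{H}$ (Proposition~\ref{pro:isoCAF}). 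Then
\[
\mathcal{F}_P\cup\mathcal{H}=\mathcal{F}_P\cup\mathcal{F}_R=\mathcal{F}_{P\squplus R},
\]
so $\stb(\mathcal{F}_P\cup\mathcal{H})=AS(P\squplus R)$.
\item The same holds for $Q$, and RR strong equivalence finishes the argument.
\end{itemize}

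The main obstacle is the identification details, which I will check explicitly. Three points need verification.
\begin{itemize}
\item The union of two well-formed CAFs (Definition~\ref{def:wf_CAF_update}) must be a well-formed CAF in the relaxed claim-attack sense. This is immediate, since attacks originate from claims.
\item Argument names $x_i$ must line up with rule identifiers $i$, so that "same argument" in the CAF union matches "same identifier" in $\squplus$. The priority of $\gamma_1$ in the union must also match keeping $head(r)$ under refinement.
\item Proposition~\ref{pro:isoCAF} must hold on the nose in the relaxed setting with ungrounded claim-attacks, so that $\mathcal{F}_{P_\mathcal{H}}=\mathcal{H}$. This requires $P_\mathcal{H}$ to collect all attacking claims of $x_i$, including ungrounded ones.
\end{itemize}
Once these bookkeeping facts are confirmed, both directions are two-line chains of equalities.
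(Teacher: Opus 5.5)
Your proof is correct and follows the paper's argument: both directions are the same chain through Lemma~\ref{lemma:CAF_distrib}, Proposition~\ref{pro:CAF_LP_Corresp} and Proposition~\ref{pro:isoCAF}. Your instantiation $\mathcal{H}=\mathcal{F}_R$ (resp.\ $R=P_{\mathcal{H}}$ with $\mathcal{F}_{P_{\mathcal{H}}}=\mathcal{H}$) uses only the first half of the lemma, which is slightly cleaner than the paper's detour through the other round-trip $P_{\mathcal{F}_P}=P$. One wording fix: condition (1) of $\equiv^{\Lambda}_{r^+}$ does not ``never hold''---for atomic $P,Q$ it holds exactly when $R$ is non-atomic---and that is why it suffices to check (2) for atomic $R$.
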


Thus, characterizing strong equivalence on well-formed CAFs is now possible. 
\begin{corollary}
	For two well-formed CAFs $\mathcal{F}\!=\!(A_\mathcal{F}, R^\mathcal{C}_\mathcal{F}, \gamma_\mathcal{F})$ and $\mathcal{G}\!=\!(A_\mathcal{G}, R^\mathcal{C}_\mathcal{G}, \gamma_\mathcal{G})$, we have $\mathcal{F} \equiv_s \mathcal{G}$ iff (1) $A_{\mathcal{F}} = A_\mathcal{G}$ and $R_{\mathcal{F}}^{\mathcal{C}} = R_{\mathcal{G}}^{\mathcal{C}}$, and (2) for each $x_i \in A$, either
		$\gamma_\mathcal{F}(x_i)=\gamma_\mathcal{G}(x_i)$ or 
		$(\gamma_\mathcal{F}(x_i),x_i) \in R_{\mathcal{F}}^{\mathcal{C}} \land (\gamma_\mathcal{G}(x_i),x_i) \in R_{\mathcal{F}}^{\mathcal{C}}$ holds.
\end{corollary}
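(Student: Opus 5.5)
We derive the corollary by transferring Theorem~\ref{the:StrongEquivAtomicProg} along the translation $\mathcal{F}\mapsto P_\mathcal{F}$. Take well-formed CAFs $\mathcal{F}$ and $\mathcal{G}$. By Proposition~\ref{pro:isoCAF}, $\mathcal{F}=\mathcal{F}_{P_\mathcal{F}}$ and $\mathcal{G}=\mathcal{F}_{P_\mathcal{G}}$. Theorem~\ref{StrongEquivLPvsCAF}, applied to the atomic programs $P_\mathcal{F}$ and $P_\mathcal{G}$, then gives
\[
\mathcal{F}\equiv_s\mathcal{G}\iff P_\mathcal{F}\equiv^{\Lambda}_{r^+}P_\mathcal{G}.
\]
Theorem~\ref{the:StrongEquivAtomicProg} turns the right-hand side into two syntactic conditions on rules sharing an identifier. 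The remaining work is to restate those two conditions in CAF terms using the definition of $P_\mathcal{F}$.

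For this dictionary, note that the rule with identifier $i$ in $P_\mathcal{F}$ is $i:\gamma_\mathcal{F}(x_i)\gets \naf \{c\mid (c,x_i)\in R^\mathcal{C}_\mathcal{F}\}$. Hence $id(P_\mathcal{F})$ is exactly the index set of $A_\mathcal{F}$. The body of rule $i$ is determined by the claim-attacks on $x_i$, and its head is $\gamma_\mathcal{F}(x_i)$. Moreover, rule $i$ is a loop-rule iff $(\gamma_\mathcal{F}(x_i),x_i)\in R^\mathcal{C}_\mathcal{F}$.

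Under this dictionary, condition 1 of Theorem~\ref{the:StrongEquivAtomicProg} says that for every shared index the claim-attack sets on $x_i$ coincide. Condition 2 says that at each shared index either the heads agree or at least one of the two rules is a loop. Condition 2 as literally stated only constrains pairs where both rules are non-loops. Since condition 1 makes the bodies equal, $\gamma_\mathcal{F}(x_i)$ attacks $x_i$ in $\mathcal{G}$ too, so it is enough to phrase the loop condition in terms of $R^\mathcal{C}_\mathcal{F}$ alone. This gives the corollary's disjunct $(\gamma_\mathcal{F}(x_i),x_i)\in R^\mathcal{C}_\mathcal{F}\land(\gamma_\mathcal{G}(x_i),x_i)\in R^\mathcal{C}_\mathcal{F}$ in the case where both rules are loops.

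The main obstacle is a mismatch between the two sides in two places.

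\textbf{Mixed-loop case.} The corollary's disjunction also has to cover the case where exactly one of the two rules is a loop. I would first try to show that this case cannot occur when the bodies are equal but the heads differ. If it can occur, I would instead check directly that an update $\mathcal{H}$ refining $x_i$ separates the two frameworks, and adjust the reading of condition 2 accordingly.

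\textbf{Equality of argument sets.} Theorem~\ref{the:StrongEquivAtomicProg} only compares rules with a common identifier, whereas the corollary requires $A_\mathcal{F}=A_\mathcal{G}$ and $R^\mathcal{C}_\mathcal{F}=R^\mathcal{C}_\mathcal{G}$ globally. I would derive equality of the argument sets from strong equivalence with a separating update. Suppose $x_i\in A_\mathcal{F}\setminus A_\mathcal{G}$. Add a fresh argument with a new claim $d$ and no attackers, and let $\mathcal{H}$ contain $x_i$ with claim $d$ attacked by claim $d$. In $\mathcal{G}\cup\mathcal{H}$ the argument $x_i$ is then a self-defeating loop, so $\mathcal{G}\cup\mathcal{H}$ has no stable extension. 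In $\mathcal{F}\cup\mathcal{H}$, by contrast, $x_i$ keeps its original claim, so the outcomes differ whenever $\mathcal{F}$ can be made to have an extension. Exhibiting an update under which $\mathcal{F}\cup\mathcal{H}$ actually has an extension is the step I expect to need the most care.

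Once $A_\mathcal{F}=A_\mathcal{G}$ is established, equality of $R^\mathcal{C}$ follows immediately from condition 1, since $R^\mathcal{C}$ consists of exactly the attacks $(c,x_i)$ read off the rule bodies.
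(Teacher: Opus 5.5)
\textbf{Verdict: there is a genuine gap.} Your main route is the one the paper intends. The corollary is given without a separate proof, as the translation of Theorem~\ref{the:StrongEquivAtomicProg} through Theorem~\ref{StrongEquivLPvsCAF}, using exactly your dictionary: rule $i$ is $x_i$, its body is the set of claim-attacks on $x_i$, and a loop-rule is one with $(\gamma(x_i),x_i)\in R^{\mathcal{C}}$.

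You are also right that the theorem, read literally, is weaker than the corollary. It says nothing about identifiers that occur in only one program, and its condition~2 leaves mixed loop/non-loop pairs unconstrained. The paper glosses over this by reading the theorem as ``the programs are identical except for heads of rules that are loops in both''; this is how the ($\Leftarrow$) part of its proof starts, and under that reading the corollary is a pure translation. Since you do not grant that reading, your proof rests on your two repairs, and neither is supplied.

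\emph{Mixed case.} It does occur. Take $\mathcal{F}=(\{x_1\},\{(a,x_1)\},\gamma_\mathcal{F}(x_1)=a)$ and $\mathcal{G}=(\{x_1\},\{(a,x_1)\},\gamma_\mathcal{G}(x_1)=b)$. They satisfy both literal conditions, yet $\stb_i(\mathcal{F})=\emptyset\neq\{\{b\}\}=\stb_i(\mathcal{G})$. So your first attempt fails, and the fallback is only announced. An update that merely refines $x_i$ also leaves the rest of the framework uncontrolled.

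\emph{Equality of argument sets.} The update you sketch does not work. If the fresh unattacked argument $y$ with claim $d$ is part of $\mathcal{H}$, it attacks $x_i$ via $(d,x_i)$ on \emph{both} sides. Then $\mathcal{G}\cup\mathcal{H}$ has exactly the stable extensions of $\mathcal{G}$ extended by $y$, so it is not extension-less, and $\mathcal{F}\cup\mathcal{H}$ behaves like $\mathcal{F}$ with $x_i$ deleted. For example, let $\mathcal{F}$ have two unattacked arguments $x_1,x_2$ with claims $a,b$, let $\mathcal{G}$ have $x_1$ alone, and take $i=2$. Both updated frameworks have the single claim-extension $\{a,d\}$. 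Without $y$, the attack $(d,x_i)$ is ungrounded in $\mathcal{F}\cup\mathcal{H}$, so you separate only when $\stb(\mathcal{F})\neq\emptyset$, which is precisely the step you leave open.

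\emph{The missing idea} is an update that isolates $x_i$ by neutralising everything else. Take a fresh claim $z$. Let $\mathcal{H}$ contain a fresh argument $w$ with $\gamma_\mathcal{H}(w)=z$ and no attackers. Add every $x_j\in(A_\mathcal{F}\cup A_\mathcal{G})\setminus\{x_i\}$ (with claim $z$, say) together with the attacks $(z,x_j)$; $x_i$ itself is not in $\mathcal{H}$.

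Then $w$ belongs to every stable extension of $\mathcal{F}\cup\mathcal{H}$ and defeats all $x_j$ with $j\neq i$. Hence $\stb_i(\mathcal{F}\cup\mathcal{H})$ is:
\begin{itemize}
\item $\{\{z\}\}$ if $x_i\notin A_\mathcal{F}$;
\item $\emptyset$ if $x_i\in A_\mathcal{F}$ and $(\gamma_\mathcal{F}(x_i),x_i)\in R^{\mathcal{C}}_\mathcal{F}$;
\item $\{\{z,\gamma_\mathcal{F}(x_i)\}\}$ otherwise.
\end{itemize}
The same holds for $\mathcal{G}$. Comparing the two sides yields $A_\mathcal{F}=A_\mathcal{G}$. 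It also gives, for every $x_i$, that either both copies of $x_i$ are self-attacking, or neither is and their claims agree. That is condition~(2), including the mixed case.

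Equality of the attack sets then follows from condition~1 of the theorem, as you say, and the converse direction is the translation argument. In program terms, this update adds $\naf z$ to all rules other than rule $i$ and adds a fact $z$.
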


Having a characterization for strong equivalence under rule refinement, we now prove that it generalizes standard strong equivalence for atomic programs.
\begin{restatable}{proposition}{RRStrongEquivAtomImpliesStandard}
	For any $P$ and $Q$ atomic programs, $P\equiv^{\Lambda}_{r^+}Q$ implies $P\equiv_{s} Q$.
\end{restatable}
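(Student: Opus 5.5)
The plan is to go through the syntactic characterization of Theorem~\ref{the:StrongEquivAtomicProg} and then show directly that the only permitted difference between $P$ and $Q$ cannot be detected by any ordinary expansion. Assume $P\equiv^{\Lambda}_{r^+}Q$.

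\emph{Step 1: what the characterization gives.} By Theorem~\ref{the:StrongEquivAtomicProg}, rules of $P$ and $Q$ with the same identifier have the same body. They also have the same head unless both are loop-rules. Two points are needed that the theorem, as stated, does not spell out.
\begin{itemize}
\item $id(P)=id(Q)$. I expect this to follow from the proof of Theorem~\ref{the:StrongEquivAtomicProg}: an identifier present in only one program can be exposed by a suitable $\squplus$-update.
\item The exception applies only when \emph{both} rules are loops. If one of them were not a loop, the exception would be wrong: $\{1: a\gets\naf a.\}$ and $\{1: b\gets \naf a.\}$ are not even equivalent.
\end{itemize}
I am less sure of these two points than of the rest, so I would re-derive both from the theorem's proof before relying on them. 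Granting them, $Q$ is obtained from $P$ by replacing some loop-rules $h\gets B$ with rules $h'\gets B$ that have the same body, where both $h$ and $h'$ lie in $neg(B)$.

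\emph{Step 2: key lemma.} Let $T$ be an arbitrary normal program, and let $r=h\gets B$ and $r'=h'\gets B$ be atomic rules with $h,h'\in neg(B)$. The claim is $AS(T\cup\{r\})=AS(T\cup\{r'\})$. Fix a candidate set $S$ and split on whether $S$ meets $neg(B)$.
\begin{itemize}
\item If $S\cap neg(B)\neq\emptyset$, then $r$ and $r'$ are both deleted in the reduct. The two reducts coincide with $T^S$, so $S$ is an answer set of one program iff it is an answer set of the other. This mirrors Lemma~\ref{lemma:LoopRulesRemovedReduct}.
\item If $S\cap neg(B)=\emptyset$, then $h\notin S$. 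Since $r$ is atomic, the reduct $(T\cup\{r\})^S$ contains the fact $h\gets$. Its minimal model therefore contains $h$, hence differs from $S$, and $S$ is not an answer set. The same argument with $h'$ shows $S$ is not an answer set of $T\cup\{r'\}$ either.
\end{itemize}
In both cases $S$ has the same status for the two programs, which proves the lemma.

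\emph{Step 3: conclude.} Take an arbitrary program $R$. Then $P\cup R$ and $Q\cup R$ differ only by exchanging such loop-rule pairs; rules of $R$ that coincide with rules of $P$ or $Q$ cause no trouble, since union is taken on sets of rules. Apply Step~2 once per exchanged pair, each time with $T$ equal to the rest of the current program. Chaining these equalities gives $AS(P\cup R)=AS(Q\cup R)$, so $P\equiv_s Q$.

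The main obstacle is Step 1. Theorem~\ref{the:StrongEquivAtomicProg} as stated does not literally give the identity of the identifier sets, nor that both differing-head rules are loops. Step 2 is routine; Step 3 is routine apart from the bookkeeping in the union.
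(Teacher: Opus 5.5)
This is the paper's proof. The paper also reads Theorem~\ref{the:StrongEquivAtomicProg} as saying that $P$ and $Q$ differ only in the heads of same-identifier loop-rules, and then uses Lemma~\ref{lemma:LoopRulesRemovedReduct} to argue that these rules drop out of the reducts of $P\cup R$ and $Q\cup R$; your Step~2 case split is a cleaner, self-contained version of that argument. The two points you flag in Step~1 are tacitly assumed by the paper too (the theorem's literal wording is too weak, as your example shows), and both hold: take a fresh atom $z$, add a fact $z\gets$ with a fresh identifier, and refine every identifier other than $i$ with $\naf z$; the resulting answer sets are $\{\{z\}\}$, $\emptyset$ or $\{\{z,head(r_i)\}\}$ according to whether identifier $i$ is absent, a loop-rule, or a non-loop rule $r_i$, so $P\equiv^{\Lambda}_{r^+}Q$ forces $id(P)=id(Q)$ and permits differing heads only when both rules are loops.
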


\section{Discussion}
In this paper, we analyzed the correspondence between certain classes of logic programs and abstract argumentation frameworks. In the static setting, we extended existing semantic equivalence results from strict to non-strict programs and AFs, by appealing to the concept of ungrounded attack. Further, we adapted the definition of well-formed CAFs to align precisely with (possibly non-strict) atomic programs. 
We then examined how equivalence in dynamic contexts gets lost in translation, and 
identified the source of such misalignment in the diverging representations of expansions. To overcome the issue, we proposed a new notion of update for (h-unique) atomic programs, called rule refinement. 
We then introduced the notion of strong equivalence under Rule Refinement for (h-unique) atomic programs, investigated its relations with the standard one, and proven that characterizes strong equivalence for AFs and well-formed CAFs. 

The notion of strong equivalence has been extensively investigated in both logic programming and argumentation. The misalignment has been noted before in the work of \cite{BaumannS22}, that shares with ours the motivation of bridging the gap between strong equivalence notions across different formalisms. Indeed, they provide a general semantic framework for strong equivalence that is formally independent of specific formalism, and capable of subsuming both logic programs and Dung-style AFs. In contrast, by remaining on the syntactic level, our work is closer to original characterizations of strong equivalence and addresses the mismatch in a direct way. 
For logic programs, strong equivalence has been characterized using $\mathit{SE}$-models \cite{LifschitzPV01}, which offer a semantic basis for comparing programs under arbitrary expansions. An $\mathit{SE}$-model of a program $P$ is a pair $(X,Y)$ of sets of atoms such that $X\subseteq Y$, $Y \models P$ and $X \models P^Y$. Two logic programs are strongly equivalent iff their $\mathit{SE}$-models coincide. 
Building on this, \cite{DelgrandePW13} defined belief revision operators for LPs and established representation theorems for various program classes, connecting strong equivalence with stability after revisions. Studying the relation between revision and update operators in ASP \cite{LeiteS23} and our notion of rule refinement is subject of future investigations. 
Moreover, we aim at providing a characterization of RR strong equivalence for the entire class of normal logic programs, and connect this to possibly more expressive types of AFs. We anticipate that this could be achieved by encoding positive dependencies among atoms via some notion of support in the corresponding framework~\cite{CayrolL05a}. To this end, a comparative analysis of our notion of strong equivalence and those defined over argumentation formalisms related to normal LPs, such as abstract dialectical frameworks, is needed~\cite{HeyninckKRST23,RapbergerU23}. 
 \bibliographystyle{splncs04}

\clearpage
\appendix

\section{Omitted Proofs}
We make use of the following lemma for the proof of the Proposition~\ref{prop:equiv LP ill-formed LP} below.

\begin{restatable}{lemma}{QuasiVulNotAS}\label{lem:strictlp}
	For any atomic program $P$ and atom $c\in neg(P)$ such that $c\notin head(P)$, it holds that $c\notin S$ for any answer-set $S\in AS(P)$. 
\end{restatable}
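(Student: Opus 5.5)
The argument is by contradiction, using the reduct directly. Fix an atomic program $P$, an atom $c\in neg(P)$ with $c\notin head(P)$, and an answer set $S\in AS(P)$, and suppose $c\in S$. By definition, $S$ is the minimal model of the reduct $P^S$. Since $P$ is atomic, every rule of $P$ has an empty positive body, so every rule surviving in $P^S$ is a fact of the form $head(r)\gets .$. Hence $P^S$ is a set of facts, and its minimal model is exactly the set of heads of the surviving rules:
\[
S = \{\, head(r) \mid r\in P,\ neg(r)\cap S=\emptyset \,\}.
\]

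The key step is to note that $S \subseteq head(P)$, since every element of $S$ is the head of some rule of $P$. By assumption $c\notin head(P)$, so $c\notin S$. This contradicts $c\in S$ and proves the lemma. The hypothesis $c\in neg(P)$ is not needed for the argument; it only guarantees that $c\in\mathcal{L}(P)$, so that the statement is meaningful.

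The only point that needs care is computing the minimal model of $P^S$. For constraints, i.e.\ rules with $head(r)=\emptyset$, the reduct yields rules with empty head and empty body. Such a rule either adds nothing to $S$ or makes $P^S$ have no model, and in the latter case $S$ is not an answer set. Either way, no atom outside $head(P)$ can enter $S$, so the inclusion $S\subseteq head(P)$ still holds.
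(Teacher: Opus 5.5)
Your proof is correct and follows the paper's route: if $c$ is in an answer set $S$, it must lie in the minimal model of $P^S$, which needs a rule with head $c$, contradicting $c\notin head(P)$. You make explicit, via atomicity, the step the paper leaves implicit ($P^S$ is a set of facts, so $S\subseteq head(P)$), and your remarks on constraints and on the hypothesis $c\in neg(P)$ being unnecessary are accurate.
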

\begin{proof}
	We prove the statement by contradiction. Suppose that $c$ is contained in an answer-set $S$ of $P$.
	Then $c$ has to be in the minimal model of $P^S$. Hence, there is a rule in $P^S$ that derives $c$, in contradiction with our hypothesis. 
\end{proof}

\ASEqualPToWellFormP*
\begin{proof} 
	Consider the \textit{ungrounded vulnerabilities} $UV(P)=\{\naf b \mid b\in neg(P) \setminus head(P)\}$ of $P$.
	In the case that $UV(P)=\emptyset$, then $P=P_{\downarrow head}$ and we are done. Thus, for the remainder of the proof, we will assume
	$UV(P)\not=\emptyset$.
	
	$AS(P)\subseteq AS(P_{\downarrow head})$:
	We show $P^S=P^S_{\downarrow head}$ for any set $S\in AS(P)$. From Lemma~\ref{lem:strictlp}, we know that $c\notin S$ for any $\naf c\in UV(P)$. Hence, every occurrence of $\naf c$ gets removed when computing $P^S$. Since $P$ and $P_{\downarrow head}$ are identical except $UV(P)$, we derive that $P^S=P^S_{\downarrow head}$, and consequently that $S\in AS(P_{\downarrow head})$.
	
	$AS(P)\supseteq AS(P_{\downarrow head})$:
	Take any $S'\in AS(P_{\downarrow head})$. By definition of $P_{\downarrow head}$, we know that $c\notin S$ for any $\naf c\in UV(P)$. Similarly to (i), we derive that $P^{S'}=P^{S'}_{\downarrow head}$, and consequently that $S'\in AS(P)$.
\end{proof}

\OrderTranslQuasi*
\begin{proof}
	For any program $P$ we fix a set of atoms $C\subseteq neg(P)$ that do not occur in $P_{\downarrow head}$ (i.e. $C\cap head(P)=\emptyset$). By Definition~\ref{def:LP2AF} for every $c\in C$ there is at least one conflict $(c,a)\in R_P$ where $a\in head(P)$. Since $c\notin head(P)$, the corresponding $c$ is not contained in $A_P$ and $(c,a)$ is an ungrounded attack. Hence, by computing $P_{\downarrow head}$ we remove every body-atom which induces an ungrounded attack after transformation. That is, $F_{P_{\downarrow head}}=(F_P)_{\downarrow A_P}$.
	
	Let us now consider an AF $F$. By Definition~\ref{ex:AF2LP}, any $a\in \mathcal{U}\setminus A$ corresponds to an atom with the same name that (negatively) occurs in $P_F$ but not in $P_{F_{\downarrow A}}$. Moreover, since $a\notin A$, we have that $a\notin head(P_F)$. Therefore, any such $a$ is removed when computing $(P_F)_{\downarrow head}$. Thus, $P_{F_{\downarrow A}}=(P_F)_{\downarrow head}$. 
\end{proof}

\SemCorrAFLP*
\begin{proof}
	From propositions and lemmas above, it can be easily constructed the following chain of equivalences: $AS(P)=AS(P_{\downarrow head})=\stb(F_{P_{\downarrow head}})=\stb((F_P)_{\downarrow A_P})=\stb(F_P)$. In a similar way, it is easily proven that $\stb(F)=\stb(F_{\downarrow A}) = AS(P_{F_{\downarrow A}})= AS((P_F)_{\downarrow head})= AS(P_F)$. 
\end{proof}

\AssocHeadUni*
\begin{proof}
	For any atomic program $P$ and rules $r_1$ and $r_2$ there are two cases: 
	\begin{itemize}
		\item $head(r_1)=head(r_2)$. Assume $head(r_1)\in head(P)$. Then there is a rule $r\in P$ that is removed and refined twice, generating $r': head(r) \gets body(r) \cup body(r_1) \cup body(r_2)$. Since the set-union $\cup$ already satisfy associativity, the property is thus inherited by $\uplus$. Assume $head(r_1)\notin head(P)$. Then we need to prove $(P\cup r_1) \uplus r_2 = (P\cup r_2 )\uplus r_1$. Considering both updates in parallel, $r_1$ (resp. $r_2$) is removed and refined via $r_2$ (resp. $r_1$). Since $head(r_1)=head(r_2)$, it follows immediately that $\mathtt{refine}(r_1,r_2)=\mathtt{refine}(r_2,r_1)$, as desired. 
		\item $head(r_1)\neq head(r_2)$. Again we consider two cases: (1) No rule $r\in P$ has the same head as $r_1$ or $r_2$. In this case, we have a simple set-union, which is already associative. (2) One rule $r\in P$ gets removed and refined via $r_1$ (resp. $r_2$) and $r_2$ (resp. $r_1$) is simply added. Clearly, the order in which this two operations are performed is irrelevant. 
	\end{itemize}
\end{proof}

\RRPreserveHeadUniAtom*
\begin{proof}
	We prove the statement by contradiction. Assume $(P\uplus Q)$ is either not h-unique or not atomic. In the first case, we derive that there are two rules $r$, $r'$ in  $(P\uplus Q)$ such that $head(r)=head(r')$.  As both $P$ and $Q$ are h-unique we have
	$r \in P$ and $r'' \in Q$.
	But then by the definition of update, we know that $r$ and $r'$ are both contained in $P$ or in $Q$. However, this contradicts our hypothesis. In the second case, there is one rule $r\in P\uplus Q$ which is not atomic. Again, by definition of update this can happen only when there is a non-atomic rule in either $P$ or $Q$, in contradiction with our hypothesis.
\end{proof}

\AssocAtomic*
\begin{proof}
	The proof can be inherited from Lemma~\ref{lemma:assoc1} by replacing $head(r)$ and $head(r')$ with $id(r)$ and $id(r')$ respectively.
\end{proof}

\RRPreserveAtom*
\begin{proof}
	The statement follows from the definition of $\squplus$, since both set-union and rule refinement via some rule $r\in Q$ yield a program $P\squplus r$ where no positive atom occur in $body(P\squplus r)$.  
\end{proof}

\RplusImpliesR*
\begin{proof}
	Towards a contradiction assume that $P\equiv^{\Xi}_{r^+}Q$ but there is a $R \in \Xi$ such that $P \uplus R \not\equiv Q \uplus R$. In order to use $R$ with $\squplus$ we assign to each rule of $R$ an identifier, such that rules get the same identifier iff they have the same head atom. The we have that $P \uplus R = P\squplus R$ and $Q \uplus R = Q\squplus R$. We thus get $P \squplus R \not\equiv Q \squplus R$, which is in contradiction with our assumption that $P\equiv^{\Xi}_{r^+}Q$.
\end{proof}

\OrderKernelTransf*
\begin{proof}
	To prove $(F_P)^{SK} = F_{P^K}$ we need to show (i) $(A_P)^{SK}= A_{P^K}$ and (ii) $(R_P)^{SK}= R_{P^K}$. Since $head(P)=head(P^K)$, we know that $A_P=A_{P^K}$ by Definition~\ref{def:LP2AF}. Moreover, by definition of AF stable-kernel, $A=A^{SK}$, from which we derive $(A_P)^{SK}= A_{P^K}$. In order to prove (ii), we need to show that each vulnerability $\naf a$ removed when computing $P^K$ exactly corresponds to an attack $(a,x)\in R_P$ such that $(a,a)\in R_P$ with $a\neq x$. By Definition~\ref{def:LP2AF}, we derive that: (1) each $r'\in loop(P)$ corresponds to a self-attack $(a,a)\in R_P$ with $head(r')=a$; (2) for all $r\neq r'$, each atom $a \in neg(r)$ identifies an outgoing attack from a self-attacking argument $a\in A_P$ towards an argument $x=head(r)$ with $x\neq a$. Thus, for each $r\in P$ and $r'\in loop(P)$, $loopH(P \setminus \{r\})=head(r')$.  
	As a result, when computing the ASP kernel $P^K$, we obtain $R_{P^K}=R_P\setminus \{(a,x)\mid (a,a)\in R_P, a\neq x\}=(R_P)^{SK}$. \medskip
	
	We now turn to show $(P_F)^K = P_{F^{SK}}$. We need to show that for each rule $r\in (P_F)^K$ there is exactly one rule $r' \in P_{F^{SK}}$ such that (i) $head(r) = head(r')$ and (ii) $body(r) = body(r')$.
	We can show (i). There is exactly one rule per argument in $F$, whose head is not modified or removed when computing the kernel. Moreover, no argument in $F$ gets removed when obtaining $F^{SK}$. Hence, for each argument $a\in A=A^{SK}$, there is exactly one rule in $P_{F^{SK}}$ whose head is the same as for $(P_F)^K$.
	Regarding (ii), note that by Definition~\ref{def: AF2LP}, we have $P_F = \{a \gets \naf b_1, \dots, \naf b_k \mid a \in A, (b_i, a)\in R \}$. Moreover, we obtain $(P_F)^K=\{head(r) \gets  body(r) \setminus  loopH(P \setminus \{r\})\}$ by definition of ASP kernel. For each rule $r\in P_F$, the set of atoms $\{loopH(P \setminus \{r\})\cap body(r)\}$ corresponds by Definition~\ref{def: AF2LP} to a set of arguments $\{b_1,\dots,b_m\}$ such that $(b_i,b_i)\in R$ and $(b_i,a)\in R$ with $a=head(r)$. Hence, for each $r\in P_F$, computing $body(r) \setminus loopH(P \setminus \{r\})$ amounts to remove every attack $(b_i,a)\in R$. Thus $(P_F)^K = P_{F\setminus \{(b_i,a)\}}$ with $(b_i,b_i)\in R$ and $(b_i,a)\in R$. Finally, we are able to derive  $(P_F)^K = P_{F^{SK}}$. 
\end{proof}

\DistribUnderTransl*
\begin{proof}
	For the first part, observe that $F_{P \uplus Q}$ is mapped to exactly one program $P \uplus Q=P'$. In order to prove the statement, we only need to show that $P'$  corresponds exactly to the framework $F'=F_P\cup F_Q$. We consider arguments and conflicts of $F'$ separately. For arguments, notice that $head(P')=head(P)\cup head(Q)$ by definition of update. Thus, $A'=A_P\cup A_Q$ holds under translation (Definition~\ref{def:LP2AF}). It only remains to prove that conflict relation is preserved from $F_{P\uplus Q}$ to $F_P\cup F_Q$. For this, we show that every rule $r\in P'$ and pair of atoms $(a,b)$ such that $a\in neg(r)$ and $b=head(r)$, there is exactly one conflict $(a,b)\in R'$. Notice first that for each rule $r\in P$ and pair $(a,b)$ occurring in $r$, there is exactly one conflict $(a,b)\in F_P$. For each rule $r'\in Q$ we have two cases: either $head(r')\in head(P)$ or $head(r')\notin head(P)$. Assume $head(r')\in head(P)$. Then, via $\mathtt{refine}(r,r')$ we add a pair $(a',b)$ in $P\uplus r'$ for every vulnerability $a'\in neg(r')$. By applying the translation to the AF, we obtain conflicts $R_P\cup R_{r'}$ (thus, $F_P\cup F_{r'}$). Assume now $head(r')\notin head(P)$. In this case, we simply take $P\cup r'$, which yields $F_P\cup F_{r'}$ under translation. By associativity of $\cup$, it is possible to construct $F_P\cup F_Q$ for all rules in $Q$, concluding the proof. For two AFs $F$ and $G$, the proof that $P_{F\cup G} = P_F \uplus P_G$ is analogous.  
\end{proof}

\AnswSetKernelEquiv*
\begin{proof}
	In order to show that the statement holds, we prove that for any $S$ answer set of $P$, $P^S=(P^K)^S$.  Given that $P^S$ and $(P^K)^S$ are positive and $P$ is atomic, it follows that $body(P^S)=body((P^K)^S)=\emptyset$. Towards contradiction, assume $P^S\neq(P^K)^S$. This can happen only if there are at least two rules $r\in P$ and $r'\in P^K$ with $head(r)=head(r')$ such that: (1) $r'$ is deleted when computing $(P^K)^S$, but $r$ is not deleted when computing $P^S$; or (2) $r$ is deleted when computing $P^S$, but $r'$ is not deleted when computing $(P^K)^S$. First, assume (1) is the case. Since $body(r')\subseteq body(r)$ by definition of ASP kernel, $r$ gets deleted whenever $r'$ is. Indeed, the negated atom that triggers the deletion of $r$ is contained in the body of $r'$. Contradiction. Assume now (2) is true. We know that $body(r')= body(r)\setminus loopH(P\setminus \{r\})$. Therefore, together with (2) we derive that $loopH(P\setminus \{r\})\cap S\neq \emptyset$. However, from the definition of answer-set, we know that for every $r\in loop(P)$, $head(r)\notin S$. Otherwise, the unique rule $r$ deriving $head(r)$ would be deleted when building the reduct, making $S$ non minimally closed under $P^S$. Again, we reached a contradiction.
	The argument is analogous when we consider any answer set $S'$ of $P^K$. To derive $P^{S'}=(P^K)^{S'}$ it is sufficient to replace each occurrence of $S$ with $S'$. 
\end{proof}

\SameKernelUnderUpdate*
\begin{proof}
	To prove the statement we show that for each pair of rules $r\in (P \uplus R)^K$ and $r'\in (Q \uplus R)^K$, $head(r)=head(r')$ implies $body(r)=body(r')$. First observe that $P^K = Q^K$ implies that for each rule in $P$ there is another rule in $Q$ such that $head(r)=head(r')$. Since updates and the kernel do not change rule-heads, the same holds for each pair of rules $r\in (P \uplus R)^K$ and $r'\in (Q \uplus R)^K$. It remains to prove that for such rules $r$, $r'$ it holds that $body(r)=body(r')$. Let $r\in (P \uplus R)^K$  with $a\in body(r)$ and $b=head(r)$. We show that $a\in body(r')$ for the rule $r'\in (Q \uplus R)^K$ with $head(r')=b$. Since $a\in body(r)$, we can derive that there is no loop-rule $r_l\in loop(P\uplus R)$ with $head(r_l)=a$ (by definition of kernel). Thus, $r_l\notin P^K$ and $r_l\notin R^K$. Moreover, $r_l\notin Q^K$ since $P^K = Q^K$. Hence, since $r\in (P \uplus R)^K$, there is a rule $r^\star$ with $a\in body(r^\star)$ and $b=head(r^\star)$ in $P^K$ or $R^K$. We proceed by cases. Assume $r^\star \in P^K$. Given that $P^K = Q^K$, $r^\star \in Q^K$. Further, since there is no loop-rule $r_l\in loop(R^K)$ with $head(r_l)=a$, we derive that $a\in body(r')$ for the rule $r'\in (Q \uplus R)^K$ with $head(r')=b$. Assume now that $r^\star \in R^K$. Then there is no loop-rule $r_l\in loop(Q^K)$ with $head(r_l)=a$. Finally, it follows that $r_l\notin loop(Q\uplus R)$, and thus $a\in body(r')$ for the rule $r'\in (Q \uplus R)^K$ with $head(r')=b$.
\end{proof}

\StrongEquivHuniqueProg*
\begin{proof}
We initially show the equivalence from 1.\ to 4.\ and then the other direction. 
From $P_K=Q_K$ we derive $F_{P^K}=F_{Q^K}$ via Definition~\ref{def:LP2AF}. Further, via Proposition~\ref{pro:kernal-translation} we obtain $(F_P)^{SK}=(F_Q)^{SK}$. This is equivalent to $F_P\equiv_{s} F_Q$ from the characterization of strong equivalence for AFs. From the definition of strong equivalence, we derive that for any AF $H$, $\stb(F_P\cup H)=\stb(F_Q\cup H)$ and $AS(P_{F_P\cup H}=AS(P_{F_Q\cup H})$ by Proposition~\ref{prop:equiv AF LP}. Now, given Lemma~\ref{lemma:uplus to cup}, we can rewrite the previous equivalence as $AS(P_{F_P}\uplus P_H)=AS(P_{F_Q}\uplus P_H)$. Since the transformation between AFs and h-unique atomic programs is an isomorphism (Remark \ref{rem: iso quasi-AF LP}), it holds that $AS(P\uplus P_H)=AS(Q\uplus P_H)$. Therefore, for any program $H'=P_H$ associated to the AF $H$ we get $AS(P\uplus H')=AS(Q\uplus H')$. Since $H$ is an AF, $P_H$ is guaranteed to be h-unique. Further, since $P$ and $Q$ are also h-unique by hypothesis, we derive $P\uplus H'\in \Xi$ and $Q\uplus H'\in \Xi$, i.e. $P\equiv^{\Xi}_{r} Q$.

We now prove the other direction. By definition, $P\equiv_{r}Q$ entails $AS(P\uplus R)=AS(Q\uplus R)$ for any h-unique atomic program $R$. Thus under translation we get $F_{P\uplus R}$ and $F_{Q\uplus R}$ s.t. $\stb(F_{P\uplus R})=\stb(F_{Q\uplus R})$. Then, via Lemma~\ref{lemma:uplus to cup}, we rewrite the previous equivalence as $\stb(F_P \cup F_R)=\stb(F_Q\cup F_R)$ for any program $R$. Let us call $H$ the unique AF corresponding to each $F_R$, we write $\stb(F_P \cup H)=\stb(F_Q\cup H)$ for any $H$. Thus, $F_P \equiv_s F_Q$, by definition of strong equivalence. From Proposition \ref{pro:AF_SE} the two frameworks have the same stable kernel: $(F_P)^{SK}=(F_Q)^{SK}$. We transform back to programs, obtaining $P_{(F_P)^{SK}}=P_{(F_Q)^{SK}}$ and $(P_{F_P})^K=(P_{F_Q})^K$ via Proposition \ref{pro:kernal-translation}. Finally, Remark \ref{rem: iso quasi-AF LP} brings us to $P^K=Q^K$. 
\end{proof}

\LoopRulesRemovedReduct*
\begin{proof}
	Towards contradiction, assume there is an answer-set $S\in AS(P)$ such that the modified version of $r$, $id(r): head(r) \gets $, occurs in $P^S$. Since $S$ is closed under $P^S$, we derive that (1) $head(r)\in S$. Moreover, since $r \in loop(P)$, we know that (2) $head(r)\in neg(r)$. Together, (1) and (2) imply that $r$ gets removed when computing $P^S$, by definition of reduct. Therefore, for any answer-set $S\in AS(P)$, $r$ is removed when $P^S$ is created. Contradiction. 
\end{proof}

\StrongEquivAtomicProg*

\begin{proof}
	
	($\Leftarrow$) 1.\ and 2.\ together imply that $P$ and $Q$ are identical, except for some loop-rules $\{r_1,\dots, r_n\}\in loop(P)$ and$\{r'_1,\dots, r'_n\}\in loop(Q)$. For $1\leq i\leq n$ and each pair of rules $r_i$, $r'_i$ it holds that $id(r_i)=id(r'_i)$, $body(r_i)=body(r'_i)$, but $head(r_i)\neq head(r'_i)$. To prove $P \equiv^{\Lambda}_{r^+} Q$ we need to show that for any $R$, $P\squplus R \equiv Q\squplus R$, i.e.\ a set of atoms $S$ is an answer-set of $P\squplus R$ iff it is an answer-set of $Q\squplus R$. In fact, if $P\squplus R\notin \Lambda$ and $Q\squplus R\notin \Lambda$, the program $R$ is not atomic and the programs  $P\squplus R$ and  $Q\squplus R$ are outside of the scope of our investigation. 
	Consider, then, the case where $R$ is atomic. For any pair of rules $r_i\in P$ and $r'_i\in Q$, we have two cases for each rule $r\in R$: either (a) $id(r)=id(r_i)$ or (b) $id(r)\neq id(r_i)$. Assume (a) is the case. Thus, $r_i$ and $r'_i$ get refined via $r$ in $P$ and $Q$ respectively, obtaining the rule $r_\star\in P\squplus R$ with $head(r_{\star})=head(r_i)$ and $body(r_{\star})=body(r_i)\cup body(r)$. Similarly, we obtain a rule $r'_{\star}\in Q\squplus R$ with $head(r'_{\star})=head(r'_i)$ and $body(r'_{\star})=body(r_{\star})$. Obviously, $r_{\star}\in loop(P\squplus R)$ and $r'_{\star}\in loop(Q\squplus R)$, since rule refinement only augments the rule-body. Assume $S$ is an answer-set of $P\squplus R$, i.e.\ it is the minimal set closed under $(P\squplus R)^S$. Since $r_{\star}$ is a loop-rule ($head(r_{\star})\in neg(r_{\star})$), we know that the rule $r_{\star}$ with $head(r_{\star})$ gets removed when computing the reduct $(P\squplus r_{\star})^S$. The same happens to $r'_{\star}$ for $(Q\squplus r'_{\star})^S$. Thus, $(P\squplus r_{\star})^S=(Q\squplus r'_{\star})^S$, concluding  $P\squplus r_{\star} \equiv Q\squplus r'_{\star}$. Assume now (b) is the case. Hence, $P\squplus r=P\cup r$ and $Q\squplus r=Q\cup r$. Again, we show that an answer-set of $P\cup r$ is an answer-set of $Q\cup r$. From Lemma \ref{lemma:LoopRulesRemovedReduct}, it follows that $((P\cup r)\setminus loop(P))^S=(P\cup r)^S$ and $((Q\cup r)\setminus loop(Q))^S=(Q\cup r)^S$. Moreover, it holds that $(P\cup r)\setminus loop(P)=(Q\cup r)\setminus loop(Q)$, by hypothesis. Thus we can conclude that $(P\cup r)^S=(Q\cup r)^S$, i.e.\ $P\cup r\equiv Q\cup r$ for any $r$. Since both cases (a) and (b) leads to equivalent updated programs, we can generalize this to any set $R$ of rules by associativity of $\squplus$.
	
	($\Rightarrow$) Suppose now that $P \equiv^{\Lambda}_{r^+} Q$. We derive 1.\ and 2.\  by contradiction. Assume $body(r')\neq body(r)$ for some $r\in P$ and $r'\in Q$ with $id(r)=id(r')$. There are two cases: (i) $body(r')\supset body(r)$ and (ii) $body(r)\supset body(r')$.
	Consider case (i). There is one atom $b_i\in body(r')$ such that $b_i\notin body(r)$. By adding $R=\{i: b_i \gets\}$ to both $P$ and $Q$ (with $i\notin id(P)$) we obtain the programs $P\squplus R$ and $Q\squplus R$ such that $b_i\in S$ for any $S\in AS(P\squplus R)$ and $S\in AS(Q\squplus R)$. Then, for any such $S$, the rule $r$ gets removed when computing $(P\squplus R)^S$. However, $r'$ is not removed for $(Q\squplus R)^S$. Hence, $(P\squplus R)^S\neq (Q\squplus R)^S$ and $P\squplus R \not \equiv Q\squplus R$. Contradiction. The case (ii) can be proven in a similar way by exchanging $r$ and $r'$. We have proven that 1.\ follows from our hypothesis. Now consider 2. Assume $head(r')\neq head(r)$ for some $r\in loop(P)$ and $r'\in loop(Q)$ with $id(r)=id(r')$. From 1.\ we know that $
	body(r)=body(r')$. Hence, for any answer-set $S\in AS(P)$ such that $body(r)\cap S=\emptyset$, we have $head(r)\in S$ (since $id(r): head(r)\gets $ occurs in $P^S$). Similarly, $head(r')\in S'$ for any answer-set $S'\in AS(Q)$ with $body(r')\cap S'=\emptyset$ (since $id(r'): head(r')\gets $ occurs in $Q^S$). Therefore, $P$ and $Q$ are not equivalent. Contradiction. 
\end{proof}

\UpdateAtomDistribUnderTransl*
\begin{proof}
	For the first part, observe that $F_{P \squplus Q}$ is mapped to exactly one program $P \squplus Q=W$. In order to prove the statement, we only need to show that $W$  corresponds exactly to the framework $\mathcal{F}_W=\mathcal{F}_P\cup \mathcal{F}_Q$. We consider arguments, claim-attacks and the claim-functions of $\mathcal{F}_W$ separately. For arguments, notice that $id(W)=id(P)\cup id(Q)$, by definition of $\squplus$-update. Each rule $r\in W$ identified by $id(r)=i$ is mapped to one argument $x_i \in \mathcal{F}_W$, so that $A_{W}=A_P \cup A_Q$. 
	Consider now claim-attacks in $R^{\mathcal{C}}_W$. Each vulnerability $\naf c$ of the rule $r$ with $id(r)=i$ is mapped to the claim-attack $(c,x_i)$ under translation. Since $neg(W)=neg(P)\cup neg(Q)$ (by definition of $\squplus$-update), then such vulnerabilities are mapped to the corresponding set of claim-attacks $R^{\mathcal{C}}_W=R^{\mathcal{C}}_P\cup R^{\mathcal{C}}_Q$. Last, we consider the claim-function $\gamma_W$. We need to prove that $\gamma_W(x_i)=\gamma_P \overrightarrow{\times} \gamma_Q(x_i)$ for all $x_i\in A_W$. Let $Q'\subseteq Q$ be the set of rules of $Q$ such that $id(r)\in id(P)$. Each such rule $r$ refines the rule $r'\in P$ with $id(r')=id(r)$. Therefore, $head(W)=head(P)\cup head(Q\setminus Q')$ 
	by definition of $\squplus$-update.  Moreover, under translation every head atom identifies a claim for the argument $x_i$ corresponding to the rule $r$ with $id(r)=i$. Thereby, the rules in $P$, $Q$ and $Q'$ correspond to the arguments in $A_P$, $A_Q$ and $A_{Q'}=A_P\cap A_Q$, deriving $\gamma_W(A_W)=\gamma_P(A_P)\cup \gamma_Q(A_Q\setminus A_{Q'})=\gamma_P(A_P)\cup \gamma_Q(A_Q\setminus A_P)$.
	For all $x_i\in A_W$, 
	(i) if $x_i\in A_P$, then $\gamma_W(x_i)=\gamma_P(x_i)$ and (ii) if $x_i\in A_Q\setminus A_P$, then $\gamma_W(x_i)=\gamma_Q(x_i)$. By Definition \ref{def:wf_CAF_update}, this is equivalent to $\gamma_W(x_i)=\gamma_P \overrightarrow{\times} \gamma_Q(x_i)$ for all $x_i\in A_W$. For two well-formed CAFs $\mathcal{F}$ and $\mathcal{H}$, the proof that $P_{\mathcal{F}\cup \mathcal{H}} = P_{\mathcal{F}} \uplus P_{\mathcal{H}}$ is analogous. 
\end{proof}

\StrongEquivLPvsCAF*
\begin{proof}
	We prove the two statements from top to bottom and viceversa. Assume $\mathcal{F}_P\equiv_s \mathcal{F}_Q$. By definition, for any well-formed CAF $\mathcal{H}$,  $\stb(\mathcal{F}_P\cup \mathcal{H})=\stb(\mathcal{F}_Q\cup \mathcal{H})$. From Proposition \ref{pro:CAF_LP_Corresp}, we can rewrite the previous as $AS(P_{\mathcal{F}_P\cup \mathcal{H}})=AS(P_{\mathcal{F}_Q\cup \mathcal{H}})$. Thanks to Lemma \ref{lemma:CAF_distrib}, we derive $AS(P_{\mathcal{F}_P}\squplus P_{\mathcal{H}})=AS(P_{\mathcal{F}_Q}\squplus P_{\mathcal{H}})$. We transform this via Proposition \ref{pro:isoCAF} into $AS(P\squplus P_{\mathcal{H}})=AS(Q\squplus P_{\mathcal{H}})$. Thus, for any $H'=P_H$, we get $P\squplus H'=Q\squplus H'$. Since $\mathcal{H}$ is a well-formed CAF, $P_H$ is guaranteed to be atomic. Further, since $P$ and $Q$ are atomic, we derive $P\squplus H'\in \Lambda$ and $Q\squplus H'\in \Lambda$, concluding $P\equiv^{\Lambda}_{r^+}Q$.
	the isomorphism in
    
	Assume now $P\equiv^{\Lambda}_{r^+}Q$. Hence, either (i) $AS(P\squplus R)=AS(Q\squplus R)$ for any program $R$ or (ii) $P\squplus R\notin \Lambda$ and $Q\squplus R\notin \Lambda$. If (ii) is the case, then $R$ is not atomic. If (i) is the case, then we can translate the corresponding programs, deriving $\stb(\mathcal{F}_{P\squplus R})=\stb(\mathcal{F}_{Q\squplus R})$ by Proposition \ref{pro:CAF_LP_Corresp}. Through Lemma \ref{lemma:CAF_distrib}, we then obtain $\stb(\mathcal{F}_P \cup \mathcal{F}_R)=\stb(\mathcal{F}_Q \cup \mathcal{F}_R)$ for any well-formed CAF $\mathcal{F}_R$. We fix the well-formed CAF $\mathcal{H}=\mathcal{F}_R$ and rewrite the previous equivalence as $\stb(\mathcal{F}_P \cup \mathcal{H})=\stb(\mathcal{F}_Q \cup \mathcal{H})$, which  by definition means $\mathcal{F}_P\equiv_s \mathcal{F}_Q$.
\end{proof}

\RRStrongEquivAtomImpliesStandard*
\begin{proof} 
	From the hypothesis, we derive that $P$ and $Q$ differ only with respect to the heads of loop-rules with same identifier (Theorem \ref{the:StrongEquivAtomicProg}).
	Therefore, for any $R$, if we simply add $R$ to $P$ and $Q$ respectively, we will get two programs $P\cup R$ and $Q\cup R$ which themselves differ only w.r.t. heads of loop-rules with same identifier. From Lemma \ref{lemma:LoopRulesRemovedReduct}, these rules are removed when computing the reducts $(P\cup R)^S$ and $(Q\cup R)^S$, allowing us to derive $(P\cup R)^S=(Q\cup R)^S$. 
\end{proof}
\end{document}